\newcommand\BibTeX{{\rmfamily B\kern-.05em \textsc{i\kern-.025em b}\kern-.08em
T\kern-.1667em\lower.7ex\hbox{E}\kern-.125emX}}
\newcommand{\bx}{{\bf x}}
\newcommand{\bn}{{\bf n}}
\newtheorem{theorem}{\bf Theorem}
\newtheorem{remark}{\bf Remark}
\newtheorem{lemma}{\bf Lemma}
\newtheorem{assumption}{\bf Assumption}
\DeclareMathOperator*{\argmin}{argmin}
\renewcommand{\qed}{\hfill\blacksquare}
\begin{document}

\runninghead{Zinage, Pedram, and Tanaka}
\title{Optimal Sampling-based Motion Planning in Gaussian Belief Space for Minimum Sensing Navigation}

\author{Vrushabh Zinage\affilnum{1}, Ali Reza Pedram\affilnum{2}, and Takashi Tanaka\affilnum{1}}
\affiliation{
\affilnum{1}Department of Aerospace Engineering and Engineering Mechanics, University of Texas at Austin.  \\
\affilnum{2}Walker Department of Mechanical Engineering, University of Texas at Austin.}

\corrauth{Takashi Tanaka, Department of Aerospace Engineering and Engineering Mechanics, University of Texas at Austin, Austin, TX, 78712, USA.}

\email{ttanaka@utexas.edu}

\begin{abstract}
In this paper, we consider the motion planning problem in Gaussian belief space for minimum sensing navigation. Despite the extensive use of sampling-based algorithms and their rigorous analysis in the deterministic setting, there has been little formal analysis of the quality of their solutions returned by sampling algorithms in Gaussian belief space. This paper aims to address this lack of research by examining the asymptotic behavior of the cost of solutions obtained from Gaussian belief space based sampling algorithms as the number of samples increases. To that end, we propose a sampling based motion planning algorithm termed Information Geometric PRM* (IG-PRM*) for generating feasible paths that minimize a weighted sum of the Euclidean and an information-theoretic cost and show that the cost of the solution that is returned is guaranteed to approach the global optimum in the limit of large number of samples. Finally, we consider an obstacle-free scenario and compute the optimal solution using the "move and sense" strategy in literature. We then verify that the cost returned by our proposed algorithm converges to this optimal solution as the number of samples increases.
\end{abstract}

\keywords{Motion planning, sampling based algorithms, optimal path planning, Belief space planning, Information theory}

\maketitle

\section{Introduction}
Over the past few decades, motion planning has been an active area of research in the robotics community. Motion planning can be broadly classified into three main categories, mainly grid based \cite{ding2019efficient_hkust_search,MacAllister}, optimization based \cite{mercy2017spline_tcst,Kushleyev,Deits} and sampling based algorithms \cite{karaman2011sampling}. Sampling-based planning algorithms sample points in the configuration space and check whether a connection is possible between the sampled points using collision-checking modules which are treated as a black box and are usually computationally expensive. They have shown success in handling high dimensional motion planning problems for instance manipulators \cite{khan2020control_rrt_manipulator}, warehouse robots, aerial robotics \cite{lee2016planning_rrt_aerial} etc. In the field of robotic motion planning, it is common to first generate a reference path (usually using sampling based algorithms \cite{karaman2011sampling,hart1968formal_astar,kavraki1996probabilistic_prm,lavalle2001rapidly_rrt,janson2015fast_fmt,zinage2020generalized}) and then separately design a feedback control system for following that path. While this two-step approach is not optimal in general, it simplifies the overall problem and is acceptable in many cases. Additionally, this two-step method can take advantage of powerful geometry-based trajectory generation algorithms (such as \cite{hkust,upenn,mellinger,ding2019safe_hkust_optimization,zinage20233d}), which allows for addressing other considerations such as dynamic constraints, randomness, and uncertainty during the control design phase. Despite these benefits, it is important to acknowledge the inherent challenges posed by measurement noise and uncertainty in robotics.

In light of these fundamental and inevitable issues, a different perspective may be more advantageous when planning under uncertainty. Instead of using deterministic boolean indicators to check for collisions in feasible paths, as is done in planning without uncertainty, it is preferable to represent the state of the agent as a probability distribution over the states in the configuration space. 
These states are known as the belief state or the information state. Planning under uncertainty can be formulated as a Partially Observable Markov Decision Process (POMDP) \cite{astrom1965optimal_pomdp_1,smallwood1973optimal_pomdp_2,kaelbling1998planning_pomdp_3}.
In general, for real-world problems, it becomes computationally intractable to compute the solution of these POMDPs despite the recent progress to solve these POMDPs. Planning in belief spaces that are infinite-dimensional becomes more tractable from a generated roadmap using sampling based motion planning algorithms. Therefore, sampling-based algorithms are considered more advantageous for these infinite-dimensional motion planning problems.

The main motivation for this paper arises from the need for simultaneous perception and planning in autonomous systems to address the problem of finding the shortest path while minimizing sensory resource consumption. Advanced and affordable sensing devices have made it easier to gather sensor data, but this may not always be efficient for resource-limited robots due to the significant power and computational resources required. As the number of sensor modalities increases, it is crucial to minimize the use of sensory resources while ensuring that the algorithm converges to an optimal solution rather than a suboptimal one, as the latter can lead to higher costs.
In this paper, we propose the Information Geometric PRM* (IG-PRM*), a sampling-based planning algorithm for minimum sensing navigation that computes a reference path in Gaussian belief space, minimizing the weighted sum of Euclidean and perception costs, with the latter being a quasi-pseudo metric. Our main contribution is the proof of asymptotic optimality for IG-PRM*. The asymptotic optimality of a related algorithm termed IG-RRT* was already conjectured in \cite{pedram2021gaussian}. Asymptotic optimality is essential in motion planning, as it guarantees that the algorithm will eventually find a solution as close as desired to the optimal one, without getting stuck in a suboptimal solution. Following \cite{pedram2021rationally,pedram2021gaussian}, our goal is to develop an asymptotic optimal path planning methodology that enables a robot to navigate using the minimum amount of sensing resources while adapting to any constraints on its sensory resources.

\section{Related work}

\subsection{Belief space motion planning}

In the continuous state, control, and observation space, the complexity of the POMDP framework makes it difficult to use. 
Towards the aim of addressing this issue, the area of belief space path planning computes feasible paths for uncertain systems that take into account the uncertainty in the system's dynamics and environment. \cite{censi2008bayesian_censi} proposed a planning algorithm that uses graph search and constraint propagation on a grid-based representation of the space. \cite{platt2010belief_platt_2010} used nonlinear optimization methods to find the best nominal path in continuous space. The linear quadratic Gaussian motion planning (LQGMP) method \cite{van2011lqg_van_2010} computes the best possible feasible path among a finite number of paths generated by RRT by simulating the performance of an LQG controller on all of them. \cite{bry2011rapidly_bry_and_roy} utilized a tree-based approach to optimize the underlying nominal trajectory using RRT*. \cite{vitus2011closed_related_to_cc_3} also addressed the optimization of the underlying trajectory by formulating it as a chance-constrained optimal control problem. 
\cite{prentice2009belief_brm}  also used roadmap-based methods based on the Probabilistic Roadmap (PRM) approach, where the optimal path is found through a breadth-first search on the belief roadmap (BRM). However, in all of these roadmap-based methods, the optimal substructure assumption is violated, meaning that the costs of different edges on the graph depend on one another. In \cite{kurniawati2012global}, the point-based POMDP (Partially Observable Markov Decision Process) planner takes into account uncertainties in motion, observation, and mapping, and improves upon previous point-based methods through the use of guided cluster sampling. This method starts with a roadmap in the configuration space and grows a single-query tree in the belief space, rooted in the initial belief. \cite{roy1999coastal} examines the use of a coastal navigation strategy to assist the agent's perception during navigation. 



\subsection{Chance constrained motion planning}
Chance constrained motion planning is a class of planning algorithms that take into consideration probabilistic safety constraints. This approach is closely related to the field of belief space path planning, which involves finding paths that meet specific safety requirements under uncertainty \cite{blackmore2006probabilistic_related_to_cc_1,blackmore2011chance_related_to_cc_2,vitus2011closed_related_to_cc_3}. There have been efforts to extend basic chance-constrained methods, which are primarily designed for linear-Gaussian systems, to handle more complex, non-linear, and non-Gaussian problems, as well as to address joint chance constraints \cite{blackmore2010probabilistic_cc_extend_1,wang2020non_cc_extend_2,ono2015chance_cc_extend_3}. However, these formulations can be computationally expensive and may not scale well. To address this issue, several methods were proposed to improve scalability, while others have developed sampling-based approaches like CC-RRT (chance-constrained rapidly-exploring random tree) that allow for efficient computation of feasible paths. The CC-RRT algorithm has been generalized for use in dynamic environments, and several variants have been introduced that guarantee convergence to the optimal trajectory. 
Tree-based planners with chance constraints have been shown to be efficient and have been used in unknown environments through iterative planning in several studies \cite{luders2010chance_unknown_1,pairet2021online_unknown_2,plaku2010motion_unknown_3}. Some studies have also extended these frameworks to systems with measurement uncertainty, using maximum-likelihood observations to approximate solutions \cite{platt2010belief_platt_2010}, although this approach does not provide safety guarantees. 
\subsection{Information theoretic path planning}
In an objective function of belief space path planning, an information-theoretic cost is a measure of uncertainty in the system, which is usually quantified using mutual information. 
For Gaussian distributions, calculation of the cost usually involves computing the determinant of a posteriori covariance matrix, which has a complexity of $O(n^3)$ in general cases, where $n$ is the state dimension. This computation needs to be performed for each potential action. In \cite{indelman2015planning_inf1}, a method was proposed to address this challenge by using information form and exploiting sparsity, but this still requires expensive access to marginal probability distributions. The rAMDL approach  \cite{kopitkov2017no_inf2} performs a one-time calculation of the marginal covariances of the variables involved in the candidate actions, and then uses an augmented matrix determinant lemma (AMDL) to efficiently evaluate the information-theoretic cost for each action. However, this method still requires the recovery of appropriate marginal covariances, the complexity of which depends on the state dimensionality and the sparsity of the system. \cite{levine2013information_inf3} focused on finding ways for sensing agents to both gather as much information as possible about their target or environment, as measured by the Fisher information matrix, and minimize the cost of reaching their goal. In \cite{folsom2021scalable_inf4}, a Mars helicopter used an RRT*-IT algorithm to explore the surface of Mars and reduce uncertainty about the terrain type in the shortest amount of time. 

\subsection{Technical contributions}
The technical contributions of this paper are as follows.
\begin{enumerate}
    \item We consider the shortest path problem for minimum sensing navigation in Gaussian belief space with respect to a cost function that represents the weighted sum of the Euclidean and the information-theoretic sensing cost and propose a sampling-based motion planning algorithm termed Information Geometric PRM* (IG-PRM*) algorithm. 
    \item We prove that the IG-PRM* algorithm is asymptotically optimal. One of the main challenges in establishing asymptotic optimality in Gaussian belief space as compared to deterministic space is the characterization of the volume of covariances and the computation of the associated probability of sampling these covariance matrices. To address this challenge, our second contribution is based on the following two novel results. First, we provide an analytical expression for computing the volume of covariance matrices in the space of symmetric matrices equipped with Rao-Fisher metric and derive a lower bound for this volume in terms of the Selberg integral. Second, we provide a lower bound for the probability of sampling these covariance matrices. These two results aid in establishing the asymptotic optimality of IG-PRM*.
    To the best knowledge of the authors, this is the first paper that addresses the problem of asymptotic optimality in Gaussian belief space.
    \item Through numerical simulations, we verify our claim that the cost returned by IG-PRM* in the absence of any obstacle converges to the optimal cost in the limit of large number of samples. The optimal cost can be analytically computed using the \say{move and sense} strategy in the literature \cite{pedram2021gaussian}.
\end{enumerate}
\subsection{Outline of the paper}
The paper is organized as follows. Section~\ref{sec:notation} discusses the nomenclature followed by preliminaries in Section \ref{sec:prelim}. Section \ref{sec:problem_formulation} discusses the problem statement we address in the paper followed by the proposed algorithm and main result in Sections~\ref{sec:algorithm} and \ref{sec:optimality} respectively. In Section \ref{sec:lossless_ri_prm_star}, we extend the proposed algorithm. Finally, Section \ref{sec:experiments} discusses the numerical simulations and verify the claims made in the paper followed by some concluding remarks in Section~\ref{sec:conclusion}.

\subsection{Notation and convention} \label{sec:notation}
Matrices and vectors are represented by uppercase and lowercase letter respectively. The following notation will be used. $\mathbb{S}^d=\left\{P\in\mathbb{R}^{d \times d}: P \text{ is symmetric.} \right\}$, $\mathbb{S}_{+}^d=\left\{P\in\mathbb{R}^{d \times d}: P\succ 0 \right\}$ and $\mathbb{S}_\rho^d=\left\{P\in\mathbb{S}^d: P\succeq \rho I,\;\rho>0 \right\}$. $\mathcal{E}_{\chi^2}(x_0, P_0)=\{x\in \mathbb{R}^d: (x-x_0)^\top P_0^{-1}(x-x_0)< \chi^2 \}$ is the confidence ellipse. Throughout this paper, we assume that the value of $\chi^2$ is fixed. $\mathcal{B}(x_0,r_0)=\{x\in\mathbb{R}^d:(x-x_0)^\mathrm{T}(x-x_0)\leq r_0\}$ is the ball with center at $x_0$ and radius $r_0$. $\lambda_i(X)$ for all $i\in\{1,2,\dots, d\}$ denote the eigenvalues of a positive definite matrix $X$ and without loss of generality, we assume that $\lambda_1\leq \lambda_2\dots\leq\lambda_d$. For integers $a$ and $b(\geq a)$, $[a;b]$ denotes the set $\{a,a+1,\dots,b\}$. $\mathcal{N}(x,P)$ represents a Gaussian random variable with mean $x$ and covariance $P$. $\Bar{\sigma}(M)$ denotes the maximum singular value of $M$. The Euclidean and the Frobenius norm are represented by $\|.\|$ and $\|.\|_F$, respectively. 
\section{Preliminaries\label{sec:prelim}}
This section provides a brief overview of our setup, which closely mirrors that of \cite{pedram2021gaussian}. We include this information for the sake of completeness.

\subsection{Dynamic model}

Consider a sequence of way points $\{x_k\}_{k\in[1;K]}$ in the configuration space $\mathbb{R}^d$.
Let $t_k$ be the time that the agent is scheduled to visit the $k^\text{th}$ way point $x_k$.
The agent is assumed to apply a constant velocity input
\[
v(t) = v_k := \frac{x_{k+1}-x_k}{t_{k+1}-t_k}
\]
for $t_k \leq t < t_{k+1}, k\in[1;K-1]$.
We assume that the agent motion is subject to stochastic disturbance.
Let $\bx(t_k)$ be the random vector representing the robot's actual position at time $t_k$. It is assumed to satisfy
\begin{align}
 \label{eq:dynamics}
\bx(t_{k+1})=\bx(t_k)+(t_{k+1}-t_k)v_k+\bn_k
\end{align}
where $\bn_k \sim\mathcal{N}(0, \|x_{k+1}-x_k\|W)$.
Note that the constant velocity input and dynamics \eqref{eq:dynamics} are assumed exclusively for the purpose of establishing a distance concept within a Gaussian belief space as presented in subsequent sections. Despite potential significant differences between actual robot dynamics and \eqref{eq:dynamics}, the algorithm we propose in Section \ref{sec:problem_formulation} remains applicable. This approach is analogous to the widespread use of RRT* (or other sampling based motion planning algorithms) for Euclidean distance minimization in scenarios where Euclidean distance in configuration space fails to accurately represent motion costs. In line with this reasoning, we deliberately adopt a simplistic model \eqref{eq:dynamics} and reserve addressing more accurate dynamic constraints for the path-following control phase.
\subsection{Gaussian belief space}

Let the probability distributions of the robot position at time step $k$ can be characterized by a Gaussian model as $\bx_k\sim\mathcal{N}(x_k, P_k)$, where $x_k\in\mathbb{R}^d$ is the mean position and $P_k\in \mathbb{S}_{+}^d$ is the covariance matrix. 
We first introduce an appropriate directed distance function from a point $(x_k, P_k)\in \mathbb{R}^d \times \mathbb{S}_{+}^d$  to another $(x_{k+1}, P_{k+1})\in \mathbb{R}^d \times \mathbb{S}_{+}^d$. 
The distance function is interpreted as the cost of steering the state random variable $\bx_k\sim\mathcal{N}(x_k, P_k)$ at time $k$ to $\bx_{k+1}\sim\mathcal{N}(x_{k+1}, P_{k+1})$ in the next time step under the dynamics \eqref{eq:dynamics}. 
We assume that the distance function is a weighted sum of the travel cost $\mathcal{D}_{\text{travel}}(k)$ and the information cost $\mathcal{D}_{\text{info}}(k)$.

\subsubsection{Travel cost:}
We assume that the travel cost is simply the commanded travel distance:
\[
\mathcal{D}_{\text{travel}}(k):=\|x_{k+1}-x_k\|
\]

\subsubsection{Information cost:}
We define the information-theoretic cost function $\mathcal{D}_{\text{info}}(k)$ at time step $k$
as follows:
\begin{subequations}
\label{eq:d_info_general0}
\begin{align}
\mathcal{D}_{\text{info}}(k)=\!\!\min_{Q_{k+1}\succeq 0}\!\! & \quad \frac{1}{2}\log\det \hat{P}_{k+1}-\frac{1}{2}\log\det Q_{k+1} \label{eq:d_info_general}\\
\text{s.t.} &\quad Q_{k+1} \preceq P_{k+1}, \;\; Q_{k+1} \preceq \hat{P}_{k+1}.\label{eq:d_info_general1}
\end{align}
\end{subequations}
Notice that for any given pair of the origin $(x_k, P_k)$ and the destination $(x_{k+1}, P_{k+1})$, \eqref{eq:d_info_general} takes a nonnegative value. For more details and motivation for using this cost function, we suggest the readers refer to \cite{pedram2021gaussian}.



\subsubsection{Total cost:}
The total cost to steer the state random variable $\bx_k\sim\mathcal{N}(x_k, P_k)$ to $\bx_{k+1}\sim\mathcal{N}(x_{k+1}, P_{k+1})$ is a weighted sum of $\mathcal{D}_{\text{travel}}(k)$ and $\mathcal{D}_{\text{info}}(k)$ as
\begin{align}
\label{eq:def_D}
&\mathcal{D}(x_k, x_{k+1}, P_k, P_{k+1}):= \; \mathcal{D}_{\text{travel}}(k)+\alpha \mathcal{D}_{\text{info}}(k),
\end{align}
where $\alpha\geq 0$ is the weight factor.
\subsection{Chains and Paths}
Suppose that a sequence $\{(x_k, P_k)\}_{k\in[1;K-1]}$ is given. In what follows, the sequence of transitions from $(x_k, P_k)$ to $(x_{k+1}, P_{k+1})$, ${k\in[1;K-1]}$ will be referred to as a \emph{chain}. 
Notice that the parameter \say{$t$} does not necessarily correspond to the physical time. The time of arrival of the agent at the end point depends on the length of the path and the travel speed of the robot. 

\subsubsection{Lossless Chains and Paths:}
A transition from $(x_k, P_k)$ to $(x_{k+1}, P_{k+1})$ is said to be \emph{lossless} if
\begin{equation}
P_{k+1} \preceq \hat{P}_{k+1}(:=P_k+\|x_{k+1}-x_k\|W)
\end{equation}
If every transition in the sequence $\{(x_k, P_k)\}_{k\in[1;K]}$ is lossless, we say that the sequence is lossless.
Let $\gamma: [0,T]\rightarrow \mathbb{R}^d\times \mathbb{S}_{+}^d$, $\gamma(t)=(x(t), P(t))$ be a path. In what follows, the variable $t$ is called the time parameter.
The {travel length} of the path $\gamma$ from time $t=t_a$ to time $t=t_b$ is defined as
\[
\ell(\gamma_x[t_a, t_b])=\sup_{\mathcal{P}} \sum_{k=1}^K \|x(t_k)-x_{k+1}\|
\]
where the supremum is over the space of all partitions $\mathcal{P}=(t_a=t_0<t_1<\cdots < t_K=t_b)$. We say that a path $\gamma$ is \emph{lossless} if the condition
\begin{align}
P(t_b) \preceq P(t_a)+\ell(\gamma_x[t_a, t_b])W
\label{eqn:P_growth}
\end{align}
for any $0\leq t_a < t_b \leq T$. Notice that the right hand side of \eqref{eqn:P_growth} is the covariance of the position of the robot which had the initial configuration $(x(t_a),P(t_a)$ and traveled along the path $\gamma_x[t_a, t_b]$ under the dynamics \eqref{eq:dynamics} without sensing. Therefore, the inequality \eqref{eqn:P_growth} means that the growth of uncertainty from $P(t_a)$ to $P(t_b)$ is never greater that the "natural growth" $\ell W$, which is always true in realistic navigation scenarios. A path $\gamma$ is said to be \emph{finitely lossless} if there exists a finite $N$ and a partition $\mathcal{P}=(0=t_0<t_1<\cdots < t_K=T)$ such that for each $k\in[1;K]$, the transition from $(x(t_k), P(t_k))$ to $(x'(t_k), P'(t_k))$ is lossless

\begin{remark}
\normalfont If a path $\gamma$ is finitely lossless with respect to a partition $\mathcal{P}$, then it is also finitely lossless with respect to a partition $\mathcal{P}'$, provided $\mathcal{P}' \supseteq \mathcal{P}$ (i.e., $\mathcal{P}'$ is a refinement of $\mathcal{P}$). Based on this observation, it can be shown that if a path is finitely lossless then it is lossless. However, the converse is not always true.
\end{remark}

\subsubsection{Collision-free Chains and Paths:}
Let $\mathcal{X}_{\text{obs}}\subset \mathbb{R}^d$ be a closed subset representing obstacles. 
Consider a transition from $x_k$ to $x_{k+1}$. The robot's mean position during this transition is parameterized as
\[
x(\lambda)=(1-\lambda)x_k+\lambda x_{k+1}\;\;\forall\;\;\lambda\in[0,1].
\]
Assuming that the initial covariance is $P_k$, the evolution of the covariance matrix is written as
\[
P(\lambda)=P_k+\lambda\|x_{k+1}-x_k\|W\;\;\forall\;\;\lambda\in[0,1].
\]
For a fixed confidence level parameter $\chi^2>0$, we say that the transition from $(x(0), P(0))$ to $(x(1), P(1))$ is \emph{collision-free} if
\begin{align} \nonumber
&(x(\lambda)-x_{\text{obs}})^\top P(\lambda)^{-1}(x(\lambda)-x_{\text{obs}}) \geq \chi^2,\\
&\quad \quad \quad \forall\; \lambda\in [0,1], \quad \forall x_{\text{obs}}\in \mathcal{X}_{\text{obs}}.\nonumber
\end{align}
\begin{remark}
\normalfont A collision is detected when 
\begin{equation}
(x(\lambda)-x_{\text{obs}})^\top P(\lambda)^{-1}(x(\lambda)-x_{\text{obs}}) < \chi^2\quad\nonumber
\end{equation}
for all $ \lambda\in [0,1]$ and $x_{\text{obs}}\in \mathcal{X}_{\text{obs}}$. The process of detecting collisions can be thought of as a problem of determining whether a particular set of conditions is feasible which is a convex program for each convex obstacle $\mathcal{X}_{\text{obs}}$ \cite{pedram2021gaussian}.
\begin{align} 
\nonumber
&\begin{bmatrix}
\chi^2 &  (1\!-\!\lambda)x_k^\top\!+\!\lambda x_{k+1}^\top\!-\!x_{\text{obs}}^\top \\
(1\!-\!\lambda)x_k\!+\!\lambda x_{k+1}\!-\!x_{\text{obs}} & P_k+\lambda\|x_{k+1}-x_k\|W
\end{bmatrix}\succ 0,\\ \label{eq:col_free}
& \quad \quad \quad \quad \quad   0\leq \lambda \leq 1,  \quad x_{\text{obs}}\in \mathcal{X}_{\text{obs}} 
\end{align}
\label{remark:collision_free}
\end{remark}
We say that a chain $\{(x_k, P_k)\}_{k\in[1;K]}$ is {collision-free} if for each $k\in[1;K-1]$, the transition from $x_k$ to $x_{k+1}$ with the initial covariance $P_k$ is collision-free. We say that a path $\gamma: [0,1]\rightarrow \mathbb{R}^d\times \mathbb{S}_{+}^d$, $\gamma(t)=(x(t), P(t))$ is {collision-free} if
\begin{align} \nonumber
&(x(t)-x_{\text{obs}})^\top P^{-1}(t)(x(t)-x_{\text{obs}}) \geq \chi^2,\\
& \quad \quad \quad \forall\; t\in [0, 1], \quad \forall\; x_{\text{obs}}\in \mathcal{X}_{\text{obs}}.
\end{align}

\section{Problem Formulation}
\label{sec:problem_formulation}
\subsection{Path length}
Let $\gamma: [0,1]\rightarrow \mathbb{R}^d\times \mathbb{S}_{+}^d$, $\gamma(t)=(x(t), P(t))$ be a path, and $\mathcal{P}=(0=t_0<t_1<\cdots < t_{K_n}=1)$ be a partition.
The length of the path $\gamma$ with respect to the partition $\mathcal{P}$ is defined as
\begin{equation}
c(\gamma;\mathcal{P})=\sum_{k=1}^{K_n-1} \mathcal{D}(x(t_k), x(t_{k+1}), P(t_k), P(t_{k+1}))
\end{equation}
The length of a path $\gamma$ is defined as the supremum of $c(\gamma;\mathcal{P})$ over all partitions
\begin{equation}
\label{eq:def_path_length}
c(\gamma):=\sup_\mathcal{P} c(\gamma;\mathcal{P}).
\end{equation}
The definition \eqref{eq:def_path_length} states that for any path with a finite length, there exists a sequence of partitions
$\{\mathcal{P}_i\}_{i\in\mathbb{N}}$ such that
$\underset{n\rightarrow\infty}{\lim}\; c(\gamma; \mathcal{P}_i) =c(\gamma)$.

\subsection{Topology on the path space}
The space of generalized paths is a vector space on which addition and scalar multiplication are defined as $\left(\gamma_1+\gamma_2\right)(t)=\left(x_1(t)+x_2(t), P_1(t)+P_2(t)\right)$ and $a \gamma(t)=(a x(t), a P(t))$ for $a \in \mathbb{R}$, respectively. Let $\mathcal{P}=\left(0=t_0<t_1<\cdots<t_K=T\right)$ be a partition. The total variation of a generalized path $\gamma$ with respect to $\mathcal{P}$ is defined as $|\gamma|_{\mathrm{TV}}=\underset{{\mathcal{P}}}{\sup}\; \|x(0)\| \bar{\sigma}(W)+\bar{\sigma}(P(0))+\sum_{k=0}^{K-1}\left[\| x\left(t_{k+1}\right)-\right.$ $\left.x\left(t_k\right) \| \bar{\sigma}(W)+\bar{\sigma}\left(P\left(t_{k+1}\right)-P\left(t_k\right)\right)\right]$. 
\subsection{Problem statement}
Given an initial belief state $b_0=\left(x_0, P_0\right) \in \mathbb{B}$ be a given initial belief state, a closed subset $\mathcal{B}_{\text {target }} \subset$ $\mathbb{B}$ representing the desired target belief region, and $X_{\text {obs }}^m \subset \mathbb{R}^d$ be the given obstacle $m \in\{1, \ldots, M\}$ where $M\in\mathbb{N}$. Given a confidence level parameter $\chi^2>0$, the problem is to find the shortest path, and can be formulated  as
\begin{align}
\begin{array}{cl}
&\underset{{\gamma \in \mathcal{B} \mathcal{V}[0, T]}}{\min}\;\; c(\gamma) \\
&\text { s.t. }  \gamma(0)=b_0,\;\;\; \gamma(T) \in \mathcal{B}_{\text {target }} \\
& \left(x(t)-x_{\text {obs }}\right)^{\top} P^{-1}(t)\left(x(t)-x_{\text {obs }}\right) \geq \chi^2 \\
& \forall t \in[0, T], \quad \forall x_{\text {obs }} \in X_{\text {obs }}^m, \quad \forall\; m \in\{1, \ldots, M\} .
\end{array}
\label{eqn:problem_statement}
\end{align}
In addition, the proposed algorithm must guarantee that the generated feasible chain converges to the global optimal cost $c^\star$ as the number of samples tends to infinity (Section \ref{sec:asy_opt} ).
\begin{assumption}
\normalfont    We assume there exists a feasible path $\gamma(t)=(x(t),P(t))$ for {the formulated problem} \eqref{eqn:problem_statement} such that $P(t) \in \mathbb{S}^d_{4\rho}$ and $\textup{Tr}(P(t))\leq R$\; for all $t\in[0,1]$ and $R>0$. 
\end{assumption}
\subsection{Continuity of path cost}

\begin{theorem}
\label{theo:cont}
\normalfont Let $\gamma: [0,1]\rightarrow \mathbb{R}^d\times \mathbb{S}_\rho^d$ and $\gamma': [0,1]\rightarrow \mathbb{R}^d\times \mathbb{S}_\rho^d$ be paths. Suppose  $\gamma \in \mathcal{BV}[0, 1]$ and $\gamma' \in \mathcal{BV}[0, 1]$ and they are both finitely lossless. Then, for each $\epsilon > 0$, there exists $\delta > 0$ such that
\[
|\gamma'-\gamma|_{\text{TV}}\leq \delta \quad \Rightarrow \quad  |c(\gamma')-c(\gamma)| \leq \epsilon.
\]
\end{theorem}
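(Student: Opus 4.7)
The plan is to exploit an explicit closed form of $\mathcal{D}$ on lossless transitions to rewrite $c(\gamma)$ and $c(\gamma')$ as Lipschitz functionals in the $|\cdot|_{\mathrm{TV}}$ seminorm. First, for any transition in which $P_{k+1}\preceq\hat P_{k+1}=P_k+\|x_{k+1}-x_k\|W$ the optimum in the definition \eqref{eq:d_info_general0} of $\mathcal{D}_{\text{info}}$ is attained at $Q^\star_{k+1}=P_{k+1}$, so
\[
\mathcal{D}_{\text{info}}(k)=\frac{1}{2}\log\det\bigl(I+\|x_{k+1}-x_k\|P_k^{-1}W\bigr)+\frac{1}{2}\bigl[\log\det P_k-\log\det P_{k+1}\bigr].
\]
Summing $\mathcal{D}$ over any partition $\mathcal{P}$ on which $\gamma$ is lossless and telescoping the $\log\det P_k$ differences gives
\[
c(\gamma;\mathcal{P})=\sum_k\|\Delta x_k\|+\frac{\alpha}{2}\bigl[\log\det P(0)-\log\det P(1)\bigr]+\frac{\alpha}{2}\sum_k\log\det\bigl(I+\|\Delta x_k\|P(t_k)^{-1}W\bigr),
\]
where $\Delta x_k:=x(t_{k+1})-x(t_k)$.

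Next, I form the union $\mathcal{P}_*$ of witness partitions of $\gamma$ and $\gamma'$, which still witnesses finite losslessness of both by the Remark. Using $\log\det(I+X)=\operatorname{tr}(X)+O(\|X\|_F^2)$ together with the uniform lower bound $P,P'\succeq\rho I$, further refining $\mathcal{P}_*$ turns the third term above into an approximating Riemann sum of $\int\operatorname{tr}(P^{-1}W)\,d\ell$, while the first term grows monotonically towards the Euclidean total variation of $\gamma_x$; this yields a refinement $\mathcal{P}\supseteq\mathcal{P}_*$ with $c(\gamma)-c(\gamma;\mathcal{P})<\epsilon/4$ and $c(\gamma')-c(\gamma';\mathcal{P})<\epsilon/4$. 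On this fixed $\mathcal{P}$, I bound $|c(\gamma;\mathcal{P})-c(\gamma';\mathcal{P})|$ piece-by-piece: the travel part by $\sum_k|\|\Delta x_k\|-\|\Delta x'_k\||\leq\bar\sigma(W)^{-1}|\gamma'-\gamma|_{\mathrm{TV}}$ directly from the total-variation definition; the endpoint $\log\det$ differences by the Lipschitz constant $\rho^{-1}$ of $\log\det$ on $\mathbb{S}_\rho^d$ times $\|P(j)-P'(j)\|\leq|\gamma'-\gamma|_{\mathrm{TV}}$ for $j\in\{0,1\}$; and the information sum by joint Lipschitzness of $(s,P)\mapsto\log\det(I+sP^{-1}W)$ on the compact set $[0,R]\times\{P\in\mathbb{S}_\rho^d:\operatorname{tr}(P)\leq R\}$, giving a bound of the form $CK|\gamma'-\gamma|_{\mathrm{TV}}$ with $K$ the fixed number of segments in $\mathcal{P}$. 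Choosing $\delta$ with $CK\delta\leq\epsilon/2$ and invoking the triangle inequality $|c(\gamma)-c(\gamma')|\leq|c(\gamma)-c(\gamma;\mathcal{P})|+|c(\gamma;\mathcal{P})-c(\gamma';\mathcal{P})|+|c(\gamma';\mathcal{P})-c(\gamma')|$ yields $|c(\gamma)-c(\gamma')|\leq\epsilon$.

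The hard part is producing the single refinement $\mathcal{P}$ in the previous paragraph that simultaneously approximates both $c(\gamma)$ and $c(\gamma')$ to within $\epsilon/4$. Since $c(\cdot;\mathcal{P})$ is not in general monotone under refinement (the information-cost contribution can move in either direction when a mean waypoint falls off the straight segment), this step relies crucially on the finite-lossless structure together with the uniform eigenvalue bound $\rho$ in $\mathbb{S}_\rho^d$, which together permit a Riemann-sum/integral representation of $c$ on lossless paths. Once that representation is in hand, the remaining continuity argument reduces to a standard uniform-continuity bound for Lipschitz functionals on a bounded domain.
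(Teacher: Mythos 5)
Your opening steps are sound: on a lossless transition the minimizer of \eqref{eq:d_info_general0} is indeed $Q^\star=P_{k+1}$ (consistent with Lemma~\ref{lemma:analytical}, where $S^\star=I$ when $P_{k+1}\preceq\hat{P}_{k+1}$), the telescoping identity for $c(\gamma;\mathcal{P})$ on a partition all of whose transitions are lossless is correct, and the Lipschitz estimates on a \emph{fixed} partition (travel part bounded via the TV seminorm scaled by $\bar\sigma(W)^{-1}$, $\log\det$ Lipschitz on $\mathbb{S}^d_\rho$, per-segment bound of the form $CK|\gamma'-\gamma|_{\mathrm{TV}}$) are routine. Note also that the paper itself gives no in-text argument for Theorem~\ref{theo:cont}; it defers entirely to Appendix~D of \cite{pedram2021gaussian}, so your proposal has to stand on its own.

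It does not yet: the step you yourself label ``the hard part'' is a genuine gap, and it is exactly where the content of the theorem lies. The cost $c(\gamma)$ is defined in \eqref{eq:def_path_length} as a supremum over \emph{all} partitions, including partitions whose transitions are not lossless --- losslessness of the path only controls $P$-growth against arc length $\ell(\gamma_x[t_k,t_{k+1}])$, whereas $\hat{P}_{k+1}$ uses the chord length $\|x(t_{k+1})-x(t_k)\|$, so a transition of an arbitrary partition can fail $P(t_{k+1})\preceq\hat P_{k+1}$; on such transitions $\mathcal{D}_{\text{info}}$ is the eigenvalue-truncated value of Lemma~\ref{lemma:analytical}, which is strictly larger than the telescoped expression you sum. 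Consequently, near-optimal partitions for the supremum need not be refinements of your common witness partition $\mathcal{P}_*$, and since (as you correctly observe) $c(\gamma;\cdot)$ is not monotone under refinement, the existence of a single refinement $\mathcal{P}\supseteq\mathcal{P}_*$ with $c(\gamma)-c(\gamma;\mathcal{P})<\epsilon/4$ and $c(\gamma')-c(\gamma';\mathcal{P})<\epsilon/4$ does not follow from the definition of the supremum; it is equivalent to the integral/limit representation of $c$ on finitely lossless BV paths that you invoke but never establish (you would have to show that the sup over arbitrary, possibly non-lossless partitions does not exceed the limit along lossless refinements). Two further points would need attention even granting that representation: BV paths may have jumps, so the chord lengths do not uniformly shrink under refinement and the claimed convergence of $\sum_k\log\det(I+\|\Delta x_k\|P(t_k)^{-1}W)$ to $\int\operatorname{tr}(P^{-1}W)\,d\ell$ requires handling atoms; and your final constant $K$ (hence $\delta$) depends on the refinement chosen to approximate \emph{both} costs, i.e.\ on $\gamma'$ --- acceptable for the literal two-path statement, but not for the way the theorem is used in Section~\ref{sec:convergence_to_optimal}, where it must apply along a sequence of pairs with TV distance tending to zero, so the modulus should depend only on $\gamma$, $\rho$, and a uniform TV bound.
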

\begin{proof}
Please see \cite[Appendix~D]{pedram2021gaussian}
\end{proof}

\subsection{Asymptotic optimality\label{sec:asy_opt}}
Let $Y_n$ denote the cost of the least cost solution returned by a sampling-based motion planning algorithm in $n$ iterations. We define $c^\star = \inf\{c(\gamma) :\; \gamma\; \text{is a feasible path}\}$. An algorithm is said to be asymptotically optimal if
\begin{align}
P(\{\underset{n\rightarrow\infty}{\limsup}\;Y_n =c^\star\}) = 1 
\end{align}
\section{IG-PRM* Algorithm\label{sec:algorithm}}
In section, we first present the notion of uniformly sampling covariances followed by the IG-PRM* algorithm.
\subsection{Uniform sampling of covariance}

To introduce a sampling-based planning algorithm in a Gaussian belief space, we will need a mechanism that allows us to randomly generate candidate covariance matrices. We use the algorithm proposed in \cite{mittelbach2012sampling} for uniformly sampling covariance $P\in \mathbb{S}^d_{+}$ in  $\mathcal{R}_{(\underline{c},\bar{c}]}:=\{P\in \mathbb{S}^d_{+}: \underline{c} < \textup{Tr}(P)\leq \bar{c}\}$. The sampled covariance $P$ is said to have uniform distribution  on $\mathcal{R}_{[\underline{c},\bar{c}]}$ if
\begin{align} \label{eq:uniform_sampling}
    \mathbb{P}(\{P\in\mathcal{A}\})=\frac{\text{vol}(\mathcal{A}\cap \mathcal{R}_{[\underline{c}, \bar{c} ]})}{\text{vol}(\mathcal{R}_{[\underline{c} ,\bar{c} ]})}
\end{align}
holds for all $\mathcal{A} \subset \mathbb{S}^{d}_+$. We denote by set $\mathcal{R}:=\{P\in \mathbb{S}^d_{+}: \textup{Tr}(P)=1\}$. 
We assume the space of $\mathbb{S}^d_{+}$ is equipped with the Rao-Fisher metric \cite{terras2012harmonic}, and use this metric to measure the volume of different regions in $\mathcal{S}^d_{+}$ in the following Theorem. 
\begin{theorem}
  \normalfont \cite{mittelbach2012sampling} The volume of region $\mathcal{R}_{(c_1,c_2]}$ where $c_2>c_1$ is given by
   \begin{align}
   \label{eq:vol_Rc}
       &\textup{Vol}(\mathcal{R}_{(c_1,c_2]})\nonumber\\
       &=\left(\frac{c_2^{\frac{d(d+1)}{2}}-c_1^{\frac{d(d+1)}{2}}}{\frac{d(d+1)}{2}}\right)V_r\nonumber\\
       &=\left(\frac{c_2^{\frac{d(d+1)}{2}}-c_1^{\frac{d(d+1)}{2}}}{\frac{d(d+1)}{2}}\right)\pi^{\frac{1}{4} d(d-1)} \frac{\prod_{k=2}^d \Gamma\left(\frac{k+1}{2}\right)}{\Gamma\left(\frac{d(d+1)}{2}\right)}
   \end{align}
   where $V_r=\textup{Vol}(\mathcal{R})$ and $\Gamma$ is the gamma function.
\end{theorem}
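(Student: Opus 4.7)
The plan is to decompose the volume of $\mathcal{R}_{(c_1,c_2]}$ into a radial part along the trace direction and an angular part on the unit-trace slice $\mathcal{R}$, in direct analogy with the spherical polar decomposition of Lebesgue measure on a vector space. Concretely, I would parametrize each $P\in\mathbb{S}^d_+$ by $P=t\tilde P$ with $t=\textup{Tr}(P)>0$ and $\tilde P\in\mathcal{R}$. Since $\mathbb{S}^d$ has dimension $n=d(d+1)/2$ and the trace is a linear functional, a direct Jacobian computation (equivalently, the $P\mapsto \lambda P$ dilation invariance of the slice $\mathcal{R}$) yields
\[
\prod_{i\leq j}dP_{ij}\;=\;t^{\,n-1}\,dt\cdot d\sigma(\tilde P)\;=\;t^{\,d(d+1)/2-1}\,dt\cdot d\sigma(\tilde P),
\]
where $d\sigma$ is the induced measure on $\mathcal{R}$. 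Integrating the radial factor over $t\in(c_1,c_2]$ immediately gives $\frac{c_2^{d(d+1)/2}-c_1^{d(d+1)/2}}{d(d+1)/2}$, and multiplying by $V_r:=\int_{\mathcal{R}}d\sigma$ establishes the first equality of \eqref{eq:vol_Rc}.

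The second equality requires evaluating $V_r$ explicitly. I would diagonalize $P=U\Lambda U^\top$ with $U\in O(d)$ and $\Lambda=\textup{diag}(\lambda_1,\dots,\lambda_d)$ and invoke the classical real symmetric matrix integration formula
\[
\prod_{i\leq j}dP_{ij}\;=\;\prod_{i<j}|\lambda_i-\lambda_j|\;\prod_i d\lambda_i\cdot dU,
\]
where $dU$ is the Haar measure on the appropriate quotient of $O(d)$. The unit-trace condition confines the eigenvalues to the standard simplex $\Delta_{d-1}$. Applying the radial/angular decomposition a second time (now on $\mathbb{R}^d$ along the functional $\sum_i\lambda_i$) reduces $V_r$ to a constant multiple of the Selberg-type Dirichlet integral
\[
\int_{\Delta_{d-1}}\prod_{i<j}|\lambda_i-\lambda_j|\,d\lambda,
\]
which is the $\gamma=\tfrac{1}{2}$ specialization of the Selberg/Mehta family and admits a closed form in gamma functions. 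Combining that closed form with the known Haar volume of $O(d)/\{\pm1\}^d$ and the $d!$ overcount from the unordered eigenvalues should collapse to the advertised $\pi^{d(d-1)/4}\,\prod_{k=2}^d\Gamma((k+1)/2)/\Gamma(d(d+1)/2)$.

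The main obstacle is the constant bookkeeping in this last step: three separate factors (the Selberg evaluation, the orthogonal-group volume, and the $d!/2^d$ combinatorial normalization from the eigenvalue decomposition) each contribute their own gamma functions and powers of $\pi$, and it is very easy to drop a factor of $2^d$, $d!$, or $\pi^{d/2}$. The radial factorization of the first step is essentially routine, so almost all of the work lies in verifying that these three contributions combine to \emph{exactly} the stated product. As a sanity check I would confirm the formula against the low-dimensional cases $d=1$ (where $V_r=1$ trivially) and $d=2$ (where the direct parametrization $a\in(0,1)$, $|b|<\sqrt{a(1-a)}$ gives $V_r=2\,B(3/2,3/2)=\pi/4$), both of which match the claimed expression.
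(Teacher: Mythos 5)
Your proposal is sound, and it is essentially the derivation behind the cited result: the paper itself offers no argument for this theorem beyond pointing to \cite{mittelbach2012sampling}, and that reference (like the paper's own Appendix~\ref{app:volume} proof of Theorem~\ref{theo:volume}) uses exactly the ingredients you list --- the polar parametrization $P=U^\top\Lambda U$ with volume element $\det\theta\,\prod_{i<j}|\lambda_i-\lambda_j|\prod_i d\lambda_i$, the $(d!\,2^d)^{-1}$ normalization, $\int_{O(d)}\det\theta=2^d\pi^{d^2/2}/\Gamma_d(d/2)$, and a Selberg-type eigenvalue integral. Your radial factorization along the trace, giving the $(c_2^{n}-c_1^{n})/n$ factor with $n=d(d+1)/2$, is correct and yields the first equality; note that it implicitly (and correctly) fixes $V_r$ as the cone measure on the unit-trace slice induced by the flat volume element $\prod_{i\le j}dP_{ij}$, which is the measure the formula actually refers to --- taken literally, the Rao--Fisher volume of $\mathcal{R}_{(c_1,c_2]}$ would be infinite --- and your $d=2$ computation is consistent with that normalization. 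The one place where you stop short is the entire nontrivial content of the statement: the exact gamma-function constant, which you assert ``should collapse'' from the Selberg evaluation, the orthogonal-group volume, and the $d!\,2^d$ factor; your $d=1,2$ checks are correct ($V_r=1$ and $\pi/4$) but they test the target formula, not your bookkeeping. If you want to bypass that bookkeeping entirely, evaluate $\int_{\mathbb{S}^d_{+}}e^{-\mathrm{Tr}(P)}\,dP=\Gamma_d\bigl(\tfrac{d+1}{2}\bigr)=\pi^{d(d-1)/4}\prod_{k=1}^{d}\Gamma\bigl(\tfrac{k+1}{2}\bigr)$ (the multivariate gamma function), and apply your own radial decomposition to the same integral to write it as $\Gamma\bigl(\tfrac{d(d+1)}{2}\bigr)V_r$; equating the two pins $V_r$ in one line, and the $k=1$ factor $\Gamma(1)=1$ reconciles the product starting at $k=2$ in the statement.
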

\begin{proof}
    Please refer to \cite{mittelbach2012sampling}.
\end{proof}
\begin{theorem}
\label{theo:volume}
\normalfont The volume of region $\mathcal{D}_A=\{Q\in \mathbb{S}^d_{+}: Q \preceq A:=\textup{diagonal}(a_1, \dots, a_d)\}$ is lower bounded as
\begin{align}
\label{eq:vol_DA}
    \textup{Vol}(\mathcal{D}_A) \geq V_db^{\frac{d(d+1)}{2}}S_d(1,1,1/2)=\textup{Vol}(\mathcal{D}'_A)
\end{align}
where $b=\underset{k\in[1;d]}{\min}\;a_k$, $\mathcal{D}'_{A}=\{P\in\mathbb{S}^d_+:\;P\preceq b I \}$, $V_d = (d! \; 2^d)^{-1} \frac{2^d \pi^{d^2/2}}{\Gamma_d(d/2)} $ and $S_d(\alpha_1,\alpha_2,\alpha_3)$ is the Selberg integral given by
\begin{align}
     &S_d(\alpha_1, \alpha_2, \alpha_3)\nonumber\\
     &=\int_0^1 .. \int_0^1 \prod_{i=1}^d t_i^{\alpha_1-1}\left(1-t_i\right)^{\alpha_2-1} \prod_{1 \leq i<j \leq d}\left|t_i-t_j\right|^{2 \alpha_3} d t \nonumber \\ 
     & =\prod_{j=0}^{d-1} \frac{\Gamma(\alpha_1+j \alpha_3) \Gamma(\alpha_2+j \gamma) \Gamma(1+(j+1) \gamma)}{\Gamma(\alpha_1+\alpha_2+(d+j-1) \alpha_3) \Gamma(1+\alpha_3)}\nonumber
\end{align}
Further, for $\beta>1$, we have
\begin{align}
    &\textup{Vol}(\mathcal{D}_{\beta A})-\textup{Vol}(\mathcal{D}_{ A})\nonumber\\
    &\geq V_d (\beta^{\frac{d(d+1)}{2}}-1) b^{\frac{d(d+1)}{2}}S_d(1,1,1/2)
\end{align}
where
$\Gamma$ is the gamma function.
\end{theorem}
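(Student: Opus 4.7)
The plan is to reduce the theorem to two independent observations: a simple set inclusion that dominates $\mathcal{D}_A$ by a fully ``spherical'' sub-region $\mathcal{D}_A'$, and a scaling identity that turns the additive statement for $\beta A$ into a multiplicative one.

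First, I would show the inclusion $\mathcal{D}_A' \subseteq \mathcal{D}_A$. Since $b=\min_k a_k$, every diagonal entry of $A$ satisfies $a_k \geq b$, so $A \succeq bI$; hence any $P$ with $P \preceq bI$ also satisfies $P \preceq A$. This immediately gives $\mathrm{Vol}(\mathcal{D}_A) \geq \mathrm{Vol}(\mathcal{D}_A')$, and it remains to prove the exact formula $\mathrm{Vol}(\mathcal{D}_A') = V_d\, b^{d(d+1)/2} S_d(1,1,1/2)$.

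Next, I would evaluate $\mathrm{Vol}(\mathcal{D}_A')$ by changing to spectral coordinates. Writing $P = U\Lambda U^\top$ with $\Lambda = \mathrm{diag}(\lambda_1,\dots,\lambda_d)$ and $U \in O(d)$, the standard decomposition of the volume form on $\mathbb{S}^d$ becomes
\begin{equation*}
dP \;=\; \prod_{1\leq i<j\leq d}|\lambda_i-\lambda_j|\, d\lambda_1\cdots d\lambda_d\, dU,
\end{equation*}
where $dU$ is the Haar measure on $O(d)$; quotienting by the $d!$-fold ambiguity in ordering eigenvalues and by the stabilizer yields the prefactor
\begin{equation*}
V_d \;=\; \frac{1}{d!\,2^d}\cdot\frac{2^d\pi^{d^2/2}}{\Gamma_d(d/2)}.
\end{equation*}
On $\mathcal{D}_A'$ the constraint $P\preceq bI$ is exactly $0<\lambda_i\leq b$ for every $i$. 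Substituting $\lambda_i = b\,t_i$ with $t_i\in(0,1]$ contributes $b^{d}$ from the differentials and $b^{d(d-1)/2}$ from the Vandermonde factor, for a total $b^{d(d+1)/2}$. The remaining integral is precisely the Selberg integral $S_d(1,1,1/2)$ with $\alpha_1=\alpha_2=1$ and $2\alpha_3=1$, so $\mathrm{Vol}(\mathcal{D}_A') = V_d\,b^{d(d+1)/2} S_d(1,1,1/2)$, proving the first inequality.

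For the second bound I would invoke the homogeneity of Lebesgue volume on $\mathbb{S}^d$. The linear map $Q\mapsto \beta Q$ is a bijection $\mathcal{D}_A\to\mathcal{D}_{\beta A}$, since $Q\preceq A \Longleftrightarrow \beta Q\preceq \beta A$, and its Jacobian on the $d(d+1)/2$-dimensional space $\mathbb{S}^d$ is $\beta^{d(d+1)/2}$. Therefore
\begin{equation*}
\mathrm{Vol}(\mathcal{D}_{\beta A}) - \mathrm{Vol}(\mathcal{D}_A) \;=\; \bigl(\beta^{d(d+1)/2}-1\bigr)\mathrm{Vol}(\mathcal{D}_A) \;\geq\; V_d\bigl(\beta^{d(d+1)/2}-1\bigr) b^{d(d+1)/2} S_d(1,1,1/2),
\end{equation*}
where the last step uses the first part of the theorem. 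The main technical obstacle is the bookkeeping in the spectral decomposition step: one must carefully justify the Vandermonde Jacobian, remove the redundancy from permuting eigenvalues and from the stabilizer of the $O(d)$-action, and collect the $O(d)$ Haar volume into the stated closed form for $V_d$. Once those constants are verified, the Selberg substitution and the scaling argument are both immediate.
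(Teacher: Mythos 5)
Your proof is correct, and the core of it — the inclusion $\mathcal{D}'_A\subseteq\mathcal{D}_A$ from $bI\preceq A$, the spectral (polar) parametrization with the Vandermonde Jacobian and the $(d!\,2^d)^{-1}$ ordering/orientation factor, the $O(d)$ Haar volume $\tfrac{2^d\pi^{d^2/2}}{\Gamma_d(d/2)}$, and the rescaling $\lambda_i=b\,t_i$ producing $b^{d(d+1)/2}S_d(1,1,1/2)$ — is exactly the paper's argument in Appendix A, so the first inequality matches the published proof step for step. For the second inequality your route is slightly different and in fact cleaner: you use homogeneity of the flat volume on $\mathbb{S}^d$ directly, i.e.\ $Q\mapsto\beta Q$ maps $\mathcal{D}_A$ bijectively onto $\mathcal{D}_{\beta A}$ with Jacobian $\beta^{d(d+1)/2}$, so $\mathrm{Vol}(\mathcal{D}_{\beta A})-\mathrm{Vol}(\mathcal{D}_A)=(\beta^{d(d+1)/2}-1)\mathrm{Vol}(\mathcal{D}_A)$ and part one finishes the bound. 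The paper derives the same scaling factor by the eigenvalue substitution $z_i=\lambda_i/\beta$, but then assembles the inequality through ratios of $\mathrm{Vol}(\mathcal{D})$ and $\mathrm{Vol}(\mathcal{D}')$ and passes from $\tfrac{\mathrm{Vol}(\mathcal{D}'_{\beta A})-\mathrm{Vol}(\mathcal{D}'_A)}{\mathrm{Vol}(\mathcal{D}'_A)}$ to $\mathrm{Vol}(\mathcal{D}'_{\beta A})-\mathrm{Vol}(\mathcal{D}'_A)$, a step that as written needs $\mathrm{Vol}(\mathcal{D}'_A)\le 1$; your multiplication by $\mathrm{Vol}(\mathcal{D}_A)$ avoids that detour entirely and recovers the stated constant $V_d$ explicitly. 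One caveat worth a sentence in your write-up: the theorem's preamble mentions the Rao--Fisher metric, but both your scaling argument and the paper's computations use the flat Lebesgue volume element $dP=\det\theta\,\prod_{i<j}|\lambda_i-\lambda_j|\,\prod_i d\lambda_i$ (consistent with the $c^{d(d+1)/2}$ scaling in the cited sampling result), so you should state that this is the measure being used, since under the genuine Rao--Fisher volume the map $Q\mapsto\beta Q$ would be volume-preserving and the homogeneity step would fail.
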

\begin{proof}
Please, see Appendix~\ref{app:volume}.
\end{proof}

%

\begin{lemma}
\label{lemma:diag}
  \normalfont  If $P\in \mathbb{S}^d_{+}$ has the eigenvalue decomposition $P=V^\top \Sigma V$, where $\Sigma$ is diagonal and $V$ is a unitary matrix, then 
\begin{align}
    \textup{Vol}(\mathcal{D}_P)=\textup{Vol}(\mathcal{D}_{\Sigma}),
\end{align}
where $\mathcal{D}_P=\{Q\in \mathbb{S}^d_{+}: Q \preceq P\}$ and $\mathcal{D}_{\Sigma}=\{Q\in \mathbb{S}^d_{+}: Q \preceq \Sigma\}$.
\end{lemma}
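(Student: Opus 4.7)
The plan is to exhibit an explicit volume-preserving bijection between $\mathcal{D}_{\Sigma}$ and $\mathcal{D}_P$ induced by the orthogonal change of basis $V$. Concretely, define the map
\[
\phi : \mathbb{S}^d_+ \to \mathbb{S}^d_+, \qquad \phi(Q) = V^\top Q V,
\]
and note that $\phi$ is a linear bijection whose inverse is $Q \mapsto V Q V^\top$. Because congruence by an invertible matrix preserves positive semidefinite ordering, we have $Q \preceq \Sigma$ if and only if $V^\top Q V \preceq V^\top \Sigma V = P$. Therefore $\phi(\mathcal{D}_\Sigma) = \mathcal{D}_P$ set-theoretically, which reduces the claim to showing that $\phi$ is volume-preserving with respect to the Rao--Fisher volume element.

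For the volume calculation I would use the standard fact that on $\mathbb{S}^d_+$ the Rao--Fisher Riemannian metric takes the form $ds^2 = \tfrac{1}{2}\mathrm{Tr}\bigl((Q^{-1}dQ)^2\bigr)$, whose induced volume element is $\det(Q)^{-(d+1)/2}\, dQ$, where $dQ$ is the ordinary Lebesgue measure on the vector space $\mathbb{S}^d$. Under $\phi$, the differential acts linearly on $\mathbb{S}^d$ by $dQ \mapsto V^\top dQ \, V$. Since $V$ is orthogonal, this linear map on the Euclidean space $(\mathbb{S}^d, \|\cdot\|_F)$ is an orthogonal transformation, so its Jacobian determinant equals one. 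Combined with $\det(V^\top Q V) = \det(Q)$, this yields that $\phi^\ast\bigl(\det(Q)^{-(d+1)/2} dQ\bigr) = \det(Q)^{-(d+1)/2} dQ$, i.e.\ the Rao--Fisher volume form is $\phi$-invariant. An equivalent and slightly more conceptual route is to verify directly that the Rao--Fisher metric itself is invariant under congruence by $V$: substituting $Q' = V^\top Q V$ one computes $(Q')^{-1} dQ' = V^\top Q^{-1} dQ V$, and $\mathrm{Tr}\bigl((V^\top Q^{-1} dQ V)^2\bigr) = \mathrm{Tr}\bigl((Q^{-1} dQ)^2\bigr)$ by cyclicity of the trace together with $VV^\top = I$. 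Either route shows that $\phi$ is a Riemannian isometry of $\mathbb{S}^d_+$.

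Putting these pieces together, $\phi$ restricts to a volume-preserving bijection from $\mathcal{D}_\Sigma$ onto $\mathcal{D}_P$, from which the equality $\text{Vol}(\mathcal{D}_P) = \text{Vol}(\mathcal{D}_\Sigma)$ follows immediately. I expect the only subtlety to be a careful statement of the Rao--Fisher volume element and its congruence invariance; these are classical but should be cited (e.g.\ to Terras, already referenced in the paper) rather than re-derived. Once that invariance is in hand, the remainder of the argument is a one-line application of the bijection.
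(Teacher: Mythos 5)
Your proposal is correct and is essentially the paper's own argument in coordinate-free form: the paper proves the lemma by the change of variables $U \mapsto U V^\top$ in the polar parametrization of Appendix A (unit Jacobian, turning the indicator of $\mathcal{D}_P$ into that of $\mathcal{D}_\Sigma$), which is exactly your congruence map $Q \mapsto V^\top Q V$ together with its volume invariance. The only caveat is that the volume element the paper actually integrates, $\det\theta\,\prod_{i<j}|\lambda_i-\lambda_j|\,\prod_i d\lambda_i$ without a $\det(Q)^{-(d+1)/2}$ density, is not quite the Rao--Fisher element you quote, but your argument is insensitive to this since both measures are invariant under congruence by an orthogonal matrix.
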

\begin{proof}
{Please, see Appendix~\ref{app:diag}}
\end{proof}
The pseudo code for generating positive definite matrices uniformly from $\mathcal{R}_{(\underline{c},\bar{c}]}$ is described in Algorithms 1-4 of \cite{mittelbach2012sampling}.
\subsection{IG-PRM* Algorithm}
The implementation of PRM* in Gaussian belief space (termed as IG-PRM*)
for the introduced cost function \eqref{eq:def_D} is summarized in Algorithm 1. The source code for Algorithm 1 is available at \href{https://github.com/AlirezaPedram/RI-PRMstar}{ https://github.com/AlirezaPedram/RI-PRMstar}. PRM* \cite{karaman2010incremental} creates a probabilistic roadmap in deterministic configuration space by connecting randomly sampled points, and efficiently finds the shortest path between two points while maintaining asymptotic optimality.
At the first glance, Algorithm~1 seems identical to the original PRM*. However, the implementation in Gaussian belief space necessitates the introduction of new functionalities.

\begin{algorithm}
    { 
        \textbf{Inputs}: $ b_{\text{init}}, \;b_{\text{final}},\;n$ \;
    $\left.B \leftarrow\left\{b_{\text{init}}\right\} \cup\{\text {SampleFree}_{i}\right\}_{i\in [1;n]}\cup\left\{b_{\text{final}}\right\} $\;
    $E \leftarrow \emptyset$\;
    \For{$b\in B$}{
    $\;\; B_{\text{nbors}} \leftarrow \operatorname{Near}\left(B, b, D_{\text{min}}\right)$\;
    \For{$b_j\in B_{\text{nbors}}$}{
    \If{$\operatorname{CollisionFree}(b, b_j, \chi^2)$ }{$E \leftarrow E \cup\{(b, b_j)\}$\;
    }
    }
    }
    $G=(B, E)$\;
    $\gamma_n\leftarrow \;\text{Search}(G, b_{\text{init}}, b_{\text{final}}, k)$\;
    \Return $\gamma_n$
    }
\caption{IG-PRM*}
\label{alg:igpp_body1}
\end{algorithm}
\begin{algorithm}
    {
    \textbf{Inputs}: $G, b_{\text{init}}, b_{\text{final}}, k$ \;
    $N_{b_{\text{init}}} \leftarrow \text{Near}_{\text{from}}(G, b_\text{init},k) $\;
    $N_{b_{\text{final}}} \leftarrow \text{Near}_{\text{to}}(G, b_\text{final},k)$\;
    $\gamma_n\leftarrow \text{ShortestPath}(G, N_{b_{\text{init}}}, N_{b_{\text{final}}})$\;
    \Return $\gamma_n$
    }
\caption{Search Algorithm (\text{Search}($G, b_{\text{init}}, b_{\text{final}}, k$))}
\label{algo:search}
\end{algorithm}

\begin{algorithm}
    { 
     \textbf{Inputs}: $ \gamma_n:=(b_1\rightarrow b_1\dots\rightarrow b_m)$ \;
    \For{$j=2$ \textbf{to} $m$}{
    $ b_j\leftarrow \text{LosslessRefine}(b_{j-1},b_j)$\;
    
    }
    
     \Return $\gamma_n$
    }
\caption{Lossless modification of $\gamma_n$}
\label{alg:lossless_modify_ri_prm_star}
\end{algorithm}
The IG-PRM* (Algorithm \ref{alg:igpp_body1}) takes in as inputs, the initial belief state $b_{\text{init}}$, the final belief state $b_{\text{final}}$ and the total number of belief states $n$ that would be sampled. The function $\textup{SampleFree}_i$ (Line 2 of Algorithm \ref{alg:igpp_body1}) generates a belief state $b_i=(x_i,P_i)$ by sampling the mean state $x_i$ and the corresponding covariance $P_i$ independently. The point  
$x_i$ is sampled uniformly from obstacle free space $\mathcal{X}_{\text{free}}\in \mathbb{R}^d$ and $P_i$ is sampled uniformly from $\mathcal{R}_{(\rho d, R ]}:=\{P \in \mathbb{S}^d_{+}:\; \rho d < \textup{Tr}(P) \leq R \}$ by the scheme proposed in \cite{mittelbach2012sampling}.  Using the function $\textup{SampleFree}_i$, $n$ such uniformly randomly belief states. These $n$ sampled belief states along with the $b_{\text{init}}$ and $b_{\text{final}}$ are stored in set $B$ (Line 2 of Algorithm 1). For every belief state $b$ in the set $B$, the function $\operatorname{Near}\left( B, b, D_{\text{min}}\right)$ (Line 5 of Algorithm \ref{alg:igpp_body1}) returns the neighboring nodes of $b$ in $B$ and stores it in set $B_{\text{nbors}}$. In other words,  $B_{\text{nbors}}=\left\{b_j \in B:\hat{\mathcal{D}}(b_j,b)\leq  D_{\text{min}}  \right\}$, 
where 

\begin{figure*}[ht!]
\centering
\includegraphics[width=1.96\columnwidth]{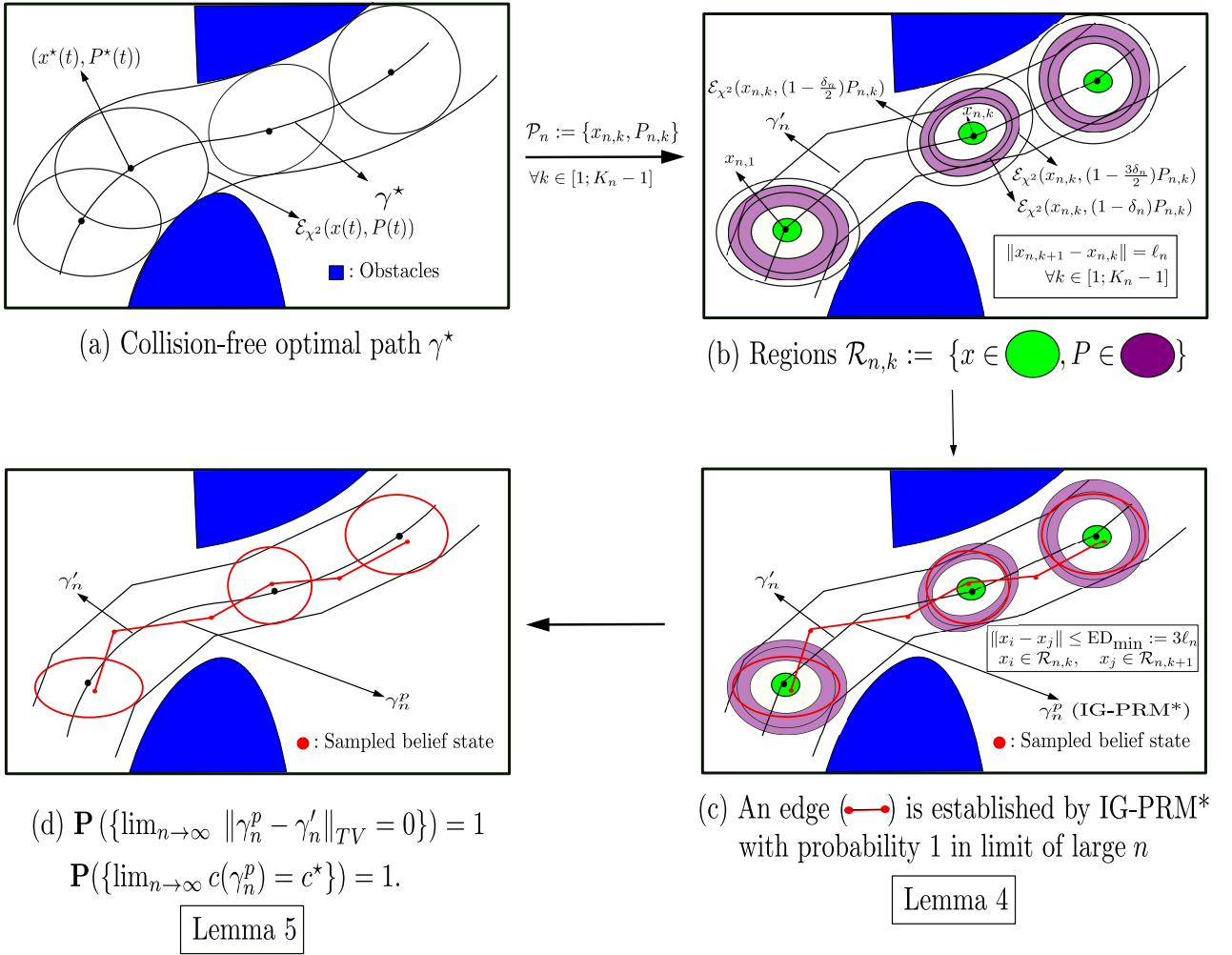}
\caption{The figure depicts the outline of the proof. (a) Representation of optimal path $\gamma^\star$ with cost $c(\gamma^\star)=c^\star$. (c) The zig-zag chain denoted by $\gamma^p_n$ (red curve) is the chain generated by the IG-PRM* algorithm (Algorithm \ref{alg:igpp_body1}) where $(x_i,P_i)$ denote the belief states that are sampled by IG-PRM*. In Lemma \ref{lemma:event_E_n}, we show that an edge is established by IG-PRM* with probability 1 in the limit of large $n$. (d) We show in Lemma \ref{lemma:arbritrarily_close} that the zig-zag path generated by IG-PRM* converges to the optimal chain $\gamma'_n$ in the limit of large $n$. Using continuity of cost function (Theorem \ref{theo:cont}), we show that the cost of the zig-zag chain $\gamma_n^p$ converges to the optimal cost $c^\star$. }
\label{fig:outline_of_the_proof}
\end{figure*}

\begin{figure*}[ht!]
\centering
\includegraphics[width=1.96\columnwidth]{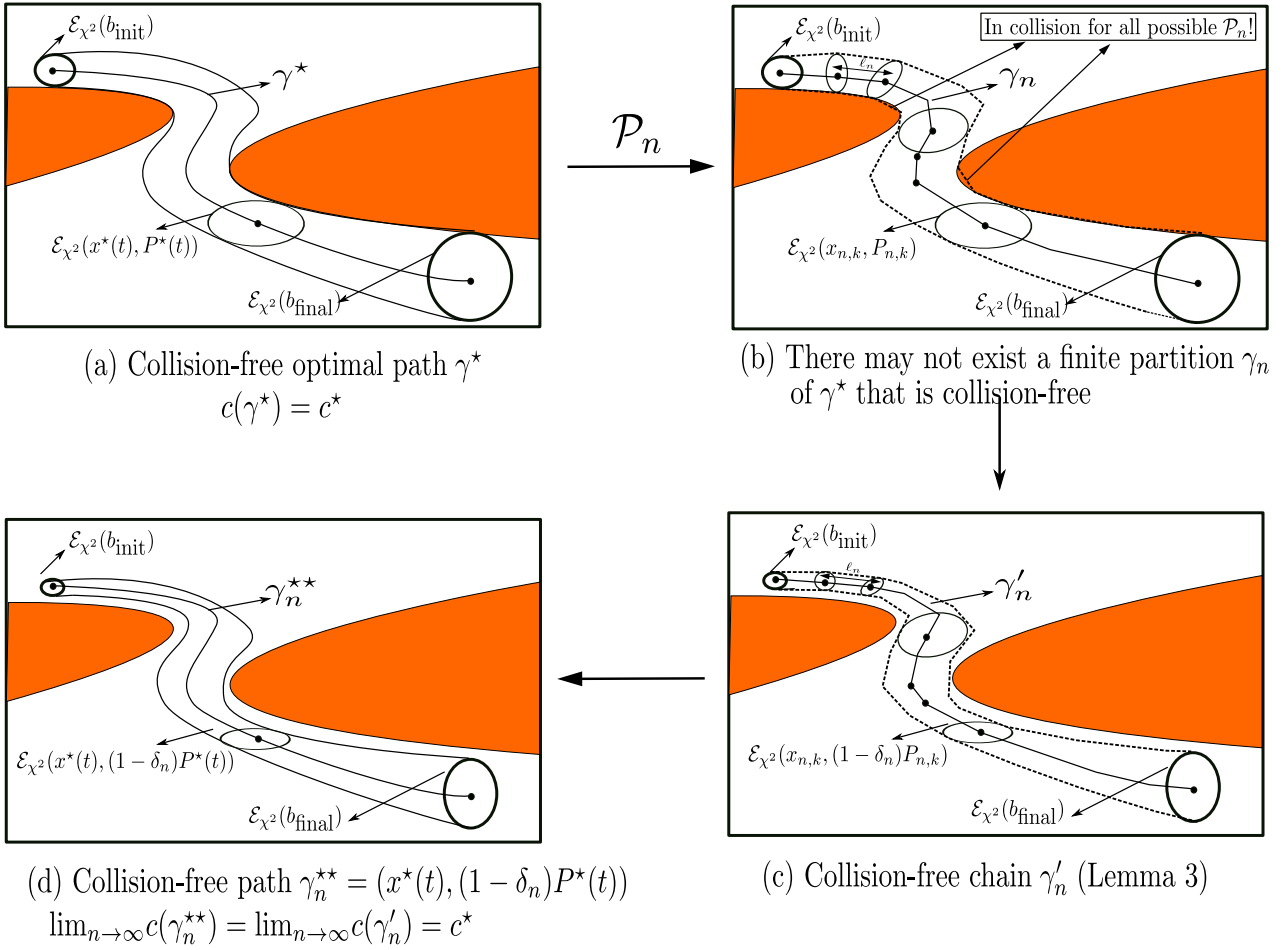}
\caption{Depiction of different paths $\gamma^\star$, $\gamma^{\star\star}_n$ and chains $\gamma_n$, $\gamma'_n$ that are used in the proof of asymptotic optimality of IG-PRM*. (a) The optimal path $\gamma^\star$ is optimal i.e. $c(\gamma^\star)=c^\star$ and collision-free. (b) From Fig. (2b), there does not exist any partition $\mathcal{P}_n$ of the optimal path $\gamma^\star$ that would lead to the chain $\gamma_n:=\{x_{n,k},P_{n,k}\}$ for all $k\in[1;K_n]$ being collision-free (clarified more in Fig. \ref{fig:transition}). To address this issue, we consider a modified chain $\gamma'_n:=\{x_{n,k},P'_{n,k}\}=\{x_{n,k},(1-\delta_n)P_{n,k}\}$ for all $k\in[1;K_n]$ which is collision-free (Lemma \ref{lemma:rn}). (d) We then show that as the number of samples $n$ tends to infinity, the cost of $\gamma'_n$ is equal to the cost of $\gamma^{\star\star}_n$ i.e. $\underset{n\rightarrow\infty}{\lim} c(\gamma^{\star\star}_n)=\underset{n\rightarrow\infty}{\lim} c(\gamma'_n)=c^\star$ (Eqn. \ref{eqn:cost_gamma_dash_equals_gamma}). }
\label{fig:1}
\end{figure*}
\begin{subequations}
\begin{align} 
& \hat{\mathcal{D}}(b,b'):=\|x-x'\|,\\
& D_{\text{min}}=3\ell_n,\\
   & \ell_n:=\min\left\{\frac{\delta_n\sqrt{ \chi^2 \rho} }{16},\frac{\delta_n\rho }{18\bar{\sigma}(W)}\right\},\label{eq:ell_def}\\
   &\delta_n = \min \left\{ \gamma\left(\frac{\text{log}n}{n}\right)^\frac{2}{d(d+3)}, \frac{1}{2}\right\}\\
   &\gamma>\left(\frac{d^2+3d+2}{g_1 g_2 d(d+3)} \right)^\frac{2}{d(d+3)},\\
   &g_1=\frac{\tau_d  h^d}{  \mathcal{V}_{\mathcal{X}}},\;\;\mathcal{V}_\mathcal{X}:=\text{vol}(\mathcal{X}),\\
   & g_2=V_dS_d(1,1,1/2)2^{-1}(d(d+1))V_r^{-1}(2\rho)^{\frac{d(d+1)}{2}}\Delta,\\
   &\Delta=\bigg[R^{\frac{d(d+1)}{2}}- (\rho d)^{\frac{d(d+1)}{2}} \bigg]^{-1},\\
   &h:=\min\left\{\frac{\sqrt{ \chi^2 \rho} }{16},\frac{\rho }{18\bar{\sigma}(W)}\right\} = \frac{\ell_n}{\delta_n}
\end{align}
\end{subequations}

The main intuition behind choosing these geometric constants in a certain way is to allow us to prove the asymptotic optimality and would be more clear in subsequent sections. Now, for every belief state $b_j$ in $B_{\text{nbors}}$, the function $\operatorname{CollisionFree}(b, b_j)$ (Line 7 of Algorithm \ref{alg:igpp_body1}) checks that the $\chi^2$ confidence bound in transition $b \rightarrow b_j $ does not intersect with any obstacles. If the transition $b \rightarrow b_j $ is collision-free, an edge is connected between $b$ and $b_j$ and stored in $E$. Next, a graph $G$ (Line 9 of Algorithm \ref{alg:igpp_body1}) is constructed using the set of belief states $B$ and the edges $E$. Finally, the shortest path $\gamma_n$ (Line 10 of Algorithm \ref{alg:igpp_body1}) between the initial belief $b_{\text{init}}$ to $b_{\text{final}}$ is computed using the Search function (Algorithm \ref{algo:search}). 
Algorithm \ref{algo:search} resembles Algorithm~7 in \cite{choset2005principles}. The function $\text{Near}_{\text{from}}(G, b_\text{init},k)$ finds the $k^\text{th}$ nearest nodes in the metric $\hat{D}$ from $G$, to which the transition from $b_{\text{init}}$ are collision-free. Likewise, the function $\text{Near}_{\text{from}}(G, b_\text{start},k)$ finds the $k^\text{th}$ nearest nodes based on the metric $\hat{D}$ from $G$, from which the transition to $b_{\text{final}}$ is collision-free. The function $\text{ShortestPath}(G, N_{b_{\text{init}}}, N_{b_{\text{final}}})$ first uses  Dijkstra's algorithm \cite{dijkstra1959note} to find the shortest path on $G$ between all possible  pairs of   $b_1 \in N_{b_{\text{init}}}$ and $b_2 \in N_{b_{\text{final}}}$, if one exists. Then, this function returns the path that results in the shortest path, among the sought shortest paths, between $b_{\text{init}}$ and $b_{\text{final}}$, or returns $\text{Failure}$ if  no path is found.  However, it must be noted that the belief chain $\gamma_n$ computed using the IG-PRM* algorithm (Algorithm \ref{alg:igpp_body1}) might not be finitely lossless. Algorithm \ref{alg:lossless_modify_ri_prm_star} ensures that the chain $\gamma_n$ is finitely lossless. It takes in as input, $\gamma_n$ which consists of say $m$ belief states in sequence. Then for every belief edge $(b_{j-1},b_j)$ for $j\in[1;m]$, the belief state $b_j$ is updated using the function $\operatorname{LosslessRefine}(b_{j-1},b_j)=(x_j,P^\star)$ which ensures that the transition $b_{j-1}\rightarrow b_j$ is lossless. In other words, $b_j$ now becomes equal to $(x_j,P^\star)$.  Here, $P^\star$ is the minimizer of \eqref{eq:d_info_general} computed using Lemma \ref{lemma:analytical}. In this case $\hat{P}_{k+1}=P_{j-1}+\|x_{j-1}-x_j\|W$ and $P_{k+1}=P_j$ where $b_{j-1}\equiv(x_{j-1},P_{j-1})$ and $b_j\equiv(x_j,P_j)$.
\begin{lemma}
 \normalfont [Lemma 1, \cite{pedram2021gaussian}] Let $[U, \Sigma]$ be the eigendecomposition of $P_{k+1}^{-1 / 2} \hat{P}_{k+1} P_{k+1}^{-1 / 2}$ i.e. $U \Sigma U^{\top}=P_{k+1}^{-1 / 2} \hat{P}_{k+1} P_{k+1}^{-1 / 2}$, where $\Sigma=$ $\operatorname{diag}\left(\sigma_1, \ldots, \sigma_n\right) \succ 0$ and $U$ is unitary matrix. Then, $P^\star=$ $P_{k+1}^{1 / 2} U S^\star U^{\top} P_{k+1}^{1 / 2}$ is the optimal solution of \eqref{eq:d_info_general} , where $S^\star:=$ $\operatorname{diag}\left(\min \left\{1, \sigma_1\right\}, \ldots, \min \left\{1, \sigma_{\mathrm{n}}\right\}\right)$
 \label{lemma:analytical}
\end{lemma}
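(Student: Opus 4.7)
Since the problem \eqref{eq:d_info_general}--\eqref{eq:d_info_general1} amounts to maximizing $\log\det Q_{k+1}$ over the spectrahedron $\{Q : 0 \preceq Q \preceq P_{k+1},\ Q \preceq \hat{P}_{k+1}\}$, my plan is to apply a pair of congruence transformations that simultaneously simplify both linear matrix inequalities and then finish with Hadamard's inequality. First I would whiten by $P_{k+1}$: using $P_{k+1}\succ 0$, substitute $Q_{k+1} = P_{k+1}^{1/2} R\, P_{k+1}^{1/2}$ with $R \succeq 0$. The objective becomes $\tfrac{1}{2}\log\det\hat{P}_{k+1} - \tfrac{1}{2}\log\det P_{k+1} - \tfrac{1}{2}\log\det R$, the constraint $Q_{k+1} \preceq P_{k+1}$ collapses to $R \preceq I$, and the constraint $Q_{k+1} \preceq \hat{P}_{k+1}$ becomes $R \preceq P_{k+1}^{-1/2}\hat{P}_{k+1}P_{k+1}^{-1/2} = U\Sigma U^\top$.

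Next I would diagonalize by setting $T := U^\top R\, U \succeq 0$. Congruence preserves the determinant, so the objective reduces to $\log\det T$ up to additive constants, and the two matrix inequalities become the purely diagonal $T \preceq I$ and $T \preceq \Sigma$. For any feasible $T$ the $i$-th diagonal entry obeys $T_{ii} = e_i^\top T e_i \leq \min\{1,\sigma_i\}$, since both $I - T$ and $\Sigma - T$ are PSD. Hadamard's inequality for PSD matrices then yields
\[
\det T \;\leq\; \prod_{i=1}^d T_{ii} \;\leq\; \prod_{i=1}^d \min\{1,\sigma_i\} \;=\; \det S^\star.
\]
The candidate $T = S^\star$ is plainly feasible and saturates the bound, hence is optimal. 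Reversing the two substitutions yields $R^\star = U S^\star U^\top$ and $Q^\star = P^\star = P_{k+1}^{1/2} U S^\star U^\top P_{k+1}^{1/2}$, matching the claimed formula.

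The only delicate point is the Hadamard step: one must ensure that no off-diagonal choice of $T$ can outperform the diagonal candidate. This is exactly what Hadamard's inequality buys, because equality $\det T = \prod_i T_{ii}$ forces $T$ to be diagonal, after which the two entrywise upper bounds finish the argument. Notably, this bypasses any case analysis on whether $\sigma_i$ is greater or less than $1$ and avoids KKT-multiplier bookkeeping; the key structural fact making the proof clean is that, after whitening by $P_{k+1}^{1/2}$, the two constraint matrices $I$ and $U\Sigma U^\top$ share the eigenbasis $U$, which allows a single orthogonal rotation to simultaneously diagonalize the feasible set.
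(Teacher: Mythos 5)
The paper does not actually prove this lemma; it simply imports it as Lemma~1 of \cite{pedram2021gaussian}, so there is no in-paper argument to compare against. Your proof is correct and self-contained: the congruence $Q_{k+1}=P_{k+1}^{1/2}RP_{k+1}^{1/2}$ followed by the rotation $T=U^\top R U$ does preserve the semidefinite order and the determinant (up to the constant $\det P_{k+1}$), the reduced feasible set $\{T\succeq 0:\,T\preceq I,\ T\preceq\Sigma\}$ gives the entrywise bounds $T_{ii}\leq\min\{1,\sigma_i\}$, and Hadamard's inequality $\det T\leq\prod_i T_{ii}$ closes the gap, with $T=S^\star$ feasible and attaining the bound. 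Two small remarks: the equality case of Hadamard (forcing $T$ diagonal) is needed only if you want uniqueness, which the lemma does not assert, and it is legitimate here because $\Sigma\succ 0$ makes the optimal value strictly positive, so any optimizer is positive definite; also, the observation that $I$ and $U\Sigma U^\top$ are simultaneously diagonalized by $U$ is trivially true (the identity commutes with everything), so the real content is just that one orthogonal change of basis reduces the problem to diagonal data. Compared with the KKT-based or eigenvalue-interlacing style arguments one might expect in the cited source, your route is more elementary and avoids multiplier bookkeeping entirely.
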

\begin{assumption}
    \normalfont We assume IG-PRM* is run for $n\geq n^\star$, where $n^\star$ is the minimum $n'\in  \mathbb{N}$ such that $\delta_{n'}< 1/2$.
\end{assumption}
 {
}   

\section{Asymptotic Optimality of IG-PRM* \label{sec:optimality}}
In this section, we show Algorithm~1 with $D_{\text{min}}$ introduced in Section~\ref{sec:algorithm} achieves asymptotic optimality.

\subsection{Outline of the proof\label{subsection:outline_of_proof}}
Let $\gamma^\star: [0,1]\rightarrow \mathbb{R}^d\times \mathbb{S}_\rho^d$, $\gamma^\star(t)=(x^\star(t), P^\star(t))$ be the optimal path and is therefore collision-free by definition (see Fig. (1a)).  We define a series of partitions $\mathcal{P}_n:=\{x_{n,k},P_{n,k}\}\;k\in[1;K_n]$ for $\gamma^{\star}$ where $x_{n,k}=x^\star\left(\frac{k-1}{K_n}\right)$ and $P_{n,k}:=P\left(\frac{k-1}{K_n}\right)$. By construction, $\|x_{n,k+1}-x_{n,k}\|\leq\ell_n$ for all $k\in[1;K_n-1]$ (see Fig. (1b)). 

For each $\mathcal{P}_n$, a sequence of regions $\mathcal{R}_{n,k}, \;k\in[1;K_n]$ is constructed (Section \ref{sec:construction_free_rnk}) which posses three important properties. First, each $\mathcal{R}_{n,k}$ is a region of finite volume. Second, in the limit of large $n$, the volume of these regions converges to zero and the set $\mathcal{R}_{n,k}$ converges to the point $(x_{n,k},P_{n,k})$ for all $k\in[1;K_n]$. Last, we show that the transition between consecutive regions i.e., between any belief state  $b_k \in \mathcal{R}_{n,k}$ and any belief state $b_{k+1} \in \mathcal{R}_{n,k+1}$ is collision-free, and the distance between $b_k$ and $b_{k+1}$ is smaller than $D_{\textup{min}}$ (Lemma \ref{lemma:rn}). The main intuition behind the construction of these $\mathcal{R}_{n,k}$, is to show that there is a positive probability that a belief state would be sampled in every $\mathcal{R}_{n,k}$ for $k\in[1;K_n]$ by the IG-PRM* algorithm (Algorithm \ref{alg:igpp_body1}) and this would be more clear in subsequent sections. Now, we claim that if the IG-PRM* algorithm is able to sample belief states in every $\mathcal{R}_{n,k}$, the connection between any $b_k \in \mathcal{R}_{n,k}$ and $b_{k+1} \in \mathcal{R}_{n,k+1}$ will be established by Algorithm~\ref{alg:igpp_body1}, if they are sampled (i.e. if $b_k \in B$ and $b_{k+1} \in B$) as shown in Fig. (1c). To prove this mathematically, we define the event $E_{n} \triangleq E_{n,1}\cap E_{n,2}\dots \cap E_{n,K_n}$ (Section \ref{subsec:event_E}) as the event that a belief state is sampled inside all $\mathcal{R}_{n,k}$ regions, and show that the event $E_n$ occurs with probability one as $n$ tends to infinity (Lemma \ref{lemma:event_E_n}).

Note that by definition, the optimal path $\gamma^\star$ is collision-free. However, the partition chain $\mathcal{P}_n$ might not be collision-free as one of the confidence ellipsoids of $\mathcal{P}_n$ would pass along the boundary of the obstacle as shown in Fig. \ref{fig:1}. To fix this issue, we construct a  chain $\gamma_n'\triangleq(x_{n,k},P'_{n,k}:=(1-\delta_n)P_{n,k}),\;k\in[1;K_n]$ (Section \ref{subsec:gamma_def}) such that $(x_{n,k},P'_{n,k}) \in \mathcal{R}_{n,k}$ and thus the transition between $(x_{n,k},P'_{n,k})$ to $(x_{n,k+1},P'_{n,k+1})$ for $k\in[1;K_n-1]$ becomes collision-free. Next, we show that the cost of $\gamma'_n$ converges to the cost of the optimal path $c^\star=c(\gamma^\star)$ in the limit of large $n$. Now, we need to link the cost of collision-free $\gamma'$ to the cost of the feasible paths generated by the IG-PRM* algorithm (Algorithm \ref{alg:igpp_body1}) as we are ultimately interested in analyzing the cost of paths returned by IG-PRM*. To that end, we show there exists a path on the graph generated by the IG-PRM$^\star$ algorithm (more precisely, the path generated by connecting sampled nodes in consecutive $ \mathcal{R}_{n,k+1}$s) that gets arbitrarily close to $\gamma'_n$ as $n$ tends to infinity (Lemma \ref{lemma:arbritrarily_close}). Even though the IG-PRM* generated path would get arbitrarily close to $\gamma'_n$ in the limit of large $n$, it is still not clear whether the cost of these generated paths would tend to $c(\gamma'_n)$ as $n$ tends to infinity. To that end, we leverage the continuity of path cost function (Theorem \ref{theo:cont}) to show that the cost of that path gets arbitrarily close to $c^\star$.



\subsection{Construction of collision-free region \texorpdfstring{$\mathcal{R}_{n,k}$}{R}\label{sec:construction_free_rnk}}
\label{sec:event_d_n}
Let $\{\delta_n\}$ for $n\in\mathbb{N}$ be a sequence of positive numbers defined in Section~\ref{sec:algorithm}. Note that $0< \delta_n \leq \frac{1}{2}$ for each $n$ and $\underset{n\rightarrow\infty}{\lim}\;\delta_n=0$. Let $c^\star$ be the optimal path length and
\[
\ell^\star=\sup_{\mathcal{P}} \sum_{k=1}^{K_n-1}\|x^\star(t_{k+1})-x^\star(t_k)\|
\]
be the total travel length of the optimal path $\gamma^\star:[0,1]\rightarrow \mathbb{R}^d \times \mathbb{S}^d, \gamma^\star(t)=(x^\star(t), P^\star(t))$. 
For each $n\in \mathbb{N}$, choose $\ell_n$ as defined in \eqref{eq:ell_def}
and $K_n=\lceil \frac{\ell^\star}{\ell_n} \rceil$. Consider the equi-spaced partition $\mathcal{P}_n=(0=t_{n,0}\leq t_{n,1} \leq \cdots \leq t_{n,K_n}=1)$, and define the chain $\gamma_n:=(x_{n,k}, P_{n,k})\triangleq (x^\star(t_{n,k}), P^\star(t_{n,k}))$, $k\in[1;K_n]$.
By construction, we have $\|x_{n,k+1}-x_{n,k}\|\leq \ell_n$ for each $k\in[1;K_n-1]$. However,
the chain $\gamma_n$ constructed this way is not collision-free in general (Fig. \ref{fig:1}). To that end, we define regions 
\begin{align}
\nonumber
    \mathcal{R}_{n,k}=\{&(x,P): x\in \mathcal{B}(x_{n,k},\ell_n), \\ \label{eq:R_n_def}
    &\left(1-\frac{3 \delta_n}{2}\right) P_{n,k} \preceq P\preceq \left(1-\frac{\delta_n}{2}\right) P_{n,k}\},
\end{align} 
and show that the transition from any $b_1\equiv(x_1, P_1) \in \mathcal{R}_{n,k}$ to any $b_2\equiv(x_2, P_2) \in \mathcal{R}_{n,k+1}$ is collision-free.
In transition $b_1\rightarrow b_2$, the mean and  covariance of the state can be parameterized as $x[\lambda] = (1-\lambda)x_1+\lambda x_2$ and  $P[\lambda]=P_1+\lambda \|x_2-x_1\|W$ for $\lambda\in[0,1]$.
\begin{remark}\label{remark:semi-axis}
\normalfont The length of minor semi-axis of confidence ellipse $\mathcal{E}_{\chi^2}(x,P)$ is $\sqrt{\chi^2 \underline{\sigma}(P)}$. Consequently, the minimum distance between the boundary of co-centric, similar ellipses   $\mathcal{E}_{\chi^2}(x,P)$ and  $\mathcal{E}_{\chi^2}(x, \alpha P)$, where $\alpha > 0$, is $\sqrt{\chi^2 |1-\alpha|\underline{\sigma}(P)}$. 
\end{remark}

\begin{figure}[ht!]
\centering
\includegraphics[width=0.97\columnwidth]{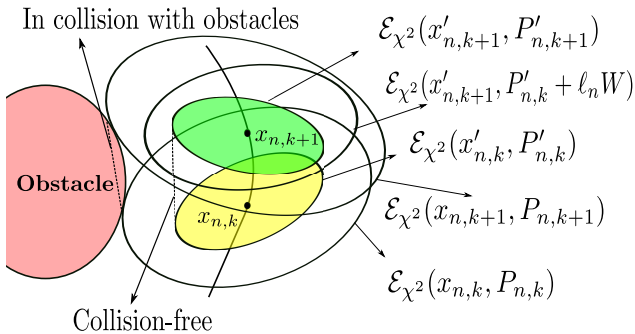}
\caption{Transition from $(x'_{n,k}, P'_{n,k}:=(1-\delta_n)P_{n,k})$ to $(x'_{n,k+1}, P'_{n,k+1}:=(1-\delta_n)P_{n,k+1})$ for all $k\in[1;K_n-1]$ is collision-free even if transition from $(x_{n,k}, P_{n,k})$ to $(x_{n,k+1}, P_{n,k+1})$ is not.}
\label{fig:transition}
\end{figure}

\begin{lemma}
\label{lemma:rn}
 \normalfont In any transition from $b_1=(x_1, P_1) \in \mathcal{R}_{n,k}$ to $b_2=(x_2, P_2) \in \mathcal{R}_{n,k+1}$, 
 for all $\lambda\in[0,1]$,
 $ \mathcal{E}_{\chi^2} (x[\lambda], P[\lambda]) \subset\mathcal{E}_{\chi^2} (x_{n,k},P_{n,k})$, for $    \ell_n:=\min\left\{\frac{\delta_n\sqrt{ \chi^2 \rho} }{16},\frac{\delta_n\rho }{18\bar{\sigma}(W)}\right\}$. This relation proves that the transition $b_1=(x_1,P_1)\rightarrow b_2=(x_2,P_2)$ fully resides in $\mathcal{E}_{\chi^2} (x_{n,k},P_{n,k})$, and thus it is collision-free. 
\\ \label{eq:final_in}

\end{lemma}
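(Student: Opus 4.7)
The plan is to prove the set containment $\mathcal{E}_{\chi^2}(x[\lambda], P[\lambda]) \subset \mathcal{E}_{\chi^2}(x_{n,k}, P_{n,k})$ directly via a Mahalanobis-norm triangle inequality, and then deduce collision-freeness from the fact that $\gamma^\star$ is collision-free at $(x_{n,k}, P_{n,k})$. Writing $\|v\|_Q := \sqrt{v^\top Q^{-1} v}$ for $Q \succ 0$ and decomposing $x - x_{n,k} = (x - x[\lambda]) + (x[\lambda] - x_{n,k})$, the triangle inequality for $\|\cdot\|_{P_{n,k}}$ reduces the containment to showing that, for every $x \in \mathcal{E}_{\chi^2}(x[\lambda], P[\lambda])$, the sum $\|x-x[\lambda]\|_{P_{n,k}}+\|x[\lambda]-x_{n,k}\|_{P_{n,k}}$ stays strictly below $\sqrt{\chi^2}$.

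First, I would extract the Euclidean bounds. From $x_1 \in \mathcal{B}(x_{n,k},\ell_n)$, $x_2 \in \mathcal{B}(x_{n,k+1},\ell_n)$, and $\|x_{n,k+1}-x_{n,k}\|\leq \ell_n$ (built into the partition $\mathcal{P}_n$), the triangle inequality gives $\|x_2-x_1\|\leq 3\ell_n$ and $\|x[\lambda]-x_{n,k}\|\leq 2\ell_n$ for every $\lambda\in[0,1]$. Next, I would turn the definition of $\mathcal{R}_{n,k}$ into a Loewner bound on $P[\lambda]$. Since $b_1\in\mathcal{R}_{n,k}$ yields $P_1\preceq (1-\delta_n/2)P_{n,k}$ and $\lambda\|x_2-x_1\|W\preceq 3\ell_n\bar\sigma(W)I$, Assumption~1 gives $P_{n,k}\succeq 4\rho I$, so $I\preceq (4\rho)^{-1}P_{n,k}$ and $3\ell_n\bar\sigma(W)I\preceq \tfrac{3\ell_n\bar\sigma(W)}{4\rho}P_{n,k}$. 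Substituting $\ell_n\leq \frac{\delta_n\rho}{18\bar\sigma(W)}$ makes this coefficient at most $\delta_n/24$, hence $P[\lambda]\preceq (1-\tfrac{11\delta_n}{24})P_{n,k}$.

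The covariance bound immediately yields $P_{n,k}^{-1}\preceq (1-\tfrac{11\delta_n}{24})P[\lambda]^{-1}$, so for any $x$ with $(x-x[\lambda])^\top P[\lambda]^{-1}(x-x[\lambda])< \chi^2$ one has $\|x-x[\lambda]\|_{P_{n,k}}\leq \sqrt{1-\tfrac{11\delta_n}{24}}\sqrt{\chi^2}$. Independently, $P_{n,k}\succeq 4\rho I$ together with $\|x[\lambda]-x_{n,k}\|\leq 2\ell_n$ give $\|x[\lambda]-x_{n,k}\|_{P_{n,k}}\leq \ell_n/\sqrt{\rho}$, which the bound $\ell_n\leq \frac{\delta_n\sqrt{\chi^2\rho}}{16}$ reduces to $\tfrac{\delta_n}{16}\sqrt{\chi^2}$. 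Summing and applying $\sqrt{1-x}\leq 1-x/2$ on $[0,1]$ yields
\[
\|x-x_{n,k}\|_{P_{n,k}} \leq \sqrt{\chi^2}\left(1-\tfrac{11\delta_n}{48}+\tfrac{\delta_n}{16}\right) = \sqrt{\chi^2}\left(1-\tfrac{\delta_n}{6}\right) < \sqrt{\chi^2},
\]
so $x\in\mathcal{E}_{\chi^2}(x_{n,k},P_{n,k})$, establishing the containment. Collision-freeness then transfers immediately: since $\gamma^\star$ is collision-free, $\mathcal{E}_{\chi^2}(x_{n,k},P_{n,k})$ is disjoint from every obstacle, and the containment forces the same for $\mathcal{E}_{\chi^2}(x[\lambda],P[\lambda])$ at every $\lambda\in[0,1]$. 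The only real obstacle is arithmetic rather than conceptual: the specific constants $1/16$, $1/18$ in $\ell_n$ and the factors $\delta_n/2$, $3\delta_n/2$ in the definition of $\mathcal{R}_{n,k}$ are calibrated precisely to leave positive slack of order $\delta_n$ in the final inequality, and one must carry this bookkeeping without slipping a factor.
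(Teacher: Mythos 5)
Your proposal is correct, and it follows the same two-stage skeleton as the paper's proof — the bounds $\|x_2-x_1\|\leq 3\ell_n$ and $\|x[\lambda]-x_{n,k}\|\leq 2\ell_n$, followed by a Loewner shrinkage bound $P[\lambda]\preceq(1-c\,\delta_n)P_{n,k}$ — but it finishes differently. The paper completes the argument geometrically: it shows $P[\lambda]\preceq(1-\tfrac{\delta_n}{3})P_{n,k}$, invokes the remark on semi-minor axes to lower-bound the gap between the concentric similar ellipses $\mathcal{E}_{\chi^2}(x_{n,k},P_{n,k})$ and $\mathcal{E}_{\chi^2}\bigl(x_{n,k},(1-\tfrac{\delta_n}{3})P_{n,k}\bigr)$ by $\tfrac{\delta_n}{8}\sqrt{\chi^2\underline{\sigma}(P_{n,k})}$, and then argues that translating the inner ellipse by $\|x[\lambda]-x_{n,k}\|\leq 2\ell_n\leq\tfrac{\delta_n}{8}\sqrt{\chi^2\rho}$ keeps it inside the outer one. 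You instead prove the containment pointwise via the triangle inequality in the $P_{n,k}^{-1}$-weighted norm, using the order-reversal $P[\lambda]\preceq\bigl(1-\tfrac{11\delta_n}{24}\bigr)P_{n,k}\Rightarrow P_{n,k}^{-1}\preceq\bigl(1-\tfrac{11\delta_n}{24}\bigr)P[\lambda]^{-1}$ for the first term and $P_{n,k}\succeq 4\rho I$ for the second; your arithmetic checks out and leaves a clean margin of $\tfrac{\delta_n}{6}\sqrt{\chi^2}$. Your route is arguably tidier since it avoids the separate geometric facts about ellipse semi-axes and translation (which in the paper rest on Remark~\ref{remark:semi-axis}), at the cost of slightly different constants (you exploit $P_{n,k}\succeq 4\rho I$ from Assumption~1, whereas the paper only needs $P_{n,k}\succeq\rho I$); both yield the same conclusion, and your final transfer of collision-freeness from $\mathcal{E}_{\chi^2}(x_{n,k},P_{n,k})$ — obstacle-free because $\gamma^\star$ is collision-free at $t_{n,k}$ — to $\mathcal{E}_{\chi^2}(x[\lambda],P[\lambda])$ is exactly the deduction the lemma statement intends.
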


\begin{proof}
The proof is provided in two stages. In the first stage, we show 
\begin{align}
\label{eq:intran_P}
    P[\lambda] \preceq \left(1-\frac{\delta_n}{3}\right) P_{n,k}, \quad \forall \lambda \in [0,1].
\end{align}
As the first step to prove \eqref{eq:intran_P}, we note that  
\begin{align*}
 &||x_2-x_1|| \leq \\
 &||x_{n,k}-x_1 ||+ ||x_{n,k+1}- x_{n,k} ||+||x_2-x_{n,k+1}||\\
 &\leq \ell_n + \ell_n + \ell_n  = 3\ell_n=:D_{\text{min}}.  
\end{align*}
Thus, we have
\begin{align} \nonumber
     P[\lambda] &= P_1+ \lambda ||x_2-x_1||W \\
     &\preceq  \left(1-\frac{\delta_n}{2}\right)P_{n,k}+||x_2-x_1||W \\ \nonumber
     &\preceq  \left(1-\frac{\delta_n}{2}\right) P_{n,k}+  3 \ell_n W. 
\end{align}
Hence, to establish \eqref{eq:intran_P}, it suffices to show
\begin{align} \label{eq:R_n_inter}
       \left(1-\frac{\delta_n}{2}\right) P_{n,k}+  3 \ell_n W \preceq  \left(1-\frac{\delta_n}{3}\right) P_{n,k}
\end{align}
Condition \eqref{eq:R_n_inter}
is equivalent to 
\begin{align*}
    3 \ell_n W &\preceq \left[ \left(1-\frac{\delta_n}{3}\right)- \left(1-\frac{\delta_n}{2}\right)\right]  P_{n,k}\\
    & = \frac{\delta_n}{6} P_{n,k},
\end{align*}
which trivially holds as $\ell_n \leq\frac{\delta_n\rho}{18\bar{\sigma}(W)}$. Consequently, 
\begin{align}
&\mathcal{E}_{\chi^2}(x_{n,k}, P[\lambda]) \subseteq \mathcal{E}_{\chi^2} \left(x_{n,k}, \left(1-\frac{\delta_n}{3}\right) P_{n,k}\right),\nonumber
\end{align}
for all $\lambda\in[0,1]$. The minimum distance between $\mathcal{E}_{\chi^2}(x_{n,k}, P_{n,k})$ and $\mathcal{E}_{\chi^2}\left(x_{n,k},  \left(1-\frac{\delta_n}{3}\right) P_{n,k}\right)$ is  
$\left(1-\sqrt{(1-\frac{\delta_n}{3})}\right) \sqrt{ \chi^2 \underline{\sigma}(P_{n,k})}\geq \frac{\delta_n}{8}\sqrt{ \chi^2 \underline{\sigma}(P_{n,k})}$. After linear translating $\mathcal{E}\left(x_{n,k},  \left(1-\frac{\delta_n}{3}\right) P_{n,k}\right)$  for  
$||x[\lambda]-x_{n,k}|| \leq (1-\lambda) \|x_1-x_{n,k}\|+\lambda \|x_2-x_{n,k}\| \leq (1-\lambda)\ell_n +2 \lambda (\|x_2-x_{n,k+1}\|+\|x_{n,k+1}-x_{n,k}\|) \leq (1+\lambda) \ell_n \leq 2\ell_n \leq \frac{\delta_n}{8} \sqrt{\chi^2 \rho} \leq \frac{\delta_n}{8} \sqrt{ \chi^2 \underline{\sigma}(P_{n,k})}$, the resultant ellipse $\mathcal{E}_{\chi^2}\left(x[\lambda], 
\left(1-\frac{\delta_n}{3}\right) P_{n,k}\right)$, stays inside $\mathcal{E}_{\chi^2} (x_{n,k},P_{n,k})$. Subsequently, $\mathcal{E}_{\chi^2}(x[\lambda], P[\lambda]) \subset \mathcal{E}_{\chi^2} (x_{n,k},P_{n,k})$. $\qed$
\end{proof}

Finally, we stress that as shown in proof of Lemma~\ref{lemma:rn} the distance $\hat{D}(b_1,b_2)$ between any $b_1=(x_1,P_1)\in\mathcal{R}_{n,k}$ and any $b_2=(x_2,P_2)\in\mathcal{R}_{n,k+1}$ for all $k\in[1;K_n-1]$ is less than $3\ell_n :=D_{\text{min}}$ so the connection between $b_1$ and $b_2$ will be attempted by Algorithm \ref{alg:igpp_body1}. 


\subsection{Probability of event \texorpdfstring{$E_n$}{En}\label{subsec:event_E}}
\label{sec:enent_E_n}
Let $b_i:=(x_i,P_i) \in B$ be sample belief state by Algorithm~\ref{alg:igpp_body1} at iteration $i$. Then, due to the uniform distribution the probability of event that $x$ is samples inside $\mathcal{B}(x_{n,k},\ell_n)$ is given by
\begin{align} \nonumber
    \mathbb{P}(\{x_i\in\mathcal{B}(x_{n,k},\ell_n)\})\!&=\!\frac{\text{vol}(\mathcal{B}(x_{n,k},\ell_n))}{\text{vol}(\mathcal{X}_{\text{free}})}
    \\ 
    &=\frac{\tau_d\ell^{d}_n}{\mathcal{V}_{\mathcal{X}}}
    =  \frac{\tau_d\delta_n^{d} h^d}{  \mathcal{V}_{\mathcal{X}}}=: g_1 \delta_n^{d},
    \!\nonumber   
\end{align}
where $\mathcal{V}_{\mathcal{X} }:=\text{vol}(\mathcal{X}_{\text{free}})$ and $g_1:= \frac{\tau_d  h^d}{  \mathcal{V}_{\mathcal{X}}}$.
If we define
\begin{align}
\small
    \mathcal{D}_{n, k} :=&\{P\in\mathbb{S}^d_{4\rho}:\nonumber\\
    &\left(1-\frac{3 \delta_n}{2}\right) P_{n,k} \preceq P\preceq \left(1-\frac{\delta_n}{2}\right) P_{n,k}\},\nonumber
\end{align} 
we have $\left(1-\frac{3 \delta_n}{2}\right) P_{n,k} \succeq \left(1-\frac{3}{4}\right) \underline{\sigma}(P_{n,k}) I \succeq \rho I$. Thus, Theorem~\ref{theo:cont} holds in these regions.
We have  $\textup{Tr}(P) \geq 4 (1-\frac{3 \delta_n}{2})  \rho d \geq \rho d$ for all $P \in \mathcal{D}_{n, k}$.
On the other hand, $\textup{Tr}(P)\leq (1-\frac{\delta_n}{2}) \textup{Tr}(P_{n,k}) \leq  (1-\frac{\delta_n}{2}) R \leq R $. Hence, $\mathcal{D}_{n,k} \subset \mathcal{R}_{ [(1-\frac{3 \delta_n}{2})  4\rho d, (1-\frac{\delta_n}{2}) R ]} \subset \mathcal{R}_{ [ \rho d, R ]}$.

From the definition of uniform sampling, we have 
\begin{align*}
     &\mathbb{P}(P_i\in\mathcal{D}_{n,k})
     {\geq}\frac{\text{vol}(\mathcal{D}_{n,k}\cap \mathcal{D}_{[\rho d, R]})}{\text{vol}(\mathcal{D}_{[\rho d ,R]})} = \frac{\text{vol}(\mathcal{D}_{n,k})}{\text{vol}(\mathcal{D}_{[\rho d,R]})} \\
    &=\frac{\text{vol}(\mathcal{D}_{(1-\frac{\delta_n}{2})P_{n,k}}) - \text{vol}(\mathcal{D}_{(1-\frac{3\delta_n}{2})P_{n,k}})}{\text{vol}(\mathcal{R}_{R}) - \text{vol}(\mathcal{R}_{\rho d})}\\
        &\geq\frac{V_dSb^{\frac{d(d+1)}{2}}\bigg[ (1-\frac{\delta_n}{2})^{\frac{d(d+1)}{2}}- (1-\frac{3\delta_n}{2})^{\frac{d(d+1)}{2}} \bigg]}{\left(\frac{2}{d(d+1)}\right)V_r\bigg[R^{\frac{d(d+1)}{2}}- (\rho d)^{\frac{d(d+1)}{2}} \bigg]}.
\end{align*}
where $S=S_d(1,1,1/2)$ and $b=\underset{k\in[1;d]}{\min}\;\lambda_k(P_{n,k})$.
\begin{figure}[ht!]
\centering
\includegraphics[width=1.05\columnwidth]{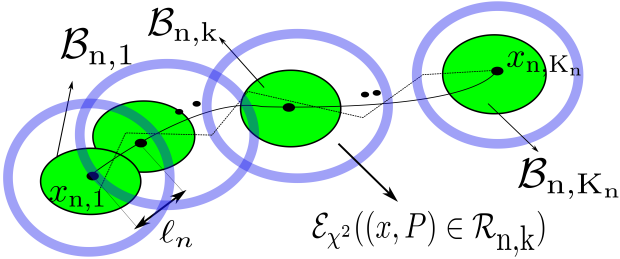}
\caption{Covering of chain $\gamma_n$ with balls of radius $\ell_n$. The event $E_{n,k}$ is the event that a sampled point $(x,P)\in\mathcal{R}_{n,k}$.}
\label{fig:covering_balls}
\end{figure}
Now, we have
\begin{align*}
    &\left(1-\frac{\delta_n}{2}\right)^{\frac{d(d+1)}{2}}-\left(1-\frac{3 \delta_n}{2}\right)^{\frac{d(d+1)}{2}}\\
    & = \bigg(\left(1-\frac{ \delta_n}{2}\right) - \left(1-\frac{3 \delta_n}{2}\right)\bigg) \\
     &\qquad \sum_{j=0}^{\frac{d(d+1)}{2}-1} \left(1-\frac{\delta_n}{2}\right)^{\frac{d(d+1)}{2}-1-j}  \left(1-\frac{3 \delta_n}{2}\right)^{j}\\
        &\geq \delta_n d \left(1-\frac{3\delta_n}{2}\right)^{\frac{d(d+1)}{2}-1}\geq   d\left(\frac{\delta_n}{2}\right)^{\frac{d(d+1)}{2}},
\end{align*}
where in the last step we use $\delta_n\leq 1/2$. 
In sum, it can be deduced
\begin{align}
    \mathbb{P}(P_i\in\mathcal{D}_{n,k}) \geq g_2 \delta_n^{\frac{d(d+1)}{2}},
\end{align}
where $g_2=V_dS_d(1,1,1/2)2^{-1}(d(d+1))V_r^{-1}(2\rho)^{\frac{d(d+1)}{2}}\bigg[R^{\frac{d(d+1)}{2}}- (\rho d)^{\frac{d(d+1)}{2}} \bigg]^{-1}$ is a constant. 

Lets define the event $E^i_{n,k}$ is as the event that the sampled belief $b_i=(x_i,P_i)$ belongs to $\mathcal{R}_{n,k}$. Then, $E_{n,k}= \cup_{i=1}^n E^i_{n,k}$ which means the event that $\mathcal{R}_{n,k}$ contains at least one sample belief $b_i \in B$. The following lemma shows that if the $\gamma$ is greater than a certain positive threshold, then the probability that event $E_n\triangleq E_{n,1}\cap E_{n,2}\dots \cap E_{n,K_n}$  occurs, i.e the event that all $R_{n,k}$'s contain at least one sampled belief, equals to one as $n$ approaches infinity.

\begin{lemma}
\normalfont If
$\gamma>\left(\frac{d^2+3d+2}{g_1 g_2 d(d+3)} \right)^\frac{2}{d(d+3)}$,
then the following holds true
\begin{align}
        \underset{n\rightarrow\infty}{\lim}\; \mathbb{P}\left(E_n \right)=1\nonumber.
\end{align}
\label{lemma:event_E_n}
\end{lemma}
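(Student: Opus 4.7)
The plan is to bound $\mathbb{P}(E_n^c)$ via a union bound over the $K_n$ regions $\{\mathcal{R}_{n,k}\}$, to turn each single-region failure probability into a Chernoff-style exponential using independence of the $n$ i.i.d.\ belief samples, and then to choose $\gamma$ so that the resulting upper bound is summable in $n$, allowing a first Borel--Cantelli conclusion. I would start by writing
\[
\mathbb{P}(E_n^c) = \mathbb{P}\Bigl(\bigcup_{k=1}^{K_n} E_{n,k}^c\Bigr) \leq \sum_{k=1}^{K_n} \mathbb{P}(E_{n,k}^c),
\]
and, since each $E_{n,k}^c$ is the event that none of the $n$ i.i.d.\ samples $b_i=(x_i,P_i)$ generated by \textup{SampleFree} lands in $\mathcal{R}_{n,k}$, independence of the samples would give $\mathbb{P}(E_{n,k}^c)=(1-p_{n,k})^n\leq e^{-n p_{n,k}}$, where $p_{n,k}:=\mathbb{P}(b_i\in\mathcal{R}_{n,k})$.

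The next step is to obtain an explicit lower bound on $p_{n,k}$. Because $x_i$ and $P_i$ are drawn independently and uniformly on $\mathcal{X}_{\text{free}}$ and on $\mathcal{R}_{(\rho d,R]}$, I would factor $p_{n,k}$ and invoke the two probability estimates already established just above the lemma, $\mathbb{P}(x_i\in\mathcal{B}(x_{n,k},\ell_n))\geq g_1\delta_n^{d}$ and $\mathbb{P}(P_i\in\mathcal{D}_{n,k})\geq g_2\delta_n^{d(d+1)/2}$, obtaining---using the identity $d+\tfrac{d(d+1)}{2}=\tfrac{d(d+3)}{2}$---the combined bound $p_{n,k}\geq g_1 g_2\,\delta_n^{d(d+3)/2}$. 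Substituting $\delta_n=\gamma(\log n/n)^{2/(d(d+3))}$ then cleanly produces $n\,p_{n,k}\geq g_1 g_2 \gamma^{d(d+3)/2}\log n$, so $\mathbb{P}(E_{n,k}^c)\leq n^{-g_1 g_2 \gamma^{d(d+3)/2}}$. Meanwhile $\ell_n=h\,\delta_n$ and $K_n=\lceil \ell^\star/\ell_n\rceil$ give $K_n\leq C\,n^{2/(d(d+3))}$ for a constant $C$ depending only on $\ell^\star,h,\gamma$. The union bound therefore yields
\[
\mathbb{P}(E_n^c)\;\leq\; C\,n^{\,2/(d(d+3))\,-\,g_1 g_2 \gamma^{d(d+3)/2}}.
\]

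I would close the argument by the first Borel--Cantelli lemma: the stated hypothesis rearranges to $g_1 g_2 \gamma^{d(d+3)/2}>\tfrac{d^2+3d+2}{d(d+3)}=1+\tfrac{2}{d(d+3)}$, which forces the exponent of $n$ in the displayed bound to be strictly less than $-1$, so $\sum_n \mathbb{P}(E_n^c)<\infty$; hence $\mathbb{P}(E_n^c\text{ i.o.})=0$, and in particular $\mathbb{P}(E_n)\to 1$. The main obstacle is the exponent bookkeeping rather than any deep probabilistic input: the $\log n$ factor inside $\delta_n$ is precisely what converts the exponential $e^{-n p_{n,k}}$ into a polynomial-in-$n$ rate, and the stated threshold on $\gamma$ is dictated exactly by the requirement that this polynomial decay beat the polynomial growth of $K_n$ by one full power of $n$---the ``$+1$'' encoded in $d^2+3d+2=d(d+3)+2$ being the summability margin demanded by Borel--Cantelli rather than by mere convergence in probability.
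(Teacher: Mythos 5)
Your proposal is correct and follows essentially the same route as the paper: a union bound over the $K_n$ regions, the elementary bound $(1-p_{n,k})^n\leq e^{-np_{n,k}}$ combined with the lower bounds $g_1\delta_n^{d}$ and $g_2\delta_n^{d(d+1)/2}$ on the mean and covariance sampling probabilities, substitution of $\delta_n$ to obtain polynomial decay, the bound $K_n\lesssim n^{2/(d(d+3))}$ from $\ell_n=h\delta_n$, and summability plus Borel--Cantelli under the stated threshold on $\gamma$. The exponent bookkeeping, including the observation that the hypothesis is exactly $g_1 g_2\gamma^{d(d+3)/2}>1+\tfrac{2}{d(d+3)}$, matches the paper's argument.
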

\begin{proof}
The probability of event $E^c_{n,k}$ for all $k\in[1;K_n]$ is given as follows:
\begin{align}
   &\mathbb{P}\left(E^c_{n,k}\right)
   = \prod_{i=1}^{n}(1-\mathbb{P}\left(E^i_{n,k}\right))\nonumber\\
   &\leq \left(1- \mathbb{P}(P\in\mathcal{D}_{n,k}) \;     \mathbb{P}(x\in\mathcal{B}_{n,k})\right)^n\nonumber\\ \nonumber
   &\leq\left[1- g_2 \delta_n^d g_1 \delta_n^{\frac{d(d+1)}{2}} \right]^n = \left[1- g_1 g_2 \delta_n^{\frac{d(d+3)}{2}} \right]^n.
    \end{align}
Using the fact that $(1-x)\leq e^{-x}$ for $x\in(0,1)$, and substituting $\delta_n$ defined as
\begin{align}
    \delta_n=\min\;\left\{\gamma\left(\frac{\text{log}n}{n}\right)^\frac{2}{d(d+3)},\frac{1}{2}\right\},\nonumber
\end{align}
we have
\begin{align} \nonumber
 \mathbb{P}\left(E^c_{n,k}\right)&\leq \left(1-g_1 g_2
 {\gamma^{\frac{d(d+3)}{2}}\frac{\text{log}n}{n}} \right)^{n} \\ \nonumber
 &\leq \text{exp} \left( - (\log n) g_1 g_2 \gamma^{\frac{d(d+3)}{2}} \right)
 = n^{- g_1 g_2 \gamma^{\frac{d(d+3)}{2}}}.
\end{align}
Now, the event $E_n^c=\bigcup_{k=1}^n E^c_{n,k}$ is upper bounded as follows:
\begin{align}
  \mathbb{P}\left(E^c_{n}\right)& \!=\!\mathbb{P}\left(\bigcup_{k=1}^{K_n} E^c_{n,k}\right) 
 \! \leq \! \sum_{k=1}^{K_n}\mathbb{P}\left( E^c_{n,k}\right)\!\leq \! K_n n^{- g_1 g_2 \gamma^{\frac{d(d+3)}{2}}}.  \nonumber
\end{align}
where $K_n=\lfloor\frac{\ell^\star}{\ell_n}\rfloor\leq\frac{\ell^\star}{\ell_n}$.
Since $\delta_n h\leq \ell_n$, 
we have
\begin{align} \nonumber
    \mathbb{P}\left(E^c_{n}\right)
    &\leq \frac{ \ell^\star}{h \delta_n }  n^{- g_1 g_2 \gamma^{\frac{d(d+3)}{2}}}  \\ \label{eq:P_c} 
     &= \frac{ \ell^\star}{\gamma h} (\log n)^{-\frac{2}{d(d+3)}} n^{- g_1 g_2 \gamma^{\frac{d(d+3)}{2}}+ \frac{2}{d(d+3)}}. 
\end{align}
If the power of $n$ in \eqref{eq:P_c} is less than $-1$, which is equivalent to
\[
\gamma>\left(\frac{d^2+3d+2}{g_1 g_2 d(d+3)} \right)^\frac{2}{d(d+3)},
\]
we have $\sum_{n=1}^\infty  \mathbb{P}\left(E^c_{n}\right)<\infty$. Consequently, 
$\underset{n\rightarrow\infty}{\lim}\mathbb{P}\left(E^c_{n}\right)=1$,
by Borel Cantelli lemma \cite{grimmett2020probability} which completes the proof. $\qed$
\end{proof}

\subsection{Construction of chain \texorpdfstring{$\gamma'_{n,k}$}{g'}}
\label{subsec:gamma_def}
For each $n\in\mathbb{N}$, define a collision-free path $\gamma_n^{\star\star}: [0,1]\rightarrow \mathbb{R}^d\times \mathbb{S}_\rho^d$ by $\gamma_n^{\star\star}(t)=(x^\star(t), (1-\delta_n) P^\star(t))$ (See Figure~\ref{fig:1}).
It is trivial to check $ \underset{n\rightarrow\infty}{\lim} \gamma^{\star\star}_n = \gamma^\star$ and thus $\underset{n\rightarrow\infty}{\lim}c(\gamma^{\star\star}_n) = c^\star$.

We define a chain $\gamma'_n := (x_{n,k}, (1-\delta_n) P_{n,k})$, $k\in[1;K_n]$, which is the chain generated by partition $\mathcal{P}_n$ on path $\gamma_n^{\star\star}(t)$. It is easy to verify that $(x_{n,k}, (1-\delta_n) P_{n,k}) \in \mathcal{R}_{n,k}$. Thus, from Lemma~\ref{lemma:rn} one can deduce that $\gamma'_n$ is collision-free.   
Using $\{(x_{n,k}, P'_{n,k}= (1-\delta_n) P_{n,k})\}_{k=1}^{K_n}$,
we define path $\widetilde{\gamma}_n(t)=(\widetilde{x}_n(t), \widetilde{P}_n(t))$ as
\begin{align*}
\widetilde{x}_n(t)&= \frac{t_{n,k+1}-t}{t_{n,k+1}-t_{n,k}}x_{n,k}+\frac{t-t_{n,k}}{t_{n,k+1}-t_{n,k}}x_{n,k+1} \\
\widetilde{P}_n(t)&=P'_{n,k}+\|\tilde{x}_n(t)-x'_{n,k}\| W
\end{align*}
for $t_{n,k}\leq t < t_{n,k+1}, k\in[1;K_n-1]$ with $\widetilde{x}(1)=x_{n,K_n}$, $\widetilde{P}(1)=P'_{n,K_n}$.
Notice that we have
\begin{equation}
\label{eq:c_gamma'}
c^\star \leq c(\widetilde{\gamma}_n)=c(\gamma'_n)\leq c(\gamma^{\star\star}_n)
\end{equation}
since $\widetilde{\gamma}_n$ is collision-free path from initial belief state to the goal region (the first inequality) and
\begin{align}
c(\gamma'_n)=c(\gamma^{\star\star}_n;\mathcal{P}_n) \leq \sup_{\mathcal{P}}c(\gamma^{\star\star}_n;\mathcal{P})=c(\gamma^{\star\star}_n).
\label{eqn:cost_gamma_dash_equals_gamma}
\end{align}
Since $\underset{n\rightarrow\infty}{\lim} c(\gamma^{\star\star}_n)=c^\star$, \eqref{eq:c_gamma'} implies $\underset{n\rightarrow\infty}{\lim} c(\gamma'_n)=c^\star$.



\subsection{Convergence to optimal path \texorpdfstring{ $\gamma^\star$}{g}}
\label{sec:convergence_to_optimal}

In subsection \ref{sec:enent_E_n}, we showed $ \underset{n\rightarrow\infty}{\lim}\; \mathbb{P}\left(E_n \right)=1\nonumber$. In this subsection, we assume $E_n$ has occurred and show almost surely $G$ contains a path whose cost is arbitrarily close to $c^\star$ in the limit of large $n$. Let the set of paths generated by the IG-PRM* algorithm be denoted by $P_n$ and let $\gamma^p_n$ be closest to $\gamma'_n$ in terms of bounded variation i.e., $\gamma^p_n=\underset{\gamma^p\in P_n}{\argmin}\;\|\gamma^p-\gamma'_n\|_{TV}$. Then, we have the following lemma.
\begin{lemma}
$\mathbb{P}\left(\left\{\underset{n\rightarrow\infty}{\lim}\;\left\|\gamma^p_{n}-\gamma'_{n}\right\|_{TV}=0\right\}\right)=1$
\label{lemma:arbritrarily_close}
\end{lemma}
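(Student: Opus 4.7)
I would construct a concrete candidate path $\gamma^c_n\in P_n$ whose bounded-variation distance to $\gamma'_n$ tends to zero almost surely; then, since by definition $\gamma^p_n$ minimizes $\|\cdot-\gamma'_n\|_{TV}$ over $P_n$, we immediately obtain $\|\gamma^p_n-\gamma'_n\|_{TV}\le\|\gamma^c_n-\gamma'_n\|_{TV}\to 0$. The whole argument is carried out on the event $E_n$ from Section~\ref{subsec:event_E}, which holds with probability tending to one by Lemma~\ref{lemma:event_E_n}, together with an auxiliary concentration event introduced below.

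\textbf{Step 1 (path on the graph).} Conditioning on $E_n$, for each $k\in[1;K_n]$ pick a sample $b^s_{n,k}=(\tilde x_{n,k},\tilde P_{n,k})\in\mathcal{R}_{n,k}\cap B$. The triangle-inequality computation already used inside the proof of Lemma~\ref{lemma:rn} gives $\hat{\mathcal{D}}(b^s_{n,k},b^s_{n,k+1})\le 3\ell_n=D_{\min}$, so the \textsc{Near} call in Algorithm~\ref{alg:igpp_body1} inspects this pair, and Lemma~\ref{lemma:rn} certifies that the \textsc{CollisionFree} check passes. Concatenating the resulting edges, plus the analogous connections from $b_{\mathrm{init}}$ to $b^s_{n,1}$ and from $b^s_{n,K_n}$ to $b_{\mathrm{final}}$, yields a path $\gamma^c_n\in P_n$.

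\textbf{Step 2 (TV estimate via refined sample selection).} Both $\gamma^c_n$ and $\gamma'_n$ are piecewise linear on the common time grid $\{t_{n,k}\}$, so the difference is piecewise linear with values $\Delta_k:=(\tilde x_{n,k}-x_{n,k},\tilde P_{n,k}-(1-\delta_n)P_{n,k})$ at the nodes, and $\|\gamma^c_n-\gamma'_n\|_{TV}=\sum_k\|\Delta_{k+1}-\Delta_k\|_\ast$ for the weighted norm $\|\cdot\|_\ast$ from the TV definition. Mere membership $b^s_{n,k}\in\mathcal{R}_{n,k}$ gives only $\|\Delta_k\|_\ast=O(\delta_n)$, and because $K_n=\Theta(1/\delta_n)$ this accumulates to $O(1)$. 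The remedy is to take $b^s_{n,k}$ to be the sample in $\mathcal{R}_{n,k}$ \emph{closest}, in the product metric, to $(x_{n,k},(1-\delta_n)P_{n,k})$. Since $\mathbb{P}(b_i\in\mathcal{R}_{n,k})\ge g_1 g_2\delta_n^{d(d+3)/2}\sim (\log n)/n$, a multiplicative Chernoff bound (enlarging $\gamma$ slightly if necessary) shows that each $\mathcal{R}_{n,k}$ contains at least $\Theta(\log n)$ samples with probability $1-O(n^{-c})$, and a union bound over the $K_n=O((n/\log n)^{2/(d(d+3))})$ regions keeps the total failure probability summable in $n$. Within a joint region of volume $\Theta(\delta_n^{d(d+3)/2})$, the nearest of $\Theta(\log n)$ uniform samples to a prescribed interior point lies at distance $O(\delta_n(\log n)^{-2/(d(d+3))})$, which translates into $\|\Delta_k\|_\ast=O(\ell_n(\log n)^{-2/(d(d+3))})$, and hence $\|\gamma^c_n-\gamma'_n\|_{TV}=O((\log n)^{-2/(d(d+3))})\to 0$. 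Borel--Cantelli, applied to the union of the failure events from Lemma~\ref{lemma:event_E_n} and the Chernoff concentration, upgrades this to almost-sure convergence.

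\textbf{Main obstacle.} The crux is precisely the refinement in Step~2: Lemma~\ref{lemma:event_E_n} delivers only \emph{one} sample per $\mathcal{R}_{n,k}$, which is not enough because the $K_n$ per-node contributions then accumulate to a TV bound of order unity. Upgrading from existence to concentration ($\Theta(\log n)$ samples per region), controlling the nearest-neighbor distance uniformly across the growing collection of indices $k$, and assembling a Borel--Cantelli argument from summable failure probabilities is where the genuine work lies. Once that machinery is in place, Step~1 guarantees $\gamma^c_n\in P_n$, the TV-optimality of $\gamma^p_n$ closes the gap, and the lemma follows.
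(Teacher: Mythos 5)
Your Step 1 is fine and matches what the paper does implicitly, and you correctly diagnose the central obstacle: one sample per $\mathcal{R}_{n,k}$ only gives per-node error $O(\ell_n)=O(\delta_n)$, which over $K_n=\Theta(1/\delta_n)$ nodes accumulates to a TV bound of order one. But your remedy in Step 2 does not go through as sketched. To place a sample within distance $\epsilon\,\delta_n$ (in the product metric) of the prescribed center of a region whose total sampling probability is $p_n\approx g_1g_2\delta_n^{d(d+3)/2}=\Theta(\log n/n)$, the relevant sub-region has probability $\approx \epsilon^{d(d+3)/2}p_n$, so the chance that \emph{none} of the $n$ samples lands there is roughly $n^{-\epsilon^{d(d+3)/2}g_1g_2\gamma^{d(d+3)/2}}$. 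With your choice $\epsilon=(\log n)^{-2/(d(d+3))}$ the exponent is $g_1g_2\gamma^{d(d+3)/2}/\log n\to 0$, i.e.\ the per-region failure probability tends to a constant (indeed to one), and the union bound over the $K_n\sim n^{2/(d(d+3))}$ regions collapses; the nearest-of-$\Theta(\log n)$-samples scale you quote is an expected, per-region statement and cannot be made uniform over $k$ with summable failure probability. Even for a fixed $\epsilon<1$ you need $\epsilon^{d(d+3)/2}g_1g_2\gamma^{d(d+3)/2}>1+\tfrac{2}{d(d+3)}$, which forces $\gamma$ to grow without bound as $\epsilon\to 0$; so "enlarging $\gamma$ slightly" is not a harmless fix --- $\gamma$ is a fixed algorithm parameter with the threshold given in Section 5, and your argument would prove the lemma only for a strictly larger (and, to reach the limit, unbounded) threshold. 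A repair along your lines would have to bound the \emph{sum} of nearest-neighbor distances over the $K_n$ regions (expectation plus a concentration argument across regions), which is precisely the machinery you flag as "the genuine work" but do not supply.

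For contrast, the paper sidesteps multiple-samples-per-region entirely. It introduces, for an arbitrary $\beta\in(0,1]$, shrunken regions $\mathcal{R}^{\beta}_{n,k}\subseteq\mathcal{R}_{n,k}$, lets $M_n$ count the regions whose $\beta$-shrunken core contains no sampled vertex, and bounds the TV distance on the event $\{M_n\le\alpha K_n\}\cap E_n$ by $K_n\bigl(\alpha c\ell_n+(1-\alpha)\beta c\ell_n\bigr)\le c(\alpha+\beta)L$: missed regions still contribute only $c\ell_n$ thanks to $E_n$, hit regions contribute $\beta c\ell_n$. Markov's inequality applied to $\mathbb{E}[M_n]$ shows $\mathbb{P}(M_n\ge\alpha K_n)\to 0$ for every fixed $\alpha,\beta$, and since $\alpha,\beta$ are arbitrary the TV distance tends to zero almost surely. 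This two-parameter argument uses only the single-sample probability estimates already established for Lemma \ref{lemma:event_E_n} and works at the paper's stated threshold for $\gamma$, which is exactly what your Chernoff/nearest-neighbor route fails to deliver.
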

\begin{proof}
Define set $\mathcal{R}^{\beta}_{n,k}\subseteq \mathcal{R}_{n,k} $ 
\begin{align} \nonumber
    &\mathcal{R}^{\beta }_{n,k}=\Biggl\{(x,P): x\in\mathcal{B}\left(x_{n,k},\beta \ell_n\right),\\ \nonumber
    &{\left(1-\delta_n-\beta \frac{\delta_n}{2}\right)}P_{n,k}\preceq P \preceq  \left(1-\delta_n+\beta \frac{\delta_n}{2}\right) P_{n,k}\Biggl\},
\end{align}
where $0<\beta\leq1$. 
Define $I_{n,k}$ as follows:
\begin{align}
    I_{n, k}:= \begin{cases}1, & \text { if }\mathcal{R}^{\beta}_{n,k} \cap V^{\mathrm{IG-PRM}^{*}}=\emptyset, \\ 0, & \text { otherwise. }\end{cases}
\end{align}
We define $M_n=\sum_{k=1}^{K_n} I_{n,k}$, and examine the event $\{M_n\leq \alpha K_n\}$ which means the event that at least $\alpha$ fraction of all $K_n$ regions (i.e., $R_{n,k}$s regions)  do not contain any sampled belief $b_i \in B$. If 
$\mathcal{R}^{\beta}_{n,k}$ contains
a sampled belief state $(x,P)$, we have \begin{align*}
     \|x-x_{n,k}\| & \leq \beta \ell_n,  \\
    \|P-P_{n,k}\|_F &\leq  \|(1-\delta_n + \beta \frac{\delta_n}{2})  P_{n,k}- (1-\delta_n)P_{n,k}\|_F \\
    &\leq \left((1-\delta_n + \beta \frac{\delta_n}{2})- (1-\delta_n) \right) \|P_{n,k}\|_F\\
    & \leq \beta \frac{\delta_n}{2}  \|P_{n,k}\|_F \\
    & \leq \beta\frac{\delta_n}{2} \|P_{n,k}\|_F \leq \beta \frac{ R}{2h} \ell_n ,
\end{align*}
which yields $\hat{\mathcal{D}}((x,P), (x_{n,k},P_{n,k})) \leq  \beta c \ell_n$, where $c:= 1+R/2h$. For $k$'s that no sample belief is inside $\mathcal{R}^{\beta}_{n,k}$, we can assume there exists a sampled belief $(x,P)$ inside $\mathcal{R}_{n,k}$, because we have assumed $E_n$ has occurred. For such $k$s, we have $\hat{\mathcal{D}}((x,P), (x_{n,k},P_{n,k})) \leq  c \ell_n$ because $ \mathcal{R}_{n,k}= \mathcal{R}^{\beta=1}_{n,k}$.
Considering the path generated by connecting the sampled belief in consequent $\mathcal{R}^\beta_{n,k}$ or $\mathcal{R}_{n,k}$ regions, we have 
\begin{align}
\left\|\gamma^p_{n}-\gamma'_{n}\right\|_{TV}& \leq  \sum_{k=1}^{K_n}  \hat{\mathcal{D}}((x,P), (x_{n,k},P_{n,k})\nonumber\\
&\leq K_n (\alpha c \ell_n + (1-\alpha) \beta c \ell_n) \nonumber\\
&\leq c(\alpha+\beta) L,
\end{align}
where { $L=\underset{n\in\mathbb{N}}{\sup}\;\gamma'_n$}. Therefore,  
\begin{align}
&\left\{M_{n} \leq \alpha K_{n}\right\} \subseteq
\left\{\left\|\gamma^p_{n}-\gamma'_{n}\right\|_{TV} \leq c(\alpha+\beta) L \right\}\nonumber.
\end{align}
Taking the complement of both sides of above equation and using the monotonicity of probability measures,
\begin{align}
&\mathbb{P}\left(\!\left\{\left\|\gamma^p_{n}-\gamma'_{n}\right\|_{TV}\!>\! c(\alpha+\beta) L \right\}\!\right)
\!\leq\!\mathbb{P}\left(\!\left\{M_{n} \!\geq\! \alpha K_{n}\right\}\!\right).
\label{eqn:complement}
\end{align}
Since \eqref{eqn:complement} is true for any $\alpha,\beta\in(0,1)$, it remains to show that $\mathbb{P}\left(\left\{M_{n} \geq \alpha K_{n}\right\}\right) $ converges to zero. 
Let's denote $\ell^\beta_{ n,k}:=\beta \ell_n)$, $\mathcal{D}^\beta_{n,k}= \{ P\succ 0: {(1-\delta_n-\beta \frac{\delta_n}{2})}P_{n,k}\preceq P \preceq  (1-\delta_n+\beta \frac{\delta_n}{2}) P_{n,k}\}$. Then, expected value of $I_{n,m}$ can be computed as 
\begin{align}
    &\mathbb{E}[I_{n,m}]
    = \mathbb{P}(\{I_{n,m}=1\}) \nonumber\\
    &\leq \left( 1- \mathbb{P}(x\in\mathcal{B}(x,\ell^\beta_{n,k})) \times  \mathbb{P}(P\in\mathcal{D}^\beta_{n,k}) \right)^n\nonumber\\
    &\leq \left( 1 - g_1 \beta^d \delta^{d}_{n}  \times g_2 \beta \delta_n^{\frac{d(d+1)}{2}}  \right)^n\nonumber\\
    &=
    \left(1-g_1 g_2 \beta^{d+1} \delta_n^{\frac{d(d+3)}{2}}\right)^{n}\nonumber\\ \nonumber
    &\leq \text{exp} \left(-n g_1 g_2 \beta^{d+1} \delta_n^{\frac{d(d+3)}{2}} \right)\\
    &=   \text{exp} \left(-g_1 g_2 \beta^{d+1} \gamma^{\frac{d(d+3)}{2}} \frac{\log n}{n}  n\right) \nonumber \\ \nonumber
    &\leq \text{exp} \left(-\frac{-g_1g_2\beta^{d+1}(d^2+3d+2)}{d(d+3)}\log n\right)\nonumber\\
    &= n^{\frac{-\beta^{d+1}g_1g_2(d^2+3d+2)}{d(d+3)}}. \nonumber
\end{align}
Thus, $\mathbb{E}[M_n]=\sum_{m=1}^{K_n} \mathbb{E}[I_{n,m}]= K_n n^{\frac{-\beta^{d+1}g_1g_2(d^2+3d+2)}{d(d+3)}}$. By Markov’s inequality, it follows that
\begin{align*}
    \mathbb{P}\left(\left\{M_{n} \geq \alpha K_{n}\right\}\right) &\leq \frac{\mathbb{E}[M_n]}{\alpha K_n} \leq \frac{K_n n^{\frac{-\beta^{d+1}g_1g_2(d^2+3d+2)}{d(d+3)}}}{\alpha K_n}\nonumber\\
    &= \frac{ n^{\frac{-\beta^{d+1}g_1g_2(d^2+3d+2)}{d(d+3)}}}{\alpha}.  
\end{align*}
it is trivial to verify that for fixed $\alpha$, the last expression tends to $0$ as $n$ tends to $\infty$. Since this argument holds for all small $\alpha, \beta$, 
\eqref{eqn:complement} implies for all $\epsilon>0$,
$$
\mathbb{P}\left(\left\{\underset{n\rightarrow\infty}{\lim}\;\left\|\gamma^p_{n}-\gamma'_{n}\right\|_{TV}=0\right\}\right)=1. 
$$
$\qed$
\end{proof}
As shown in Subsection~\ref{subsec:gamma_def} $ \underset{n\rightarrow\infty}{\lim}c(\gamma'_n)=c^\star$.
Using this result, the fact that $\mathbb{P}\left(\left\{\underset{n\rightarrow\infty}{\lim}\;\left\|\gamma^p_{n}-\gamma'_{n}\right\|_{TV}=0\right\}\right)=1$, and the continuity of the cost function (Theorem~\ref{theo:cont}), we can conclude that
      $ \mathbb{P}(\{\underset{n\rightarrow\infty}{\lim}c(\gamma^p_n)=c^\star\})=1. $
\section{Loss-less version of IG-PRM*\label{sec:lossless_ri_prm_star}}
Note that only lossless transitions between edges of a feasible path are meaningful for physical systems \cite{pedram2021gaussian}. However, the path generated by IG-PRM* (Algorithm \ref{alg:igpp_body1}) is not guaranteed to be finitely lossless. Although there exists an analytical solution to perform lossless refinement (Algorithm \ref{alg:lossless_modify_ri_prm_star}) for the entire path using the information geometric cost defined in \eqref{eq:def_D}, an analytical solution might not exist for other information metrics such as Hellinger or Wasserstein. To address this limitation, we propose the finitely lossless version of the IG-PRM* algorithm termed Lossless IG-PRM* (Algorithm \ref{alg:lossless_ri_prm}) where the edge between two belief states is guaranteed to be finitely lossless and is devoid of any lossless refinement step. In addition, we prove the asymptotic optimality of Lossless IG-PRM*. 

Algorithm \ref{alg:lossless_ri_prm} represents the pseudocode for the Lossless IG-PRM*. The function $\operatorname{CollisionFree}(b, b_j, \chi^2)$ checks whether the edge from belief state $b$ to $b_j$ is collision-free. The function $\operatorname{Lossless}(b, b_j)$ checks whether the transition from $b$ to $b_j$ is lossless. The search algorithm (Algorithm \ref{algo:search}) is used to compute the lossless and collision-free chain $\gamma''_n$.

\begin{algorithm}
    { 
    $\left.B \leftarrow\left\{b_{\text{init}}\right\} \cup\{\text {SampleFree}_{i}\right\}_{i\in [1;n]} ; E \leftarrow \emptyset$\;
    \For{$b\in B$}{
    $\quad B_{\text{nbors}} \leftarrow \operatorname{Near}\left(B, b, D_{\text{min}}\right)$\;
    \For{$b_j\in B_{\text{nbors}}$}{
    \If{$\operatorname{CollisionFree}(b, b_j, \chi^2)\; \operatorname{and}\; \operatorname{Lossless}(b, b_j)$  }{$E \leftarrow E \cup\{(b, b_j)\}$\;}
    }
    }
    \Return $G=(B, E)$
    }
\caption{Lossless IG-PRM*}
\label{alg:lossless_ri_prm}
\end{algorithm}

Towards the aim of proving asymptotic optimality, we define modified $\ell^L_n,\;\mathcal{R}^L_{n,k},\;\mathcal{D}^L_{n,k},\;\mathcal{R}^{L,s}_{n,k},\;\mathcal{R}^{L,\beta s}_{n,k}$ and $\delta^L_n$ as follows\footnote{superscript $L$ is used to indicate regions/parameters defined for Lossless IG-PRM* algorithm}:
\begin{align}
     & \ell^L_n:=\min\left\{  \delta^L_n\left(1-\frac{3}{4}\delta^L_n\right)\frac{\rho}{3\bar{\sigma}(W)},{\frac{\delta^L_n}{9}\rho},{\frac{\delta^L_n}{3}\chi^{\frac{1}{4}}{\rho}^{\frac{1}{8}}},q_n\right\},\nonumber\\
     &\quad\geq \delta^L_n h^L \nonumber\\
             & \mathcal{B}^s(x_{n,k},\ell^L_n):=\mathcal{B}(x_{n,k},\frac{(1-(\theta^1_{n,k})^2)}{2}(\ell^L_n)^4)\nonumber\\
        &\mathcal{R}^L_{n,k}=\{(x,P): x\in \mathcal{B}(x_{n,k},(\ell^L_n)^4), P\preceq {\left(\theta^0_{n,k}\right)^2}P''_{n,k} \}\nonumber\\
           &\mathcal{R}^{L,s}_{n,k}=\{(x,P):x\in \mathcal{B}^s(x_{n,k},\ell^L_n)\},\nonumber\\
       &\mathcal{D}^L_{n, k}:=\{P\succeq0: {(\theta^1_{n,k})^2} P''_{n,k} \preceq P\preceq {(\theta^2_{n,k})^2} P''_{n,k}\},\nonumber\\
&\;\quad\quad{(\theta^1_{n,k})^2}P''_{n,k}\preceq P\preceq {(\theta^2_{n,k})^2} P''_{n,k},\} \nonumber\\
            &\mathcal{R}^{L,\beta s}_{n,k}=\Biggl\{(x,P): x\in\mathcal{B}\left(x_{n,k},\frac{\beta(1-(\theta^1_{n,k})^2)}{2}(\ell^L_n)^4\right),\nonumber\\ 
    &{(\beta\theta^1_{n,k}+(1-\beta)\theta^2_{n,k})^2}P''_{n,k}\preceq P \preceq {(\theta^2_{n,k})^2}P''_{n,k}\Biggl\}\nonumber\\
        &\delta^L_n=\min\;\left\{\gamma\left(\frac{\text{log}n}{n}\right)^\frac{1}{d(2d+8)},1\right\}\nonumber
\end{align}
where $q_n=\min\left\{\frac{\delta_n\sqrt{ \chi^2 \rho} }{16},\frac{\delta_n\rho }{18\bar{\sigma}(W)}\right\},\;\theta^1_{n,k}=\theta^2_n-(k+1)\Delta$, $\theta^2_{n,k}=\theta^2_n-k\Delta$, and $\Delta=(\theta^2_n-\theta^1_n)/K^L_n$. Here, $\theta^2_{n}:=\sqrt{{1}-\frac{\ell^2_n}{\chi^{\frac{1}{2}}\rho^{\frac{1}{4}}}}$, and $\theta^1_{n}:=\sqrt{1-(\delta^L_n)^2+\frac{\ell^2_n}{\chi^{\frac{1}{2}}\rho^{\frac{1}{4}}}}$, $h^L:=\min\left\{\frac{\rho}{12\bar{\sigma}(W)},\frac{\rho}{9}, \frac{\rho^{\frac{1}{4}}}{3}\right\}$ and $\beta\in(0,1)$ where $\theta^0_{n,k}$ is defined as follows
\begin{align}
   & \theta_{n,k}^{0}= 1- \frac{(\ell^L_n)^4}{\sqrt{\chi^2\underline{\sigma}(P''_{n,k})}}\nonumber   
\end{align}

The outline of the asymptotic optimality proof is as follows.
The quantities $\gamma^\star$, $\gamma^\star(t)$, $\gamma_n'$ and $\mathcal{P}_n$ are same as that described in Section \ref{subsection:outline_of_proof}. However, for Lossless IG-PRM*, in addition to proving that the entire path is collision-free, we also need to show that the path is finitely lossless. This necessitates us to modify some constants and regions compared to that mentioned in Section \ref{subsection:outline_of_proof} and the mathematical reason behind it would be clear in subsequent sections.
As $\gamma_n'$ might not be finitely lossless, we first consider the lossless refinement of $\gamma_n'$ to obtain a finitely lossless and collision-free chain $\gamma_n''\triangleq (x_{n,k},P''_{n,k}),\;k\in[1;K^L_n]$ and show the cost of $\gamma''$ converges to the cost of the optimal path $c^\star=c(\gamma^\star)$ in the limit of $n$ tending to infinity.
\begin{figure*}[ht]
        \centering
 \captionsetup[subfigure]{justification=centering}
 \centering
 \begin{subfigure}{0.24\textwidth}
{\includegraphics[width=4.5cm]{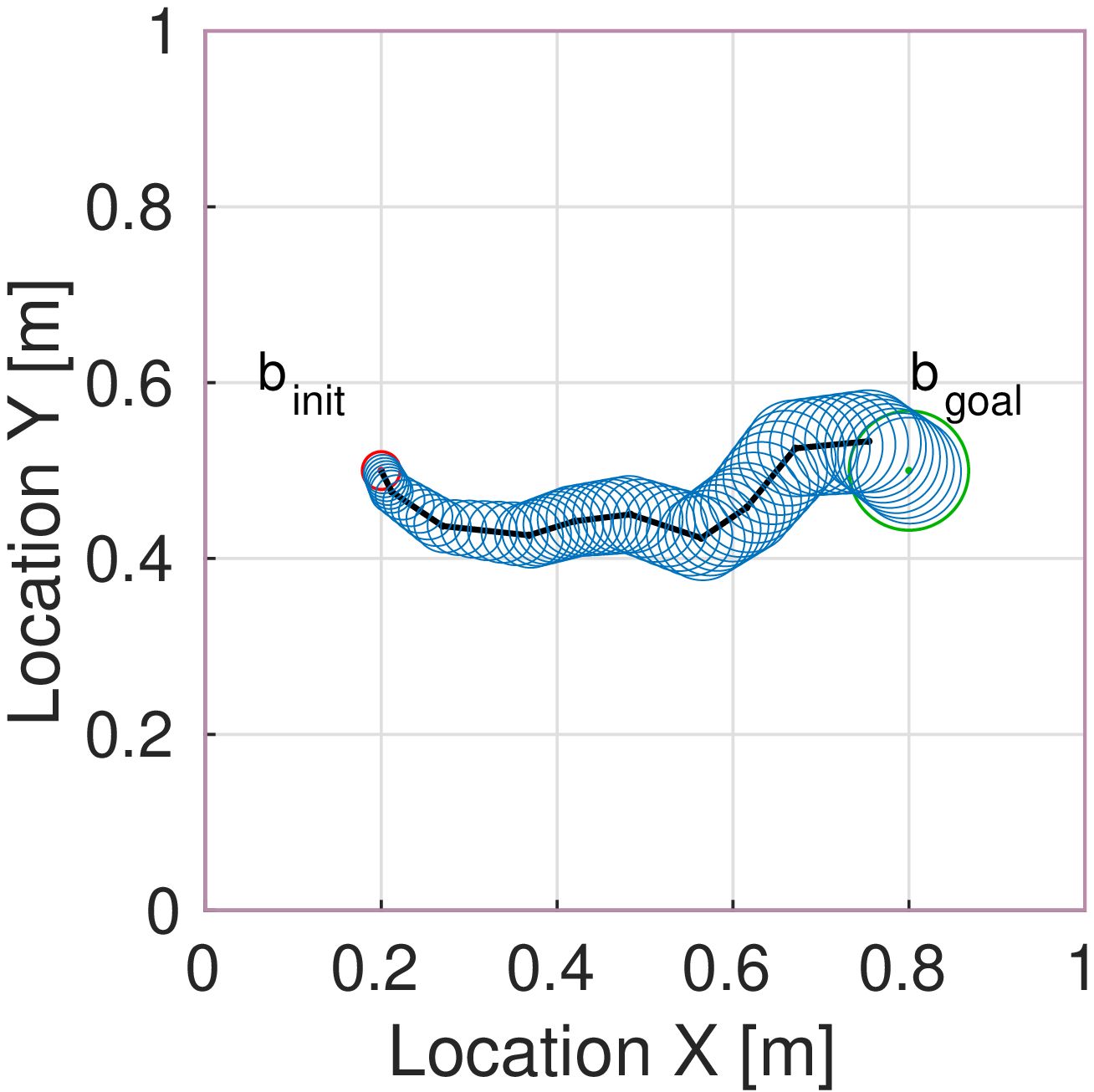}}
\caption{$c(\gamma_n)=1.12$}
 \end{subfigure}
 \begin{subfigure}{0.24\textwidth}
 \includegraphics[width=4.5cm]{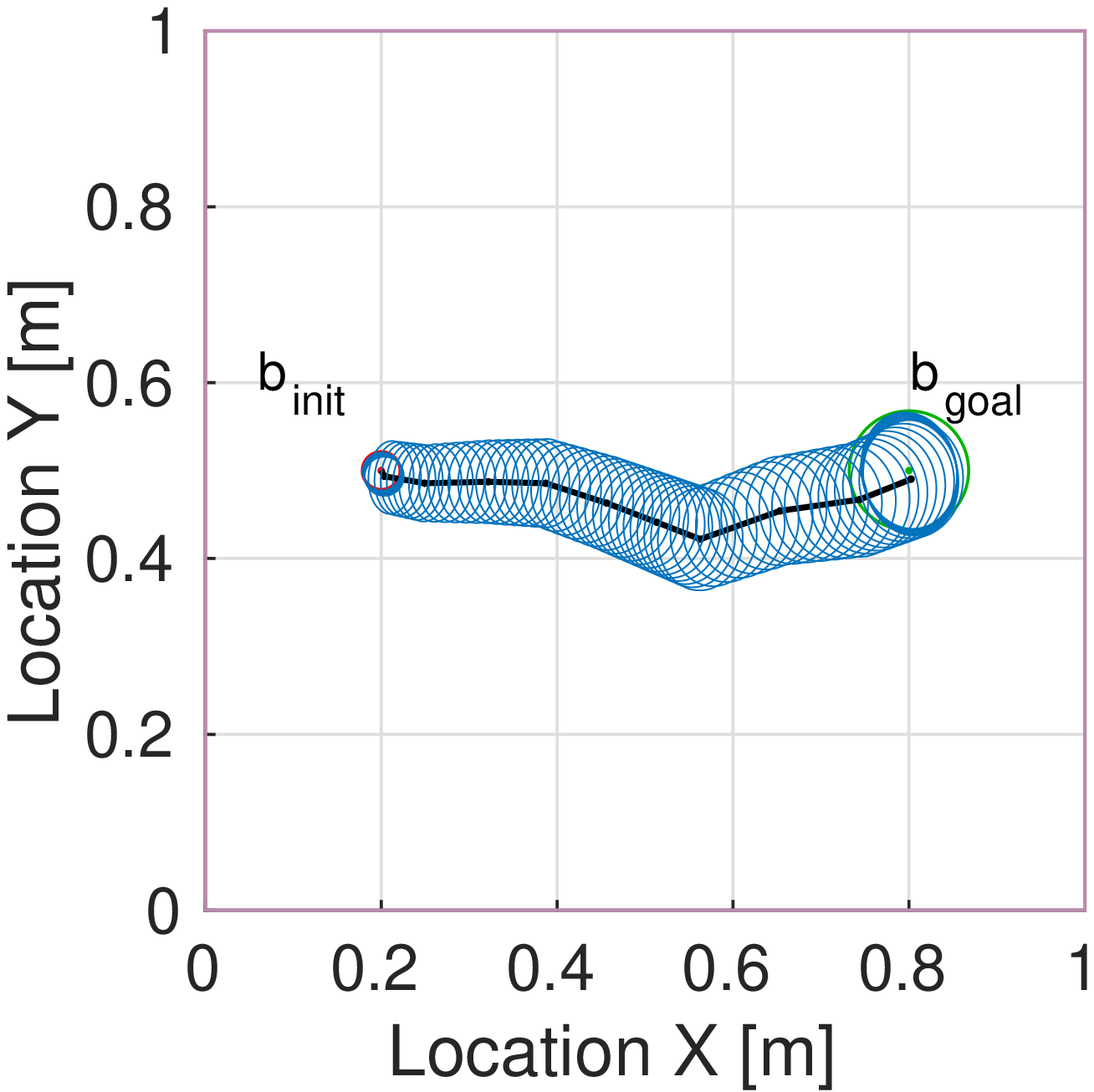}
 \caption{$c(\gamma_n)=0.90$}
 \end{subfigure}
  \begin{subfigure}{0.24\textwidth}
{\includegraphics[width=4.5cm]{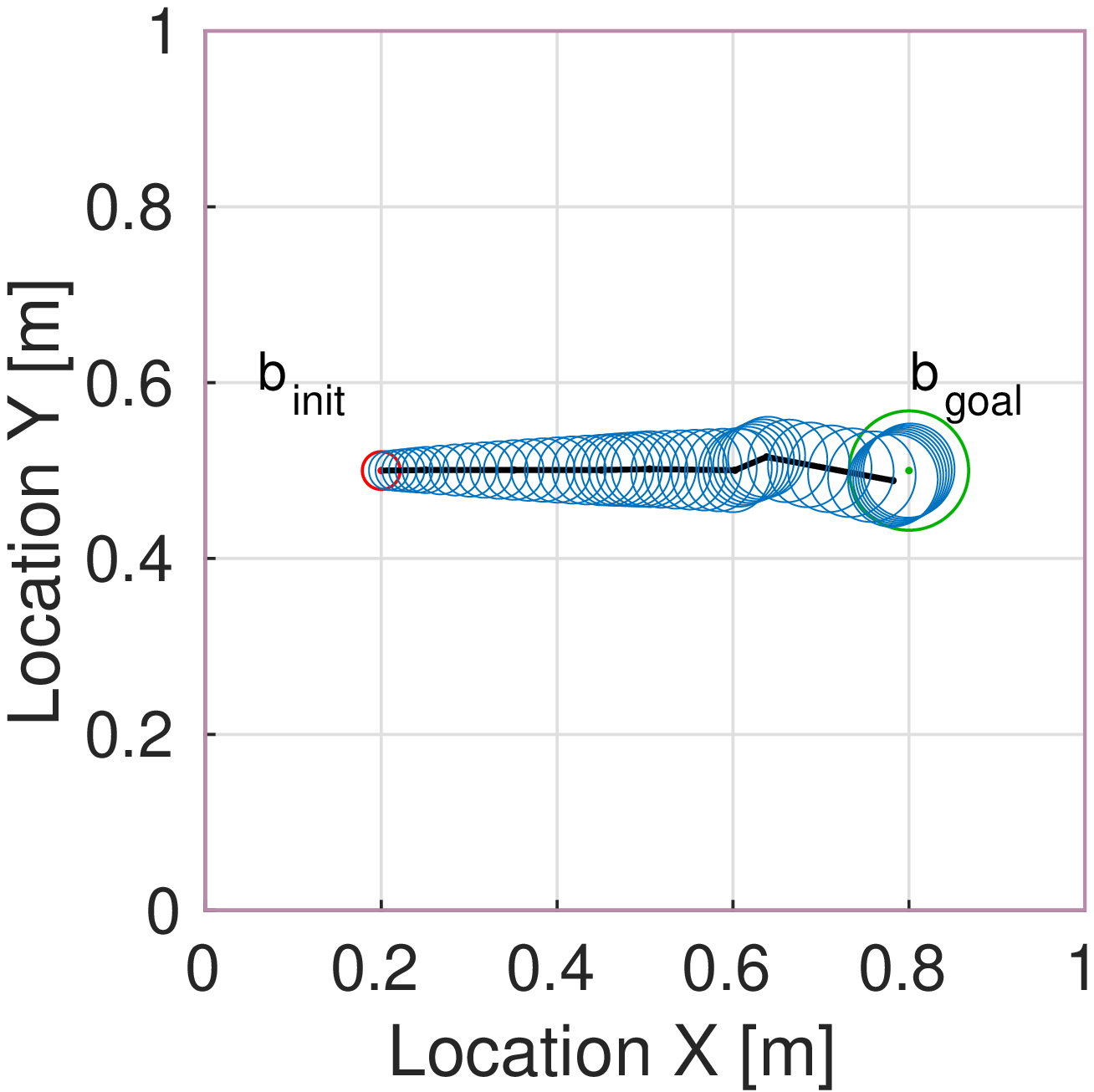}}
\caption{$c(\gamma_n)=0.61$}
 \end{subfigure}
 \begin{subfigure}{0.24\textwidth}
 \includegraphics[width=4.5cm]{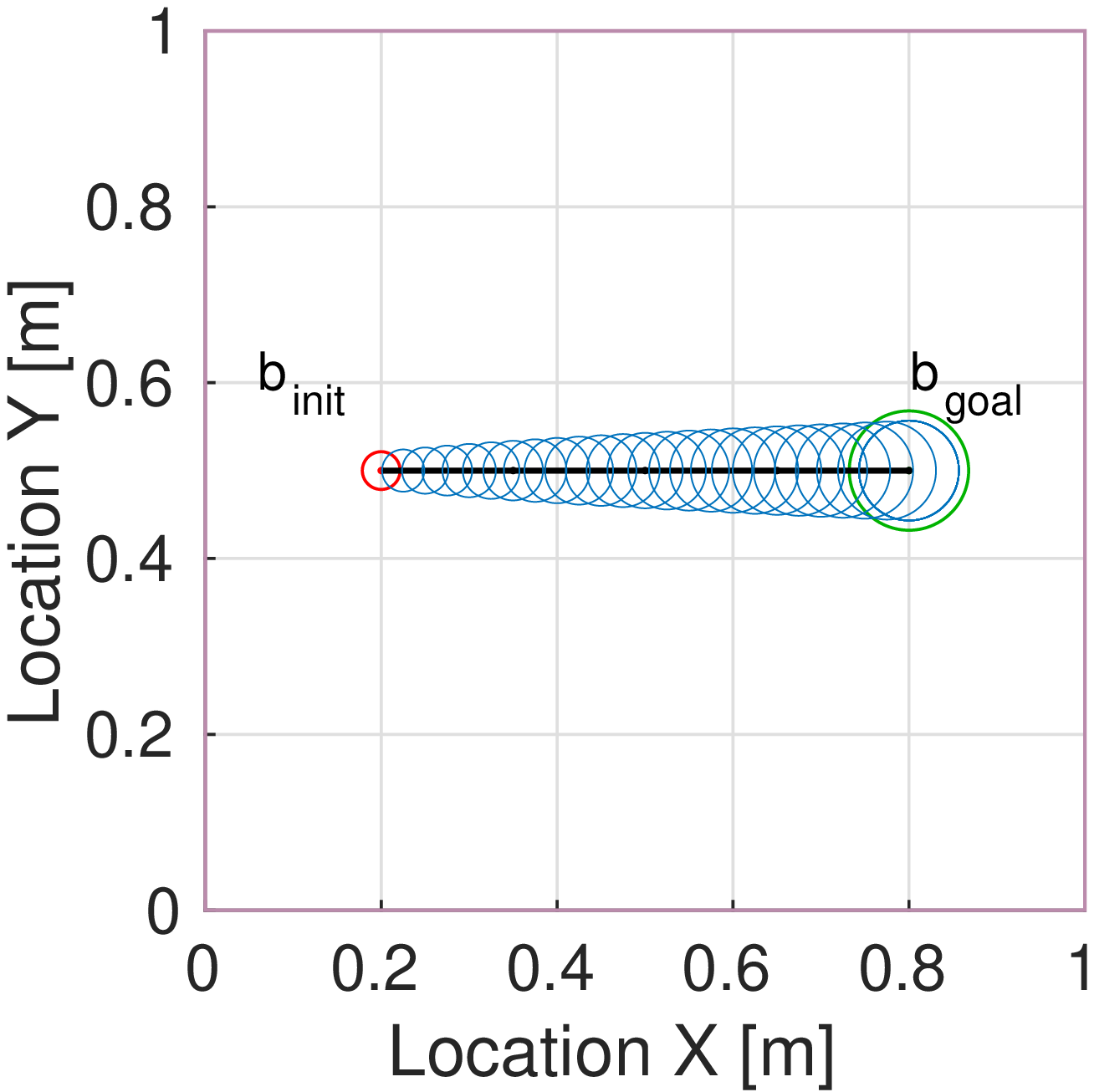}
\caption{$c^\star=0.60$}
\end{subfigure}
 \caption{Figs. (a), (b) and (c) are the paths computed by IG-PRM* after 2000, 8000, and 15000 iterations respectively. (d) The optimal path generated in the absence of obstacles from initial belief state $b_{\text{init}}$ to final belief state $b_{\text{goal}}$ using the move and sense strategy presented in \cite{pedram2021gaussian}.}
\label{fig:evolution_of_path}
\vspace{-2ex}
\end{figure*}
\begin{figure}[ht!]
\centering
\includegraphics[width=0.99\columnwidth]{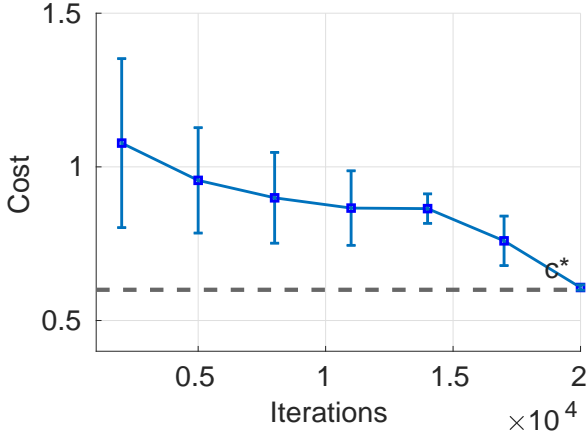}
\caption{The plot shows the convergence of the cost computed by IG-PRM* algorithm to the optimal cost $c^\star=0.60$ as the number of samples $n$ increases.}
\label{fig:conv_no_obs}
\end{figure}
To draw the connection between $\gamma''_n$ and the cost of paths returned by the Lossless IG-PRM* algorithm (Algorithm \ref{alg:lossless_ri_prm}), the regions $\mathcal{R}^L_{n,k}$ are constructed in such a way that the transition between consecutive regions i.e., between any belief state  $b_k \in \mathcal{R}^L_{n,k}$ and any belief state $b_{k+1} \in \mathcal{R}^L_{n,k+1}$ for all $k\in[1;K^L_n-1]$ is collision-free. However, as the edge connecting $b_k \in \mathcal{R}^L_{n,k}$ to any belief state $b_{k+1} \in \mathcal{R}^L_{n,k+1}$ might not be lossless, we define a region $\mathcal{R}^{L,s}_{n,k}$ which is a subset of $\mathcal{R}^L_{n,k}$ and show the transition between any $b^s_k \in \mathcal{R}^{L,s}_{n,k}$ to any $b^s_{k+1} \in \mathcal{R}^{L,s}_{n,k+1}$ is both lossless and collision-free (Lemma \ref{lemma:transition_is_lossless}). Furthermore, we show that the distance between $b_k$ and $b_{k+1}$ is smaller that $D_{\textup{min}}$ (Lemma \ref{lemma:distance_smaller_than_ed_min_lossless}). However, it is still not clear whether a connection between $b_k^s$ and $b_{k+1}^s$ will be established by Algorithm~\ref{alg:lossless_ri_prm}, if $b^s_k \in B$ and $b^s_{k+1} \in B$.
To prove this mathematically, we define event $E_{n} \triangleq E_{n,1}\cap E_{n,2}\dots \cap E_{n,K^L_n}$ as the event that a belief state is sampled inside all $\mathcal{R}^{L,s}_{n,k}$ regions, and show that the event $E_n$ occurs with probability one as $n$ tends to infinity (Lemma \ref{lemma:event_E_n_for_lossless}). 
We then show there exists a path on the graph generated by the Lossless IG-PRM$^\star$ algorithm that gets arbitrarily close to $\gamma''_n$ as $n$ tends to infinity (Lemma \ref{lemma:arbritrarily_close_for_lossless}). Finally, we leverage the continuity of path cost function to show that the cost of that path returned by Lossless IG-PRM* gets arbitrarily close to $c^\star$ (Lemma \ref{lemma:cont_final_lossless}).
\begin{lemma}
\label{lemma:chain_col_free_lossless}
\normalfont The chain $\gamma'_n = (x_{n,k}, P'_{n,k})\triangleq (x^\star(t_{n,k}), (1-\delta^L_n)^2 P^\star(t_{n,k}))\triangleq (x_{n,k}, P'_{n,k})\triangleq (x_{n,k}, (1-\delta^L_n)^2 P_{n,k}) $, $k\in[1;K_n]$ is collision-free. 
\end{lemma}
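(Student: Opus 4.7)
The plan is to adapt the argument of Lemma \ref{lemma:rn} to the new scale factor $(1-\delta^L_n)^2$ and the refined spacing $\ell^L_n$. The goal is to show that along every transition from $(x_{n,k},(1-\delta^L_n)^2 P_{n,k})$ to $(x_{n,k+1},(1-\delta^L_n)^2 P_{n,k+1})$, the linearly interpolated confidence ellipse is contained in $\mathcal{E}_{\chi^2}(x_{n,k},P_{n,k})$; since $\gamma^\star$ is collision-free, that parent ellipse avoids obstacles, and containment will immediately yield collision-freeness.

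First I would parameterize the transition in the standard way, writing
\[
x[\lambda]=(1-\lambda)x_{n,k}+\lambda x_{n,k+1},\qquad P[\lambda]=(1-\delta^L_n)^2 P_{n,k}+\lambda\|x_{n,k+1}-x_{n,k}\|W,
\]
with $\lambda\in[0,1]$, and record the partition bound $\|x_{n,k+1}-x_{n,k}\|\leq \ell^L_n$ inherited from the construction of $\mathcal{P}_n$. The covariance step then needs to be dominated: using Assumption 1 ($P_{n,k}\succeq 4\rho I$) together with $\ell^L_n\leq \delta^L_n(1-\tfrac{3}{4}\delta^L_n)\tfrac{\rho}{3\bar{\sigma}(W)}$ from the definition of $\ell^L_n$, I would show $\ell^L_n\bar{\sigma}(W)I$ absorbs into a fraction of $P_{n,k}$ small enough that
\[
P[\lambda]\preceq \bigl[(1-\delta^L_n)^2+\eta(\delta^L_n)\bigr]P_{n,k}\preceq \bigl(1-\tfrac{\delta^L_n}{2}\bigr) P_{n,k},
\]
for all $\lambda\in[0,1]$ and all $\delta^L_n\leq 1$, where $\eta(\delta^L_n)$ is controlled by the aforementioned upper bound on $\ell^L_n$.

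Next, I would invoke Remark \ref{remark:semi-axis} to measure the gap between the boundaries of the co-centric ellipses $\mathcal{E}_{\chi^2}(x_{n,k},P_{n,k})$ and $\mathcal{E}_{\chi^2}(x_{n,k},P[\lambda])$; by the preceding display this gap is at least $(1-\sqrt{1-\delta^L_n/2})\sqrt{\chi^2\underline{\sigma}(P_{n,k})}\geq \tfrac{\delta^L_n}{8}\sqrt{\chi^2\rho}$. Translating by $x[\lambda]-x_{n,k}$, whose norm is at most $\ell^L_n$, I would then use the bound $\ell^L_n\leq \tfrac{\delta^L_n}{3}\chi^{1/4}\rho^{1/8}$ (or the matching $\tfrac{\delta^L_n\sqrt{\chi^2\rho}}{16}$-type bound hidden in the definition of $q_n$) to show that the translated shrunken ellipse remains inside $\mathcal{E}_{\chi^2}(x_{n,k},P_{n,k})$. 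Concluding, each segment of $\gamma'_n$ lies inside a parent ellipse of the collision-free optimal path, so the whole chain is collision-free.

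The main obstacle will be precisely matching the three competing bounds on $\ell^L_n$ (the $\bar{\sigma}(W)$-bound, the $\rho$-bound, and the $\chi^{1/4}\rho^{1/8}$-bound) to the three containment requirements (covariance domination, margin between ellipses, and compensation of the translation), ensuring that each one dominates precisely the term it is designed to control; unlike in Lemma \ref{lemma:rn}, the quadratic factor $(1-\delta^L_n)^2$ and the interaction of $\rho^{1/8}$ with the $\chi^2$ confidence parameter require a slightly more delicate book-keeping, but no conceptually new idea beyond Lemma \ref{lemma:rn} is needed.
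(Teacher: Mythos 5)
Your proposal is correct and follows essentially the same route as the paper's proof in Appendix~C: dominate the interpolated covariance $P[\lambda]$ by a shrunken multiple of $P_{n,k}$ using the $\bar{\sigma}(W)$-term in the definition of $\ell^L_n$ (the paper uses the factor $(1-\tfrac{\delta^L_n}{2})^2$ where you use $(1-\tfrac{\delta^L_n}{2})$, an immaterial difference), then absorb the center translation into the semi-minor-axis gap and conclude containment in the obstacle-free ellipse $\mathcal{E}_{\chi^2}(x_{n,k},P_{n,k})$. The only point to fix in the write-up is that the translation must be handled by the $q_n$-type bound $\ell^L_n \leq \tfrac{\delta^L_n\sqrt{\chi^2\rho}}{16}$ (as the paper implicitly does via its condition $\ell^L_n\leq\tfrac{\delta^L_n}{6}\sqrt{\chi^2\rho}$), not by the $\tfrac{\delta^L_n}{3}\chi^{1/4}\rho^{1/8}$ term, which is not comparable to $\sqrt{\chi^2\rho}$ in general.
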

\begin{proof}
    Please see Appendix \ref{appendix:chain_col_free_lossless}
\end{proof}
\begin{lemma}
 \normalfont In any transition from $(x_1, P_1) \in \mathcal{R}^L_{n,k}$  $(x_2, P_2) \in \mathcal{R}^L_{n,k+1}$, both initial confidence ellipse and final confidence ellipse are contained inside $\mathcal{E}_{\chi^2} (x_{n,k},P_{n,k})$. More precisely,
\begin{align}
\label{eq:init_in}
&\mathcal{E}_{\chi^2}(x_1, P_1) \subseteq \mathcal{E}_{\chi^2} (x_{n,k},P_{n,k}),
\\ \label{eq:final_in_lossless}
&\mathcal{E}_{\chi^2}(x_2, P_1+||x_2-x_1||W) \subseteq \mathcal{E}_{\chi^2} (x_{n,k},P_{n,k}),
\end{align}
which proves the transition $(x_1,P_1)\rightarrow (x_2,P_2)$ fully resides in $\mathcal{E} _{\chi^2}(x_{n,k},P_{n,k})$, and thus it is collision-free. 
\label{lemma:collision_free_lossless}
\end{lemma}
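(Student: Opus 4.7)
The plan is to establish the two inclusions separately, mirroring the proof of Lemma~\ref{lemma:rn} but exploiting the specific definition of $\theta^0_{n,k}$ to absorb the center translation and the constraint $\ell^L_n \leq \delta^L_n h^L$ to absorb the covariance growth $\|x_2-x_1\|W$. Throughout, I will use freely the fact, established in (the proof of) Lemma~\ref{lemma:chain_col_free_lossless}, that $P''_{n,k}\preceq P'_{n,k}=(1-\delta^L_n)^2 P_{n,k}\preceq P_{n,k}$, so that $\mathcal{E}_{\chi^2}(x_{n,k},P''_{n,k})\subseteq\mathcal{E}_{\chi^2}(x_{n,k},P_{n,k})$, and the fact that $P''_{n,k}\succeq \rho I$ (inherited from the feasible-path assumption).

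For the first inclusion \eqref{eq:init_in}, I would argue as follows. Since $(x_1,P_1)\in\mathcal{R}^L_{n,k}$, we have $P_1\preceq(\theta^0_{n,k})^2 P''_{n,k}$ and $\|x_1-x_{n,k}\|\leq(\ell^L_n)^4$. By Remark~\ref{remark:semi-axis}, the minimum distance between the boundaries of the co-centric similar ellipses $\mathcal{E}_{\chi^2}(x_{n,k},P''_{n,k})$ and $\mathcal{E}_{\chi^2}(x_{n,k},(\theta^0_{n,k})^2 P''_{n,k})$ equals $(1-\theta^0_{n,k})\sqrt{\chi^2\underline{\sigma}(P''_{n,k})}$, and this is exactly $(\ell^L_n)^4$ by the very definition of $\theta^0_{n,k}$. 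Consequently, translating $\mathcal{E}_{\chi^2}(x_{n,k},(\theta^0_{n,k})^2 P''_{n,k})$ by the vector $x_1-x_{n,k}$, whose length is at most $(\ell^L_n)^4$, keeps it inside $\mathcal{E}_{\chi^2}(x_{n,k},P''_{n,k})$. Chaining, $\mathcal{E}_{\chi^2}(x_1,P_1)\subseteq\mathcal{E}_{\chi^2}(x_1,(\theta^0_{n,k})^2 P''_{n,k})\subseteq\mathcal{E}_{\chi^2}(x_{n,k},P''_{n,k})\subseteq\mathcal{E}_{\chi^2}(x_{n,k},P_{n,k})$.

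For the second inclusion \eqref{eq:final_in_lossless}, I would first bound the travel distance by the triangle inequality: using $\|x_1-x_{n,k}\|\leq(\ell^L_n)^4$, $\|x_{n,k+1}-x_2\|\leq(\ell^L_n)^4$, and $\|x_{n,k+1}-x_{n,k}\|\leq\ell^L_n$ from the equi-spaced partition $\mathcal{P}_n$, we get $\|x_2-x_1\|\leq 2(\ell^L_n)^4+\ell^L_n\leq 3\ell^L_n$ for $n$ large. Hence $P_1+\|x_2-x_1\|W\preceq (\theta^0_{n,k})^2 P''_{n,k}+3\ell^L_n W$. The crucial step is then to replace the additive $3\ell^L_n W$ term by a scalar multiple of $P''_{n,k}$: since $\ell^L_n\leq\delta^L_n(1-\tfrac{3}{4}\delta^L_n)\rho/(3\bar{\sigma}(W))$ and $P''_{n,k}\succeq\rho I$, one obtains $3\ell^L_n W\preceq\delta^L_n(1-\tfrac{3}{4}\delta^L_n)P''_{n,k}$. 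Combined with $(\theta^0_{n,k})^2\leq 1$, this yields $P_1+\|x_2-x_1\|W\preceq\bigl((\theta^0_{n,k})^2+\delta^L_n(1-\tfrac{3}{4}\delta^L_n)\bigr)P''_{n,k}\preceq \alpha_n P_{n,k}$ for an explicit $\alpha_n<1$ (using $P''_{n,k}\preceq(1-\delta^L_n)^2 P_{n,k}$ together with $\theta^0_{n,k}<1$). A second application of Remark~\ref{remark:semi-axis}, together with the bound $\|x_2-x_{n,k}\|\leq (\ell^L_n)^4+\ell^L_n$, and the remaining constraints $\ell^L_n\leq\delta^L_n\rho/9$ and $\ell^L_n\leq\tfrac{\delta^L_n}{3}\chi^{1/4}\rho^{1/8}$ in the definition of $\ell^L_n$, will then give $\mathcal{E}_{\chi^2}(x_2,P_1+\|x_2-x_1\|W)\subseteq\mathcal{E}_{\chi^2}(x_{n,k},P_{n,k})$.

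The main obstacle is the second inclusion, because three distinct effects must be simultaneously absorbed into the geometric margin between $\mathcal{E}_{\chi^2}(x_{n,k},P_{n,k})$ and the propagated inner ellipse: (i) the initial shrinkage factor $(\theta^0_{n,k})^2$, which is very close to one; (ii) the covariance growth $\|x_2-x_1\|W\leq 3\ell^L_n W$ picked up along the transition; and (iii) the translation of the center from $x_{n,k}$ to $x_2$, whose magnitude is $O(\ell^L_n)$ rather than $O((\ell^L_n)^4)$. The delicate bookkeeping is precisely the reason the Lossless variant defines $\ell^L_n$ as the minimum of four separate quantities and uses the fourth-power ball radius $(\ell^L_n)^4$ in $\mathcal{R}^L_{n,k}$: each term in the minimum is calibrated to absorb one of these effects, and verifying that the resulting inclusion holds reduces to checking a sequence of inequalities among the constants $\rho$, $\chi^2$, $\bar{\sigma}(W)$, $\delta^L_n$, and $\ell^L_n$.
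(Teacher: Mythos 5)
Your proposal is correct and follows essentially the same route as the paper's proof in Appendix~\ref{appendix:collision_free_lossless}: the first inclusion via the $(\ell^L_n)^4$ boundary margin encoded in $\theta^0_{n,k}$ plus a center translation of at most $(\ell^L_n)^4$, and the second via $\|x_2-x_1\|\leq 3\ell^L_n$ followed by absorbing the growth $3\ell^L_n W$ into a shrink factor of $P_{n,k}$ strictly less than one (the paper reuses \eqref{eq:claim2_2} through $P_1\preceq P''_{n,k}\preceq P'_{n,k}$ to land on $(1-\tfrac{\delta^L_n}{2})^2P_{n,k}$, while you absorb against $P''_{n,k}$ and then rescale by $(1-\delta^L_n)^2$ --- the same effect) and then the translation argument of Remark~\ref{remark:semi-axis}. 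The only slip is cosmetic: the final requirement $\|x_2-x_{n,k}\|\leq 2\ell^L_n\leq\tfrac{\delta^L_n}{2}\sqrt{\chi^2\rho}$ is delivered by the $q_n$ term in the definition of $\ell^L_n$, not by the $\tfrac{\delta^L_n}{9}\rho$ or $\tfrac{\delta^L_n}{3}\chi^{1/4}\rho^{1/8}$ terms you cite, but this does not affect the validity of the argument.
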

\begin{proof}
    Please see Appendix \ref{appendix:collision_free_lossless}.
\end{proof}

\begin{lemma}
\normalfont The transition between any state $(x_1,P_1)\in\mathcal{R}^{L,s}_{n,k}$ to $(x_2,P_2)\in\mathcal{R}^{L,s}_{n,k+1}$ for all $k\in[1;K_n-1]$ is finitely lossless and collision-free.
\label{lemma:transition_is_lossless}
\end{lemma}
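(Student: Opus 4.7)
The plan is to prove the two claims separately. Collision-freeness is essentially free: by construction $\mathcal{R}^{L,s}_{n,k}\subseteq \mathcal{R}^{L}_{n,k}$ (the ball $\mathcal{B}^s(x_{n,k},\ell^L_n)$ is a subset of $\mathcal{B}(x_{n,k},(\ell^L_n)^4)$, and the covariance constraint $(\theta^1_{n,k})^2 P''_{n,k}\preceq P\preceq (\theta^2_{n,k})^2P''_{n,k}$ implies $P\preceq (\theta^0_{n,k})^2 P''_{n,k}$ after a short check on the ordering of the $\theta$ values). Hence Lemma~\ref{lemma:collision_free_lossless} applied to $(x_1,P_1),(x_2,P_2)\in\mathcal{R}^L_{n,k}\times\mathcal{R}^L_{n,k+1}$ gives the collision-free conclusion.

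The substantive part is losslessness, i.e.\ showing $P_2\preceq P_1+\|x_2-x_1\|W$. The key structural observation I would exploit is that the $\theta$ indices telescope: by definition $\theta^2_{n,k+1}=\theta^2_n-(k+1)\Delta=\theta^1_{n,k}$. Combined with the containments $(\theta^1_{n,k})^2P''_{n,k}\preceq P_1$ and $P_2\preceq (\theta^2_{n,k+1})^2 P''_{n,k+1}$, this yields
\[
P_2-P_1\preceq (\theta^1_{n,k})^2\bigl(P''_{n,k+1}-P''_{n,k}\bigr).
\]
Since the refined chain $\gamma''_n$ is finitely lossless by construction, $P''_{n,k+1}-P''_{n,k}\preceq \|x_{n,k+1}-x_{n,k}\|W$, and therefore
\[
P_2-P_1\preceq (\theta^1_{n,k})^2\,\|x_{n,k+1}-x_{n,k}\|\,W.
\]
It then suffices to show $(\theta^1_{n,k})^2\,\|x_{n,k+1}-x_{n,k}\|\leq \|x_2-x_1\|$.

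To get this final inequality, I would use the shrunk ball $\mathcal{B}^s$ defined with radius $\tfrac12(1-(\theta^1_{n,k})^2)(\ell^L_n)^4$. By the reverse triangle inequality,
\[
\|x_2-x_1\|\geq \|x_{n,k+1}-x_{n,k}\|-\tfrac12(1-(\theta^1_{n,k})^2)(\ell^L_n)^4-\tfrac12(1-(\theta^1_{n,k+1})^2)(\ell^L_n)^4,
\]
so the required bound reduces, after rearrangement and using $\theta^1_{n,k+1}\leq \theta^1_{n,k}$, to $\|x_{n,k+1}-x_{n,k}\|\geq (\ell^L_n)^4$. The delicate case is when this lower bound might fail, namely when consecutive partition means are very close. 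I would handle that case separately: if $\|x_{n,k+1}-x_{n,k}\|<(\ell^L_n)^4$, the losslessness of $\gamma''_n$ combined with the telescoping bound already gives $P_2-P_1\preceq (\theta^1_{n,k})^2\|x_{n,k+1}-x_{n,k}\|W$, which is of order $(\ell^L_n)^4 W$; this is dominated by $\|x_2-x_1\|W$ as long as $x_1\neq x_2$, and the degenerate $x_1=x_2$ sub-case only occurs when $P''_{n,k+1}-P''_{n,k}\preceq 0$ (since position differences drive covariance growth), in which case $P_2-P_1\preceq 0\preceq \|x_2-x_1\|W$ trivially.

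The main obstacle I anticipate is precisely this last algebraic step: verifying that the quartic scaling $(\ell^L_n)^4$ in the radius of $\mathcal{B}^s$ is tight enough to compensate for how close $(\theta^1_{n,k})^2$ is to $1$, across all $k$ uniformly. I expect the proof will make crucial use of the particular form of $\ell^L_n=\min\{\dots\}$ — each term inside that minimum is there to make one of the constraints (loss-free, collision-free, covariance staying in $\mathbb{S}^d_\rho$) hold simultaneously — and of the fact that $\Delta=(\theta^2_n-\theta^1_n)/K^L_n$ is chosen so that the telescoped jumps in $\theta^2$ across consecutive regions are controlled by $\delta^L_n$ and $\ell^L_n$ in a compatible way. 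Assembling these estimates carefully completes the argument.
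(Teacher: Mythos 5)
Your proposal is correct and follows essentially the same route as the paper's own proof in Appendix~E: collision-freeness from $\mathcal{R}^{L,s}_{n,k}\subset\mathcal{R}^{L}_{n,k}$ together with Lemma~\ref{lemma:collision_free_lossless}, and losslessness from the telescoping identity $\theta^2_{n,k+1}=\theta^1_{n,k}$, the sandwich bounds defining the regions, losslessness of the refined chain $\gamma''_n$, and a triangle-inequality lower bound on $\|x_2-x_1\|$ driven by the shrunken radius $\tfrac12\left(1-(\theta^1_{n,k})^2\right)(\ell^L_n)^4$. The only caveat is your final reduction: since $\Delta>0$ one has $1-(\theta^1_{n,k+1})^2\geq 1-(\theta^1_{n,k})^2$, so the monotonicity is used in the direction opposite to what you state, but the estimate still closes because $1-(\theta^1_{n,k})^2\geq (\ell^L_n)^2/(\chi^{1/2}\rho^{1/4})$ uniformly in $k$ while the correction terms are of order $(\ell^L_n)^4$ (the paper's own last step symmetrizes to $\tfrac12\left((\theta^1_{n,k})^2+(\theta^1_{n,k+1})^2\right)$, takes $\|x_{n,k+1}-x_{n,k}\|=\ell^L_n$, and passes over the same point).
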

\begin{proof}
    Please see Appendix \ref{appendix:transition_is_lossless}.
\end{proof}

\begin{lemma}
\normalfont The distance $\hat{D}(b_1,b_2)$ between any $b_1=(x_1,P_1)\in\mathcal{R}^{L,s}_{n,k}$ and any $b_2=(x_2,P_2)\in\mathcal{R}^{L,s}_{n,k+1}$ for all $k\in[1;K_n-1]$ is less than $D_{\text{min}}$.
\label{lemma:distance_smaller_than_ed_min_lossless}
\end{lemma}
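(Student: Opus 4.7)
The plan is to bound $\hat{\mathcal{D}}(b_1,b_2) = \|x_1 - x_2\|$ by a straightforward triangle inequality, splitting the path between $x_1$ and $x_2$ through the two consecutive partition points $x_{n,k}$ and $x_{n,k+1}$:
\[
\|x_1 - x_2\| \;\leq\; \|x_1 - x_{n,k}\| + \|x_{n,k} - x_{n,k+1}\| + \|x_{n,k+1} - x_2\|.
\]
The first and third terms will be controlled from the definition of the slim region $\mathcal{R}^{L,s}_{n,k}$, whose mean-space ball $\mathcal{B}^s(x_{n,k},\ell^L_n)$ has radius $\tfrac{1-(\theta^1_{n,k})^2}{2}(\ell^L_n)^4$, while the middle term will be controlled by the defining property $\|x_{n,k+1} - x_{n,k}\| \leq \ell^L_n$ of the equi-spaced partition $\mathcal{P}_n$ used to construct $\gamma''_n$ (analogous to the bound $\|x_{n,k+1}-x_{n,k}\|\leq \ell_n$ in Section~\ref{sec:construction_free_rnk}).

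Using $(\theta^1_{n,k})^2 \in [0,1]$, the first and third terms are each at most $\tfrac{1}{2}(\ell^L_n)^4$, so the sum of all three is bounded by $(\ell^L_n)^4 + \ell^L_n$. Under Assumption~2 (i.e.\ for $n \geq n^\star$) we have $\delta^L_n \leq 1$ and hence $\ell^L_n \leq \delta^L_n h^L \leq h^L$, which is a fixed constant. Choosing $n$ large enough so that $(\ell^L_n)^3 \leq 1$ (which is automatic once $\ell^L_n \leq 1$, i.e.\ once $h^L \leq 1$, and otherwise enforced by taking $n$ still larger), one obtains $(\ell^L_n)^4 \leq \ell^L_n$ and therefore $\|x_1-x_2\| \leq 2\ell^L_n$. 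Finally, invoking the defining inequality $\ell^L_n \leq q_n = \ell_n$ from the very definition of $\ell^L_n$, we conclude
\[
\|x_1 - x_2\| \;\leq\; 2\ell^L_n \;\leq\; 2\ell_n \;<\; 3\ell_n \;=\; D_{\text{min}},
\]
which is the required bound, guaranteeing that the edge $(b_1,b_2)$ will be considered by the $\operatorname{Near}$ routine in Algorithm~\ref{alg:lossless_ri_prm}.

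The argument is essentially a chain of elementary estimates, and no subtle analytic tools are required; the only mild care needed is the verification that the quartic perturbation $(\ell^L_n)^4$ is dominated by the linear spacing $\ell^L_n$ for all $n \geq n^\star$, and that the lossless-version partition spacing $\ell^L_n$ is, by definition, at most the IG-PRM* spacing $\ell_n$ entering $D_{\text{min}} = 3\ell_n$. The latter holds because $\ell^L_n$ is defined as a minimum over four quantities, one of which is precisely $q_n = \ell_n$. Thus the main (and only) obstacle is essentially a bookkeeping check that the constants $\ell^L_n$, $\theta^1_{n,k}$, and $D_{\text{min}}$ were chosen in Section~\ref{sec:lossless_ri_prm_star} in a mutually compatible way; once this is made explicit, the lemma follows immediately by triangle inequality.
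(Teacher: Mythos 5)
Your proof is correct and follows essentially the same route as the paper's: a triangle inequality through $x_{n,k}$ and $x_{n,k+1}$, bounding the two ball radii of $\mathcal{B}^s$ by $\tfrac{1}{2}(\ell^L_n)^4$ each and the partition spacing by $\ell^L_n$, which gives $\|x_1-x_2\|\leq 2\ell^L_n$ (the paper's own proof also implicitly uses $(\ell^L_n)^4\leq \ell^L_n$ in its second inequality). The only difference is bookkeeping: the paper simply concludes $\|x_1-x_2\|\leq 2\ell^L_n=:D_{\text{min}}$, i.e.\ it (re)defines $D_{\text{min}}$ for the lossless algorithm, whereas you bridge explicitly to the Section~5 value $D_{\text{min}}=3\ell_n$ via $\ell^L_n\leq q_n=\ell_n$, which is a harmless and arguably cleaner way to make the constants compatible.
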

\begin{proof}
    Please see Appendix \ref{appendix:distance_smaller_lossless}.
\end{proof}

\begin{lemma}
\normalfont If
$\gamma>\left(\frac{d(2d+8)+1}{g^L_1 g^L_2 d(2d+8)} \right)^\frac{1}{d(2d+8)}$, 
then $\underset{n\rightarrow\infty}{\lim}\; \mathbb{P}\left(E_n \right)=1$.
\label{lemma:event_E_n_for_lossless}
\end{lemma}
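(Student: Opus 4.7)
The plan is to follow the same overall architecture as the proof of Lemma~\ref{lemma:event_E_n}, adapted to the regions $\mathcal{R}^{L,s}_{n,k}$ and $\mathcal{D}^L_{n,k}$ that arise in the lossless setting. Let $E^i_{n,k}$ be the event that the $i$-th sampled belief $b_i=(x_i,P_i)$ lies inside $\mathcal{R}^{L,s}_{n,k}$. Because the mean and covariance are sampled independently (uniformly on $\mathcal{X}_{\text{free}}$ and on $\mathcal{R}_{(\rho d,R]}$ respectively), I would write $\mathbb{P}(E^i_{n,k}) = \mathbb{P}(x_i\in\mathcal{B}^s(x_{n,k},\ell^L_n))\cdot\mathbb{P}(P_i\in\mathcal{D}^L_{n,k})$ and lower-bound each factor separately in terms of $\delta^L_n$.

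For the mean factor, I would use the definition of $\mathcal{B}^s$, combined with the lower bound $1-(\theta^1_{n,k})^2 \gtrsim (\delta^L_n)^2$ obtained from the explicit formulas for $\theta^1_n,\theta^2_n$, to show $\mathrm{vol}(\mathcal{B}^s(x_{n,k},\ell^L_n))/\mathcal{V}_{\mathcal{X}}\ge g^L_1(\delta^L_n)^{p_1}$ for an appropriate constant $g^L_1$ and exponent $p_1$. For the covariance factor, I would apply Theorem~\ref{theo:volume} together with Lemma~\ref{lemma:diag} exactly as in Section~\ref{sec:enent_E_n}, expanding $(\theta^2_{n,k})^{d(d+1)}-(\theta^1_{n,k})^{d(d+1)}$ via the telescoping identity used there and using the fact that $\Delta = (\theta^2_n-\theta^1_n)/K^L_n$ contributes a further $\delta^L_n$-power once $K^L_n\lesssim 1/(\delta^L_n h^L)$ is substituted. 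The exponents $p_1$ and $p_2$ must be arranged so that their sum equals $d(2d+8)$, yielding $\mathbb{P}(E^i_{n,k})\ge g^L_1 g^L_2(\delta^L_n)^{d(2d+8)}$.

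Once this single-index bound is secured, the rest is routine. Independence across iterations gives
\begin{equation*}
\mathbb{P}(E^c_{n,k}) \le \left(1-g^L_1 g^L_2(\delta^L_n)^{d(2d+8)}\right)^n \le \exp\!\left(-n g^L_1 g^L_2(\delta^L_n)^{d(2d+8)}\right),
\end{equation*}
and the substitution $\delta^L_n=\gamma(\log n/n)^{1/d(2d+8)}$ converts this to $n^{-g^L_1 g^L_2\gamma^{d(2d+8)}}$. A union bound over $k\in[1;K^L_n]$ together with $K^L_n\le \ell^\star/\ell^L_n\le \ell^\star/(\delta^L_n h^L)$ gives
\begin{equation*}
\mathbb{P}(E^c_n) \le \frac{\ell^\star}{\gamma h^L}\,(\log n)^{-1/d(2d+8)}\, n^{\,1/d(2d+8)-g^L_1 g^L_2\gamma^{d(2d+8)}}.
\end{equation*}
The hypothesis $\gamma > \left(\frac{d(2d+8)+1}{g^L_1 g^L_2 d(2d+8)}\right)^{1/d(2d+8)}$ is precisely the condition that makes the exponent of $n$ strictly less than $-1$, so $\sum_n \mathbb{P}(E^c_n)<\infty$. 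The Borel--Cantelli lemma then forces $\mathbb{P}(E^c_n)\to 0$, equivalently $\lim_{n\to\infty}\mathbb{P}(E_n)=1$.

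The main obstacle I expect is the volume accounting for $\mathcal{D}^L_{n,k}$ and $\mathcal{B}^s(x_{n,k},\ell^L_n)$. Unlike Lemma~\ref{lemma:event_E_n}, where the symmetric sandwich $(1-\tfrac{3\delta_n}{2})P_{n,k}\preceq P\preceq (1-\tfrac{\delta_n}{2})P_{n,k}$ cleanly produces the exponent $d(d+1)/2$, here one must simultaneously track (i) the $(\ell^L_n)^4$ shrinkage of the ball radius, (ii) the $(\delta^L_n)^2$ scaling inherited from $1-(\theta^1_{n,k})^2$, and (iii) the $K^L_n$-dependent thickness $\Delta$ of the matrix slab. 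Making the bookkeeping consistent so that the three contributions multiply to exactly $(\delta^L_n)^{d(2d+8)}$ (rather than some larger exponent that would force a stronger hypothesis on $\gamma$) is where careful use of the telescoping identity in the proof of Lemma~\ref{lemma:event_E_n} and of Theorem~\ref{theo:volume} is essential.
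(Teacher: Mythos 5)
Your proposal follows essentially the same route as the paper's proof: factor the per-sample probability into a mean part and a covariance part, lower-bound the ball via the $\mathcal{B}^s$ shrinkage and the covariance slab via Theorem~\ref{theo:volume} with the telescoping identity and $K^L_n\le \ell^\star/(\delta^L_n h^L)$, then finish with $(1-x)\le e^{-x}$, the substitution of $\delta^L_n$, a union bound over $k$, and Borel--Cantelli, the hypothesis on $\gamma$ being exactly the condition that pushes the exponent of $n$ below $-1$. The bookkeeping you flag as the main obstacle closes just as you anticipated: the paper obtains $g^L_1(\delta^L_n)^{6d}$ for the mean factor (the $(\ell^L_n)^{4d}$ contribution of the ball radius times the $(\delta^L_n)^{2d}$ contribution of the $1-(\theta^1_{n,k})^2$ shrinkage) and $g^L_2(\delta^L_n)^{2d(d+1)}$ for the covariance factor, and indeed $6d+2d(d+1)=d(2d+8)$.
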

\begin{proof}
    Please see Appendix \ref{appedix:event_en_lossless}.
\end{proof}

\begin{lemma}
$\mathbb{P}\left(\left\{\underset{n\rightarrow\infty}{\lim}\;\left\|\gamma^p_{n}-\gamma''_{n}\right\|_{TV}=0\right\}\right)=1$
\label{lemma:arbritrarily_close_for_lossless}
\end{lemma}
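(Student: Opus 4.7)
The plan is to mirror the argument of Lemma \ref{lemma:arbritrarily_close}, with $\gamma'_n$ replaced by $\gamma''_n$ and $\mathcal{R}^\beta_{n,k}$ replaced by the lossless analogs $\mathcal{R}^{L,\beta s}_{n,k}\subseteq \mathcal{R}^{L,s}_{n,k}$ already defined in the text. I will condition on the high-probability event $E_n$ from Lemma \ref{lemma:event_E_n_for_lossless}, so that every $\mathcal{R}^{L,s}_{n,k}$ contains at least one sample, and then show that a large fraction of the even smaller $\mathcal{R}^{L,\beta s}_{n,k}$ also contain samples. Because Lemmas \ref{lemma:transition_is_lossless} and \ref{lemma:distance_smaller_than_ed_min_lossless} guarantee that any two such samples in consecutive regions are connected by an edge of the graph produced by Algorithm \ref{alg:lossless_ri_prm}, the path obtained by stitching together these edges is well-defined on the roadmap.

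First, I introduce the indicator
\[
I_{n,k}=\begin{cases} 1 & \text{if } \mathcal{R}^{L,\beta s}_{n,k} \cap V^{\text{Lossless IG-PRM}^*}=\emptyset, \\ 0 & \text{otherwise,}\end{cases}
\]
and set $M_n=\sum_{k=1}^{K^L_n} I_{n,k}$. On the event $\{M_n\leq \alpha K^L_n\}\cap E_n$, for each $k$ I pick a sample in $\mathcal{R}^{L,\beta s}_{n,k}$ when $I_{n,k}=0$ and a sample in $\mathcal{R}^{L,s}_{n,k}$ when $I_{n,k}=1$. A sample in $\mathcal{R}^{L,\beta s}_{n,k}$ contributes at most $c\beta \ell^L_n$ to the per-segment total-variation distance (via the position radius and the Frobenius bound on the covariance slab), and a sample in $\mathcal{R}^{L,s}_{n,k}\setminus \mathcal{R}^{L,\beta s}_{n,k}$ contributes at most $c\ell^L_n$, where $c$ depends on the constants implicit in the definitions of these regions. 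Summing gives
\[
\|\gamma^p_n-\gamma''_n\|_{TV}\leq K^L_n\bigl(\alpha c\ell^L_n+(1-\alpha)\beta c\ell^L_n\bigr)\leq c(\alpha+\beta)L,
\]
where $L=\sup_n K^L_n\ell^L_n<\infty$ by construction.

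Second, I estimate $\mathbb{E}[I_{n,k}]$ using independence of the samples. Combining the Selberg-type volume lower bound of Theorem \ref{theo:volume} applied to the covariance slab $(\beta\theta^1_{n,k}+(1-\beta)\theta^2_{n,k})^2 P''_{n,k}\preceq P\preceq(\theta^2_{n,k})^2 P''_{n,k}$ with the $d$-dimensional ball volume for the mean coordinate, and factoring out the dependence on $\beta$, I obtain an inequality of the form
\[
\mathbb{E}[I_{n,k}]\leq \bigl(1-g^L_1 g^L_2\,\beta^{d+1}(\delta^L_n)^{d(2d+8)}\bigr)^n\leq n^{-g^L_1 g^L_2 \beta^{d+1}\gamma^{d(2d+8)}},
\]
after substituting $\delta^L_n=\min\{\gamma(\log n/n)^{1/d(2d+8)},1\}$ and using $1-x\leq e^{-x}$. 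Markov's inequality then yields
\[
\mathbb{P}(\{M_n\geq \alpha K^L_n\})\leq \frac{\mathbb{E}[M_n]}{\alpha K^L_n}\leq \frac{1}{\alpha}\,n^{-g^L_1 g^L_2 \beta^{d+1}\gamma^{d(2d+8)}}\longrightarrow 0,
\]
provided $\gamma$ exceeds the Borel--Cantelli threshold from Lemma \ref{lemma:event_E_n_for_lossless}; taking complements and summing in $n$ gives a.s.\ convergence.

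Combining these estimates, for every fixed $\alpha,\beta\in(0,1)$ the event $\{\|\gamma^p_n-\gamma''_n\|_{TV}\leq c(\alpha+\beta)L\}$ holds almost surely for all sufficiently large $n$; letting $\alpha,\beta\downarrow 0$ along a countable sequence gives the conclusion. The main obstacle is the exponent bookkeeping in the volume/probability estimate for $\mathcal{R}^{L,\beta s}_{n,k}$: the mean-ball radius scales as $(\ell^L_n)^4$ rather than $\ell^L_n$ and the admissible covariance slab is specified through the $\theta^i_{n,k}$ quantities rather than a simple rescaling, so verifying that the combined exponent in $\delta^L_n$ indeed matches the $d(2d+8)$ used to define $\delta^L_n$---thereby producing a summable tail---is the delicate step that dictated the particular choice of $\ell^L_n$, $h^L$, and $\delta^L_n$ in the algorithm.
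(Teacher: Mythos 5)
Your proposal follows essentially the same route as the paper's proof: the same indicator $I_{n,k}$ on the $\beta$-shrunk regions $\mathcal{R}^{L,\beta s}_{n,k}$, conditioning on $E_n$, Markov's inequality on $M_n$, and the final limit in $\alpha,\beta$ after $n\rightarrow\infty$, with the probability estimate $\mathbb{E}[I_{n,k}]\leq\bigl(1-g^L_1 g^L_2\beta^{d+1}(\delta^L_n)^{d(2d+8)}\bigr)^n$ matching the paper's.

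One step, however, is not correct as stated: the claim that a sample in $\mathcal{R}^{L,\beta s}_{n,k}$ contributes at most $c\beta\ell^L_n$ per segment, leading to the uniform bound $\|\gamma^p_n-\gamma''_n\|_{TV}\leq c(\alpha+\beta)L$. Unlike the non-lossless case, the covariance slab of $\mathcal{R}^{L,\beta s}_{n,k}$ is pinned near $(\theta^2_{n,k})^2 P''_{n,k}$, which is offset from $P''_{n,k}$ by an amount of order $(\delta^L_n)^2\bar{\sigma}(P''_{n,k})$ that does not shrink with $\beta$; so $\|P-P''_{n,k}\|_F$ cannot be bounded by a multiple of $\beta\ell^L_n$ for fixed $n$. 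The paper's proof accounts for this with a third term, bounding each such segment by $2\beta\Delta\bar{\rho}\sqrt{d}+(\delta^L_n)^2\bar{\rho}\sqrt{d}$ and hence the total by $c\alpha L+c_1\beta(\theta^2_n-\theta^1_n)+c_2\delta^L_n L/h^L$, where the last term is independent of $\alpha,\beta$ but vanishes as $n\rightarrow\infty$ because $\delta^L_n\rightarrow 0$ and $K^L_n(\delta^L_n)^2\leq \ell^\star\delta^L_n/h^L$. Since you take $n\rightarrow\infty$ before sending $\alpha,\beta\downarrow 0$, your argument survives once the bound is corrected in this way, but as written the per-segment estimate would fail; you should restate the TV bound with the extra $n$-vanishing term rather than folding everything into $c(\alpha+\beta)L$.
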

\begin{proof}
    Please see Appendix \ref{appendix:arbitr_close_lossless}.
\end{proof}
\begin{lemma}
   $ \mathbb{P}(\{\underset{n\rightarrow\infty}{\lim} c(\gamma^p_n)=c^\star\})=1$
\label{lemma:cont_final_lossless}
\end{lemma}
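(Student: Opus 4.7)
The plan is to mimic the last step of Section~\ref{sec:convergence_to_optimal} but with $\gamma'_n$ replaced by the lossless refinement $\gamma''_n$ and with Theorem~\ref{theo:cont} invoked for finitely lossless paths. Specifically, I will use the triangle-style bound
\begin{equation*}
|c(\gamma^p_n)-c^\star|\;\leq\;|c(\gamma^p_n)-c(\gamma''_n)|+|c(\gamma''_n)-c^\star|,
\end{equation*}
and show that each term on the right tends to $0$ (almost surely for the first, deterministically for the second).

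For the second term, I would first argue $\lim_{n\to\infty}c(\gamma''_n)=c^\star$. The chain $\gamma'_n=(x_{n,k},(1-\delta^L_n)^2 P_{n,k})$ is a partition chain of the deflated path $\gamma^{\star\star}_n(t)=(x^\star(t),(1-\delta^L_n)^2 P^\star(t))$, so as in \eqref{eq:c_gamma'}--\eqref{eqn:cost_gamma_dash_equals_gamma} we have $c^\star\leq c(\gamma'_n)\leq c(\gamma^{\star\star}_n)$ and $\lim_n c(\gamma^{\star\star}_n)=c^\star$ since $\delta^L_n\to 0$. The chain $\gamma''_n$ is obtained from $\gamma'_n$ by the lossless refinement of Lemma~\ref{lemma:analytical}, which shrinks each $P'_{n,k}$ to the optimizer $P^\star_{k}$ of \eqref{eq:d_info_general}; this refinement only changes the information contribution of each transition and, by definition of $\mathcal{D}_{\text{info}}$, does not increase the per-edge cost above $c(\gamma'_n)$. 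Combined with the feasibility (collision-freeness) of $\gamma''_n$ established in Lemma~\ref{lemma:chain_col_free_lossless}, this gives $c^\star\leq c(\gamma''_n)\leq c(\gamma'_n)$ and hence $c(\gamma''_n)\to c^\star$.

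For the first term, by Lemma~\ref{lemma:arbritrarily_close_for_lossless} we have $\|\gamma^p_n-\gamma''_n\|_{TV}\to 0$ almost surely on the event $\bigcap_n E_n$ which, by Lemma~\ref{lemma:event_E_n_for_lossless}, occurs with probability one. To promote TV convergence to cost convergence I would invoke Theorem~\ref{theo:cont}, which requires both paths to lie in $\mathbb{R}^d\times\mathbb{S}^d_\rho$ and to be finitely lossless. The covariance constraint holds by construction of $\mathcal{R}^L_{n,k}$ and $\mathcal{R}^{L,s}_{n,k}$, whose covariances are bounded below by $\rho I$ through the factors $(\theta^1_{n,k})^2 P''_{n,k}$. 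Finite losslessness of $\gamma''_n$ is immediate from the refinement step, while finite losslessness of $\gamma^p_n$ holds because Algorithm~\ref{alg:lossless_ri_prm} only admits edges satisfying $\operatorname{Lossless}(b,b_j)$. Given these hypotheses, Theorem~\ref{theo:cont} yields, for any $\epsilon>0$, a $\delta>0$ such that $\|\gamma^p_n-\gamma''_n\|_{TV}\leq\delta$ implies $|c(\gamma^p_n)-c(\gamma''_n)|\leq\epsilon$, whence $|c(\gamma^p_n)-c(\gamma''_n)|\to 0$ almost surely.

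Combining the two limits on the event of probability one where Lemma~\ref{lemma:arbritrarily_close_for_lossless} holds gives $\mathbb{P}(\{\lim_n c(\gamma^p_n)=c^\star\})=1$, as claimed. The main obstacle I anticipate is verifying that Theorem~\ref{theo:cont} genuinely applies along the whole sequence, namely that the constants $\gamma$, $h^L$, and the various $\theta$-factors are chosen so that both $\gamma^p_n$ and $\gamma''_n$ live in $\mathbb{R}^d\times\mathbb{S}^d_\rho$ uniformly in $n$ and that $\gamma^p_n$'s finite losslessness is preserved in the limit; once that is in hand, the rest is a direct triangle-inequality argument combining Lemmas~\ref{lemma:event_E_n_for_lossless}, \ref{lemma:arbritrarily_close_for_lossless}, and Theorem~\ref{theo:cont}. $\qed$
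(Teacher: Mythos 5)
Your proposal is correct and takes essentially the same route as the paper: establish $\lim_{n\rightarrow\infty} c(\gamma''_n)=c^\star$ via the lossless refinement of $\gamma'_n$, then combine the almost-sure TV convergence of Lemma~\ref{lemma:arbritrarily_close_for_lossless} with the continuity of the cost (Theorem~\ref{theo:cont}) for finitely lossless paths in $\mathbb{R}^d\times\mathbb{S}^d_\rho$. The only cosmetic difference is that you sandwich $c^\star\leq c(\gamma''_n)\leq c(\gamma'_n)$, whereas the paper asserts $c(\gamma''_n)=c(\gamma'_n)$ outright; both yield the same limit.
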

\begin{proof}
    Please see Appendix \ref{appendix:cont_final_lossless}.
\end{proof}

\section{Numerical Experiments}
\label{sec:experiments}
We consider a scenario with no obstacles and verify that Algorithm \ref{alg:igpp_body1} converges to the analytically computed cost as the number of samples increases. 
\subsection{Obstacle free space}
In obstacle-free space, we compute the optimal solution to \eqref{eq:def_D} by using Theorem 1 in \cite{pedram2021gaussian}. We then use the IG-PRM* algorithm (Algorithm \ref{alg:igpp_body1}) to plot the cost returned as the number of samples $n$ increases. The obstacle-free space considered is of dimension $1\mathrm{m}\times1\mathrm{m}$ with initial belief state $b_{\text{init}}:=(x_{\text{init}},P_{\text{init}})$ and goal belief state $b_{\text{goal}}:=(x_{\text{goal}},P_{\text{goal}})$ where $x_{\text{init}},\;P_{\text{init}},\;x_{\text{goal}}$ and $P_{\text{goal}}$ are defined as 
\begin{align}
    &x_{\text{init}}=[0.2,\;0.5]^\mathrm{T},\quad x_{\text{goal}}=[0.8,\;0.5]^\mathrm{T}\nonumber\\
    &P_{\text{init}}=10^{-4}I,\quad P_{\text{goal}}=10^{-3}I\nonumber
\end{align}
We perform a total of 70 experiments where the IG-PRM* is run for iterations $i=2000+3000m$ for $m\in[0;6]$ and for each $m$, 10 experiments were performed. The average cost is calculated and the results are depicted as an error plot as shown in Fig. \ref{fig:conv_no_obs}. From Fig. \ref{fig:conv_no_obs}, it can be observed that the cost returned by IG-PRM* converges to the optimal cost $c^\star$ as the number of samples increases. Fig. \ref{fig:evolution_of_path} shows the evolution of the path as the number of samples increases. 


\section{Conclusion and Future Work}
 In this paper, we proposed a sampling based motion planning algorithm in Gaussian belief space termed IG-PRM* that minimizes the information geometric cost and proved that the algorithm converges to the optimal solution as the number of samples tends to infinity. We then proposed a variant of IG-PRM* termed Lossless IG-PRM* which does not require lossless modification and prove that the algorithm is asymptotically optimal. In the future, we plan to reduce the computational burden for the IG-PRM* and explore methods to increase the rate of converge of the cost computed by IG-PRM* 
\label{sec:conclusion}

%





\begin{acks}
This work was supported in part by the Lockheed Martin
Corporation under Grant MRA16-005-RPP009, in part by the Air Force Office
of Scientific Research under Grant FA9550-20-1-0101,
\end{acks}

\bibliographystyle{SageH}
\bibliography{main.bib}

\appendix
 \section{Proof of Theorem~\ref{theo:volume}}
 \label{app:volume}


We adopt the eigenvalue decomposition to parameterize $P \in \mathbb{S}^d_{+}$ as
\[P(\Lambda,U)=U^\top \Lambda U,\]
where $\Lambda=\textup{diagonal}(\lambda_1, \dots, \lambda_d)$ and $U$ is a $d\times d$ real orthonormal matrix (i.e., $UU^\top=U^\top U=I_d$). Using this parametrization, referred to as polar parametrization, we can express the volume element $dP$ as
\begin{align*}
dP = \det \theta \;   \prod_{i<j}^d |\lambda_i-\lambda_j| \prod_i^d d\lambda_i, 
\end{align*}
where $\theta_{ij}=\sum_{k=1}^d U_{jk}dU_{ik}$ and
$\det \theta$ is the exterior product $\det(\theta)= \wedge_{i<j} \theta_{i,j}$ \cite{terras2012harmonic, mathai1997jacobians}.
The integral of any function $f:\mathbb{S}^d_{+} \rightarrow \mathbb{R}$ over $\mathbb{S}^d_{+}$ can be computed using polar parametrization as: 
\begin{align}
\nonumber
    &\int_{\mathbb{S}^d_{+}} f(P) dP = (d! \; 2^d)^{-1}  \int_{O(d)} \int_{\mathbb{R}^d_{+}}\bigg(\\
    \label{eq:lambda}
    &  f(P(\Lambda, U))\det \theta \;   \prod_{i<j}^d |\lambda_i-\lambda_j|\bigg) \prod_{i=1}^d d\lambda_i. 
\end{align}
where $O(d)$ is space of $d\times d$ real orthonormal matrices. The factor $(d! \; 2^d)^{-1}$ in \eqref{eq:lambda} resolves the ambiguity of all possible orderings of $\lambda_1, \dots, \lambda_d$ and all possible orientation (multiplication by $+1$ or $-1$) of the columns of $U$.
 The volume of $\mathcal{D}_A$ can be computed by substituting the indicator function of $\mathcal{D}_A$ in place of $f(P(\Lambda,U))$ in \eqref{eq:lambda}. Thus,
\begin{align*}
&\textup{Vol}(\mathcal{D}_A) \nonumber\\
=&\int_{O(d)} \int_{\Lambda\preceq A}\bigg(\det \theta \;   \prod_{i<j}^d |\lambda_i-\lambda_j|\bigg) \prod_{i=1}^d d\lambda_i\nonumber\\
 &\geq\textup{Vol}(\mathcal{D}'_A)\nonumber\\
 =&(d! \; 2^d)^{-1} \left( \int_{O(d)} \det \theta \right) \\
&\left( \int_{0 \leq \lambda_i \leq \underset{k\in[1;d]}{\min}a_k} 
 \prod_{i<j}^d |\lambda_i-\lambda_j| \prod_i^d d\lambda_i\right)
\end{align*}
It is a well-known result (see e.g., \cite[page 71]{muirhead2009aspects} or \cite{chikuse2003statistics} ) that $\int_{O(d)} \det \theta= \frac{2^d \pi^{d^2/2}}{\Gamma_d(d/2)}$, where $\Gamma_d$ is the gamma function. To compute the integral
\begin{align}
\label{eq:integ}
    \int_{0 \leq \lambda_i \leq \underset{k\in[1;d]}{\min}a_k}    \prod_{i<j}^d |\lambda_i-\lambda_j| \; \prod_i^d d\lambda_i,
\end{align}
we use the transformation $y_i=\lambda_i/b$ where $b=\underset{k\in[1;d]}{\min}\;a_k$. Therefore, $\prod_{i=1}^ddy_i=\prod_{i=1}^d\lambda_i/b^d$. Consequently, the integral \eqref{eq:integ} becomes
\begin{align}
    &\int_{0 \leq y_i \leq 1}    \prod_{i<j}^d |y_i-y_j| \; \prod_i^d dy_i\nonumber\\
    &=b^{\frac{d(d+1)}{2}}\int_{0 \leq y_i \leq 1}    \prod_{i<j}^d |y_i-y_j| \; \prod_i^d dy_i\nonumber\\
    &=b^{\frac{d(d+1)}{2}}S_d(1,1,1/2)
\end{align}
where $S_d(\alpha_1,\alpha_2,\alpha_3)$ is the Selberg integral \cite{grafakos1999selberg} given by
\begin{align}
     &S_d(\alpha_1, \alpha_2, \alpha_3)\nonumber\\
     &=\int_0^1 .. \int_0^1 \prod_{i=1}^d t_i^{\alpha_1-1}\left(1-t_i\right)^{\alpha_2-1} \prod_{1 \leq i<j \leq d}\left|t_i-t_j\right|^{2 \alpha_3} d t \nonumber \\ 
     & =\prod_{j=0}^{d-1} \frac{\Gamma(\alpha_1+j \alpha_3) \Gamma(\alpha_2+j \gamma) \Gamma(1+(j+1) \gamma)}{\Gamma(\alpha_1+\alpha_2+(d+j-1) \alpha_3) \Gamma(1+\alpha_3)}\nonumber
\end{align}
which results to \eqref{eq:vol_DA}. Now, for $\beta>1$, we have
\begin{align}
     &\frac{\textup{Vol}(\mathcal{D}_{\beta A})}{\textup{Vol}(\mathcal{D}_A)}\nonumber\\
     &=\frac{\int_{O(d)} \int_{\Lambda\preceq \beta A}\bigg(\det \theta \;   \prod_{i<j}^d |\lambda_i-\lambda_j|\bigg) \prod_{i=1}^d d\lambda_i}{\int_{O(d)} \int_{\Lambda\preceq A}\bigg(\det \theta \;   \prod_{i<j}^d |\lambda_i-\lambda_j|\bigg) \prod_{i=1}^d d\lambda_i}
\end{align}
Substituting $z_i= \lambda_i/\beta$, we have
\begin{align}
    &\frac{\textup{Vol}(\mathcal{D}_{\beta A})}{\textup{Vol}(\mathcal{D}_A)}\nonumber\\
     &=\frac{\beta^{\frac{d(d+1)}{2}}\int_{O(d)} \int_{\Lambda\preceq A}\bigg(\det \theta \;   \prod_{i<j}^d |z_i-z_j|\bigg) \prod_{i=1}^d dz_i}{\int_{O(d)} \int_{\Lambda\preceq A}\bigg(\det \theta \;   \prod_{i<j}^d |\lambda_i-\lambda_j|\bigg) \prod_{i=1}^d d\lambda_i}\nonumber\\
     &=\beta^{\frac{d(d+1)}{2}}
\end{align}

Furthermore, we have the following
\begin{align}
    \frac{\textup{Vol}(\mathcal{D}'_{\beta A})}{\textup{Vol}(\mathcal{D}'_A)}=\frac{ \int_{0 \leq \lambda_i \leq \beta b}    \prod_{i<j}^d |\lambda_i-\lambda_j| \; \prod_i^d d\lambda_i}{ \int_{0 \leq \lambda_i \leq b}    \prod_{i<j}^d |\lambda_i-\lambda_j| \; \prod_i^d d\lambda_i}
    \label{eqn:div}
\end{align}
Consider the change of variable as $b_i=\lambda_i/\beta$. Then, \eqref{eqn:div} becomes
\begin{align}
        \frac{\textup{Vol}(\mathcal{D}'_{\beta A})}{\textup{Vol}(\mathcal{D}'_A)}&=\frac{\beta^{\frac{d(d+1)}{2}} \int_{0 \leq b_i \leq  a_i}    \prod_{i<j}^d |b_i-b_j| \; \prod_i^d db_i}{ \int_{0 \leq \lambda_i \leq a_i}    \prod_{i<j}^d |\lambda_i-\lambda_j| \; \prod_i^d d\lambda_i}\nonumber\\
        &=\beta^{\frac{d(d+1)}{2}}=\frac{\textup{Vol}(\mathcal{D}_{\beta A})}{\textup{Vol}(\mathcal{D}_A)}
        \label{eqn:div_beta}
\end{align}
Subtracting 1 from both sides of \eqref{eqn:div_beta}, we get
\begin{align}
    &\frac{\textup{Vol}(\mathcal{D}_{\beta A})-\textup{Vol}(\mathcal{D}_{ A})}{\textup{Vol}(\mathcal{D}_{ A})}=\frac{\textup{Vol}(\mathcal{D}'_{\beta A})-\textup{Vol}(\mathcal{D}'_{ A})}{\textup{Vol}(\mathcal{D}'_{ A})}\nonumber\\
    &\geq \textup{Vol}(\mathcal{D}'_{\beta A})-\textup{Vol}(\mathcal{D}'_{ A})\nonumber\\
    &=(\beta^{\frac{d(d+1)}{2}}-1)\textup{Vol}(\mathcal{D}'_{ A})\nonumber\\
    &\geq (\beta^{\frac{d(d+1)}{2}}-1) b^{\frac{d(d+1)}{2}}S_d(1,1,1/2)
\end{align}
and hence the result follows.

\section{Proof of Lemma~\ref{lemma:diag} }
\label{app:diag}
$\textup{Vol}(\mathcal{D}_P)$ can be computed by substituting the indicator function of $\mathcal{D}_P$ in place of $f(P(\Lambda,U))$ in \eqref{eq:lambda}. If we introduce a transformation of variables from $U$ to $U'=U V^\top$, the Jacobian of the transformation is  $1$, and the integral becomes  the $\textup{vol}(\mathcal{D}_\Sigma)$ in the $(\theta', P)$ coordinate system. $\qed$ 

\section{Proof of Lemma \ref{lemma:chain_col_free_lossless}\label{appendix:chain_col_free_lossless}}
To show that the transition from $(x_{n,k}, P'_{n,k})$ to $(x_{n,k+1}, P'_{n,k+1})$ is collision-free, it is sufficient to show that
\begin{equation}
\label{eq:claim2_1}
    \mathcal{E}_{\chi^2}(x_{n,k+1}, P'_{n,k}+\ell^L_n W) \subset\mathcal{E}_{\chi^2}(x_{n,k}, P_{n,k}).
\end{equation}
In what follows, we derive a sufficient condition for $\ell^L_n$ to satisfy \eqref{eq:claim2_1}. Suppose that $\ell^L_n$ is small enough so that
\begin{equation}
\label{eq:claim2_2}
   P'_{n,k}+\ell^L_n W \preceq P'_{n,k}+3\ell^L_n W \preceq \left(1-\frac{\delta^L_n}{2}\right)^2 P_{n,k}
\end{equation}
holds. \eqref{eq:claim2_2} is equivalent to
\begin{align*}
    (1-\delta^L_n)^2 P_{n,k}+\ell^L_n W &\preceq(1-\delta^L_n)^2 P_{n,k}+3\ell^L_n W\\
    &\preceq \left(1-\frac{\delta^L_n}{2}\right)^2 P_{n,k}
\end{align*}
or
\begin{equation}
    \label{eq:claim2_3}
   3 \ell^L_n W \preceq \delta^L_n \left(1-\frac{3}{4}\delta^L_n\right) P_{n,k}.
\end{equation}
which trivially holds as $    \ell^L_n \leq  \delta^L_n \left(1-\frac{3}{4}\delta^L_n\right) \frac{\rho}{3 \bar{\sigma}(W)}$.
In order for \eqref{eq:claim2_2} to imply \eqref{eq:claim2_1}, we further require the distance $\|x_{n,k+1}-x_{n,k}\|$ between the centers of two ellipses in \eqref{eq:claim2_1} is less than or equal to the difference between semi-minor axis lengths of ellipses characterized by $P_{n,k}$ and $(1-\frac{\delta^L_n}{2})^2 P_{n,k}$ computed as $\frac{\delta^L_n}{2}\sqrt{\chi^2\underline{\sigma}(P_{n,k})}$.
Notice that $
\|x_{n,k+1}-x_{n,k}\|\leq \ell^L_n \leq 3\ell^L_n$.
On the other hand, the difference between the semi-minor axis lengths is greater than $\frac{\delta^L_n}{2}\sqrt{\chi^2\rho}$. Therefore, if
\begin{equation}
    \label{eq:claim2_5}
    \ell^L_n \leq  \frac{\delta^L_n}{6}\sqrt{\chi^2\rho}
\end{equation}
then \eqref{eq:claim2_2} implies \eqref{eq:claim2_1}. To summarize, since $\ell^L_n$ simultaneously satisfies \eqref{eq:claim2_2} and \eqref{eq:claim2_5}, \eqref{eq:claim2_1} holds. $\qed$

\section{Proof of Lemma \ref{lemma:collision_free_lossless}\label{appendix:collision_free_lossless}}
\begin{figure}[ht!]
\centering
\includegraphics[width=1\columnwidth]{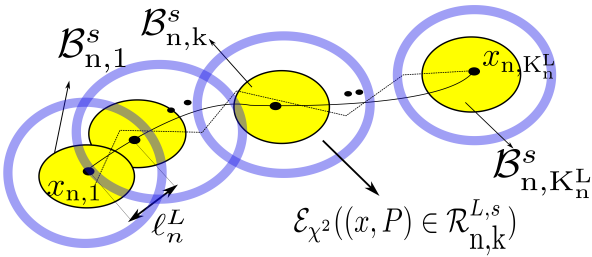}
\caption{Covering of collision-free and lossless chain $\gamma''_n$ with balls $\mathcal{B}^s_{n,k}$ of radius $\frac{(1-(\theta^1_{n,k})^2)}{2}\ell^L_n$. The event $E_{n,k}$ is the event that a sampled point $(x,P)\in\mathcal{R}^{L,s}_{n,k}$.}
\label{fig:covering_balls_lossless}
\end{figure}
Lets define $\mathcal{E}^{\text{out}}_{n,k}:=\mathcal{E}_{\chi^2}(x_{n,k},{(\theta^0_{n,k})^2}P''_{n,k})$, it is easy to verify that the minimum distance $\mathcal{E}^{\text{out}}_{n,k}$ and $\mathcal{E}_{\chi^2}(x_{n,k},P''_{n,k})$ is $(\ell^L_n)^4$. On the other hand, any translation of the center of ellipsoid  $\mathcal{E}^{\text{out}}_{n,k}$ from $x_{n,k}$ to some  $x\in\mathcal{B}(x_{n,k},(\ell^L_n)^4)$, will linearly translate the ellipsoid $\mathcal{E}_{\chi^2}\left(x_{n,k},P\right)$ by a maximum of distance $(\ell^L_n)^4$.
Thus, after the translation the ellipse stays inside $\mathcal{E}(x_{n,k},P''_{n,k}) \subset \mathcal{E}(x_{n,k}, P_{n,k}) $ and \eqref{eq:init_in} holds. 
As the first step to prove \eqref{eq:final_in_lossless}, we stress that  
\begin{align*}
 &||x_2-x_1|| \leq \\
 &||x_{n,k}-x_1 ||+ ||x_{n,k+1}- x_{n,k} ||+||x_2-x_{n,k+1}||\\
 &\leq (\ell^L_n)^4 + (\ell^L_n)^4 + (\ell^L_n)^4 \leq \ell^L_n + \ell^L_n + \ell^L_n = 3\ell^L_n.  
\end{align*}
Thus, we have
\begin{align} \nonumber
     P_1+||x_2-x_1||W &\preceq  P''_{n,k}+||x_2-x_1||W \\ \nonumber
     &\preceq P'_{n,k}+||x_2-x_1||W \preceq 
     P'_{n,k}+ 3 \ell^L_n W\\ \label{eq:final_ell}
     &\preceq \left(1-\frac{\delta^L_n}{2}\right)^2 P_{n,k},
\end{align}
where the last inequality is shown earlier as \eqref{eq:claim2_2}. From 
\eqref{eq:final_ell}, we have
\begin{align*}
&\mathcal{E}(x_{n,k}, P_1+||x_2-x_1||W)
\subseteq \mathcal{E}\left(x_{n,k},  \left(1-\frac{\delta^L_n}{2}\right)^2 P_{n,k}\right). 
\end{align*}
The minimum distance between $\mathcal{E}(x_{n,k}, P_{n,k})$ and $\mathcal{E}\left(x_{n,k},  \left(1-\frac{\delta^L_n}{2}\right)^2 P_{n,k}\right)$ is  
$\frac{\delta^L_n}{2}\sqrt{\chi^2\underline{\sigma}(P_{n,k})}$, Thus after linear translating $\mathcal{E}\left(x_{n,k},  \left(1-\frac{\delta^L_n}{2}\right)^2 P_{n,k}\right)$  for  
$||x_2-x_{n,k}||\leq \|x_2-x_{n,k+1}\|+\|x_{n,k}-x_{n,k+1}\| \leq 2\ell^L_n \leq \frac{\delta^L_n}{2} \sqrt{\chi^2\rho} \leq \frac{\delta^L_n}{2}\sqrt{\chi^2 \underline{\sigma}(P_{n,k})}$, the resultant ellipse $\mathcal{E}_{\chi^2}(x_2, 
\left(1-\frac{\delta^L_n}{2}\right)^2 P_{n,k})$, and subsequently $\mathcal{E}_{\chi^2}\left(x_2, P_1+||x_2-x_1||W\right)$ stays inside 
$\mathcal{E}_{\chi^2} \left(x_{n,k},P_{n,k}\right)$.$\qed$
\section{Proof of Lemma \ref{lemma:transition_is_lossless}\label{appendix:transition_is_lossless}}
We know that the chain $(x_{n,k},P''_{n,k})$ for all $k\in[1;K^L_n]$ is lossless and collision-free. Therefore,
\begin{align}
    P''_{n,k+1}\preceq P''_{n,k}+\ell^L_nW
    \label{eqn:lossless}
\end{align}
for all $k\in[1;K^L_{n}-1]$. Further, since sampled point $(x_2,P_2)\in\mathcal{R}^{L,s}_{n,k+1}$, the following is true
\begin{align} \nonumber
    &(\theta^1_{n,k+1})^2P''_{n,k+1}\preceq P_2\preceq {(\theta^2_{n,k+1}})^2P''_{n,k+1}\nonumber\\
    &\implies  \frac{P_2}{(\theta^2_{n,k+1})^2}\preceq P''_{n,k+1}\preceq  \frac{P_2}{(\theta^1_{n,k+1})^2}
    \label{eqn:ineq_mix}
\end{align}
Using \eqref{eqn:lossless} and \eqref{eqn:ineq_mix}, $\frac{P_2}{(\theta^2_{n,k+1})^2}\preceq P''_{n,k}+\ell^L_nW$. 
Further, using the fact that $\theta^1_{n,k}=\theta^2_{n,k+1}$ for $k\in[1;K^L_n-1]$ implies the following
\begin{align} \nonumber
      {P_2}&\preceq {(\theta^2_{n,k+1})^2}P''_{n,k}+{(\theta^2_{n,k+1})^2}\ell^L_nW\\
      &={(\theta^1_{n,k})^2}P''_{n,k}+{(\theta^1_{n,k})^2}\ell^L_nW \nonumber\\
      &\preceq P_1+{(\theta^1_{n,k})^2}\ell^L_nW\preceq P_1+\frac{((\theta^1_{n,k+1})^2+(\theta^1_{n,k})^2)}{2}\ell^L_nW\nonumber
\end{align}
Using the fact that $\|x_{n,k+1}-x_{n,k}\|=\ell^L_n$, $(x_1,P_1)\in\mathcal{R}^{L,s}_{n,k}$ and $(x_2,P_2)\in\mathcal{R}^{L,s}_{n,k+1}$, the minimum distance between $x_2$ and $x_1$ is given as
\begin{align}
    \|x_2-x_1\|&\geq\ell^L_n-\frac{(1-(\theta^1_{n,k})^2)}{2}(\ell^L_n)^4\nonumber\\
    &-\frac{(1-(\theta^1_{n,k+1})^2)}{2}(\ell^L_n)^4\nonumber\\
    &\geq\frac{(\theta^1_{n,k+1})^2+(\theta^1_{n,k})^2}{2}\ell^L_n.\nonumber
\end{align}
Consequently, $P_2\preceq P_1+\|x_2-x_1\|W$.
Therefore, $P_2\preceq P_1+\|x_2-x_1\|W $. Further, since $\mathcal{R}^{L,s}_{n,k+1}\subset\mathcal{R}^L_{n,k+1}$ by construction, the edge between two sampled points $(x_1,P_1)\in\mathcal{R}^{L,s}_{n,k}$ and $(x_2,P_2)\in\mathcal{R}^{L,s}_{n,k+1}$ for all $k\in[1;K^L_n-1]$ is finitely lossless and collision-free.$\qed$
\section{Proof of Lemma \ref{lemma:distance_smaller_than_ed_min_lossless}\label{appendix:distance_smaller_lossless}}
{ We first note that the maximum distance between $x_2$ and $x_1$ is
\begin{align*}
    \|x_2-x_1\| &\leq \ell^L_n+\frac{(1-(\theta^1_{n,k})^2)}{2}(\ell^L_n)^4+\nonumber\\
    &\frac{(1-(\theta^1_{n,k+1})^2)}{2}(\ell^L_n)^4\nonumber\\
    &\leq\left(2-\frac{(\theta^1_{n,k+1})^2+(\theta^1_{n,k})^2}{2}\right)\ell^L_n\nonumber \leq 2 \ell^L_n.
\end{align*}
}
Therefore, we have $\hat{D}(b_1,b_2)=\|x_2-x_1\|\leq 2\ell^L_n:=D_\text{min}$.
\section{Proof of Lemma \ref{lemma:event_E_n_for_lossless}\label{appedix:event_en_lossless}}
Then, due to the uniform distribution, $    \mathbb{P}(\{x_i\in\mathcal{B}(x_{n,k},(\ell^L_n)^4)\})$ is given by
\begin{align} \nonumber
    \mathbb{P}(\{x_i\in\mathcal{B}(x_{n,k},(\ell^L_n)^4)\})&=\frac{\text{vol}(\mathcal{B}(x_{n,k},(\ell^L_n)^4))}{\text{vol}(\mathcal{X}_{\text{free}})}\nonumber\\
    &=\frac{\tau_d(\ell^L)^{4d}_n}{\mathcal{V}_{\mathcal{X}}}\geq\frac{\tau_d(\delta^L_n)^{4d}(h^L)^{4d}}{\mathcal{V}_{\mathcal{X}}},\nonumber
\end{align}
where $\mathcal{V}_{\mathcal{X} }:=\text{vol}(\mathcal{X}_{\text{free}})$. It
is straightforward to verify that
\begin{align} \nonumber
    \mathbb{P}(\{x_i\in\mathcal{B}^s(x_{n,k},\ell^L_n)\})\!&=\!c_{n,k} \mathbb{P}(\{x_i\in\mathcal{B}(x_{n,k},(\ell^L_n)^4)\}) \\ \nonumber
    &\geq\frac{ \tau_d(\delta^L_n)^{6d}(h^L)^{6d}}{\mathcal{V}_{\mathcal{X}}\chi^\frac{d}{2}\rho^{\frac{d}{4}} 2^d}=g^L_1 (\delta^L_n)^{6d}
\end{align}
where $g^L_1:=\frac{ \tau_d (h^L)^{6d}}{\mathcal{V}_{\mathcal{X}}\chi^\frac{d}{2}\rho^{\frac{d}{4}} 2^d}$.
We have  $\textup{Tr}(P) \geq 16 (\theta^1_{n,k})^2  \rho d \geq \rho d$ for all $P \in \mathcal{D}_{n, k}$.
On the other hand, $\textup{Tr}(P)\leq (\theta^2_{n,k})^2 \textup{Tr}(P''_{n,k}) \leq  (\theta^2_{n,k})^2 R \leq R $. Hence, $\mathcal{D}_{n,k} \subset \mathcal{R}_{ [(\theta^1_{n,k})^2  16\rho d, (\theta^2_{n,k})^2 R ]} \subset \mathcal{R}_{ [ \rho d, R ]}$.

From the definition of uniform sampling, we have 
\begin{align*}
     &\mathbb{P}(P_i\in\mathcal{D}_{n,k})
     {\geq}\frac{\text{vol}(\mathcal{D}_{n,k}\cap \mathcal{D}_{[\rho d, R]})}{\text{vol}(\mathcal{D}_{[\rho d ,R]})} = \frac{\text{vol}(\mathcal{D}_{n,k})}{\text{vol}(\mathcal{D}_{[\rho d,R]})} \\
    &=\frac{\text{vol}(\mathcal{D}_{(\theta^2_{n,k})^2P''_{n,k}}) - \text{vol}(\mathcal{D}_{(\theta^1_{n,k})^2P''_{n,k}})}{\text{vol}(\mathcal{R}_{R}) - \text{vol}(\mathcal{R}_{\rho d})}\\
    &\geq\frac{V_dS_d(1,1,1/2)b^{\frac{d(d+1)}{2}}\bigg[ (\theta^2_{n,k})^{d(d+1)}- (\theta^1_{n,k})^{d(d+1)} \bigg]}{\left(\frac{2}{d(d+1)}\right)V_r \bigg[R^{\frac{d(d+1)}{2}}- (\rho d)^{\frac{d(d+1)}{2}} \bigg]}.
\end{align*}
where $b=\underset{i\in[1;d]}{\min}\;\lambda_i(P''_{n,k})$. Now, we have
\begin{align*}
    &(\theta^2_{n,k})^{d(d+1)}-(\theta^1_{n,k})^{d(d+1)} \nonumber\\
    &= (\theta^2_{n,k} - \theta^1_{n,k}) \sum_{j=0}^{d(d+1)-1} (\theta^2_{n,k})^{d-1-j}  (\theta^1_{n,k})^{j}\\
        &\geq d\frac{(\theta^2_{n}-\theta^1_{n})}{K^L_n} (\theta^1_{n})^{d(d+1)-1}\nonumber\\
        &\geq d\frac{(\theta^2_{n}-\theta^1_{n})}{2K^L_n}(\theta^2_{n}+\theta^1_{n}) (\theta^1_{n})^{d(d+1)-1}\\
    &\geq d\frac{\ell^L_n}{2\ell^\star}(\delta^L_n)^2 \left(1-\frac{(\delta_n^L)^2}{9}\right)^{d(d+1)-1}\nonumber\\
    &\geq\frac{h^Ld}{\ell^\star}\left(\frac{\delta^L_n}{3}\right)^{2d(d+1)}.\nonumber
\end{align*}
where in the last step we use $\delta^L_n\leq 1$. 
In sum, it can be deduced
\begin{align}
    \mathbb{P}(P_i\in\mathcal{D}_{n,k}) \geq g^L_2 (\delta^L_n)^{2d(d+1)},
\end{align}
where $g^L_2 = \frac{V_dV_r^{-1}S_d(1,1,1/2)2^{-1}d^2(d+1) h^L(16\rho)^{\frac{d(d+1)}{2}}}{\ell^\star 3^{2d(d+1)}} \bigg[R^{\frac{d(d+1)}{2}}- (\rho d)^{\frac{d(d+1)}{2}} \bigg]^{-1} $ is a constant. 

Lets define the event $E^i_{n,k}$ is as the event that the sampled belief $b_i=(x_i,P_i)$ belongs to $\mathcal{R}^s_{n,k}$. Then, $E_{n,k}= \cup_{i=1}^n E^i_{n,k}$. The following lemma shows that if the $\gamma$ is greater than a certain positive threshold, then the probability that event $E_n\triangleq E_{n,1}\cap E_{n,2}\dots \cap E_{n,K^L_n}$ occurs equals to one as $n$ approaches infinity.
The probability of event $E^c_{n,k}$ for all $k\in[1;K^L_n]$ is given as follows:
\begin{align}
   &\mathbb{P}\left(E^c_{n,k}\right)
   = \prod_{i=1}^{n}(1-\mathbb{P}\left(E^i_{n,k}\right))\nonumber\\
   &\leq \left(1- \mathbb{P}(P\in\mathcal{D}_{n,k}) \;     \mathbb{P}(x\in\mathcal{B}_{n,k})\right)^n\leq \nonumber\\ \nonumber
   &\leq\left[1- g^L_1 (\delta^L_n)^{6d} g^L_2 (\delta^L_n)^{2d(d+1)} \right]^n \nonumber\\
   &= \left[1- g^L_1 g^L_2 (\delta^L_n)^{d(2d+8)} \right]^n.\nonumber
    \end{align}
Using the fact that $(1-x)\leq e^{-x}$ for $x\in(0,1)$, and substituting $\delta^L_n$, for sufficiently large $n$, we have
\begin{align} \nonumber
 \mathbb{P}\left(E^c_{n,k}\right)&\leq \left(1-g^L_1 g^L_2
 {\gamma^{d(2d+8)}\frac{\text{log}n}{n}} \right)^{n} \leq
  n^{- g^L_1 g^L_2 \gamma^{d(2d+8)}}.\nonumber
\end{align}
Now, the event $E_n^c=\bigcup_{k=1}^n E^c_{n,k}$ is upper bounded as follows:
\begin{align}
  \mathbb{P}\left(E^c_{n}\right)& \!=\!\mathbb{P}\left(\bigcup_{k=1}^{K^L_n} E^c_{n,k}\right) 
 \! \leq \! \sum_{k=1}^{K^L_n}\mathbb{P}\left( E^c_{n,k}\right)\!\leq \! K^L_n n^{- g^L_1 g^L_2 \gamma^{d(2d+8)}}.  \nonumber
\end{align}
where $K^L_n=\lfloor\frac{\ell^\star}{\ell^L_n}\rfloor\leq\frac{\ell^\star}{\ell^L_n}$.
Since $\delta^L_n h^L\leq\ell^L_n$, we have
\begin{align} \nonumber
    &\mathbb{P}\left(E^c_{n}\right)
    \leq \frac{\ell^\star}{\delta^L_n h^L}  n^{- g^L_1 g^L_2 \gamma^{d(2d+8)}}  \\ \label{eq:P_c_lossless} 
     &= \frac{ \ell^\star}{\delta^L_n h^L} (\log n)^{-\frac{1}{d(2d+8)}} n^{- g^L_1 g^L_2 \gamma^{d(2d+8)}+ \frac{1}{d(2d+8)}}. 
\end{align}
If the power of $n$ in \eqref{eq:P_c_lossless} is less than $-1$, which is equivalent to
\[
\gamma>\left(\frac{d(2d+8)+1}{g^L_1 g^L_2 d(2d+8)} \right)^\frac{1}{d(2d+8)},
\]
we have $\sum_{n=1}^\infty  \mathbb{P}\left(E^c_{n}\right)<\infty$. Consequently, 
$\underset{n\rightarrow\infty}{\lim}\mathbb{P}\left(E^c_{n}\right)=1$,
by Borel Cantelli lemma \cite{grimmett2020probability} which completes the proof. $\qed$

\section{Proof of Lemma \ref{lemma:arbritrarily_close_for_lossless}\label{appendix:arbitr_close_lossless}}
Define $I_{n,k}$ as follows:
\begin{align}
    I_{n, k}:= \begin{cases}1, & \text { if }\mathcal{R}^{L,\beta s}_{n,k} \cap V^{\mathrm{Lossless \;IG-PRM}^{*}}=\emptyset, \\ 0, & \text { otherwise. }\end{cases}\nonumber
\end{align}

Define $M_n=\sum_{k=1}^{K^L_n} I_{n,k}$, and assume the event $\{M_n\leq \alpha K^L_n\}$ has occurred which means that $\alpha$ fraction of the $K^L_n$ points be such that the sampled points do not belong to any $\mathcal{R}^{L,\beta s}_{n,k}$. 
If the sampled $(x,P)$ is not inside $\mathcal{R}^{L,\beta s}_{n,k}$ but it is inside $\mathcal{R}^{L,s}_{n,k}$, we have
\begin{align*}
     \|x-x_{n,k}\|\leq &  \frac{(1-(\theta^1_{n,k})^2)}{2}(\ell^L_n)^4
     \leq  \frac{1}{2} \ell^L_n,\\
    \|P-P''_{n,k}\|_F &\leq  \|(\theta^1_{n,k})^2 P''_{n,k}-P''_{n,k}\|_F \\
    &\leq \|(\theta^1_n)^2 P''_{n,k}-P''_{n,k}\|_F\leq \frac{16 \bar{\rho} \sqrt{d}}{\rho}  \ell^L_n,
\end{align*}
which yields $\hat{\mathcal{D}}((x,P), (x_{n,k},P''_{n,k})) \leq  c \ell^L_n$, where $c:= \frac{1}{2}+ \frac{16 \bar{\rho} \sqrt{d}}{\rho}$.
Similarly, if the sampled $(x,P)$ is inside $\mathcal{R}^{L,\beta s}_{n,k}$ we have 
\begin{align*}
    & \|x-x_{n,k}\|\leq  \beta\frac{(1-(\theta^1_{n,k})^2)}{2}(\ell^L_n)^4
     \leq  \frac{1}{2} \beta\ell^L_n,\nonumber\\
    &\|P-P''_{n,k}\|_F \nonumber\\
    &\leq  \|(\beta\theta^1_{n,k}+(1-\beta)\theta^2_{n,k})^2 P''_{n,k}-(\theta^2_{n,k})^2P''_{n,k}\|_F\\
    &+\|(\theta^2_{n,k})^2P''_{n,k}-P''_{n,k}\|\\
    &\leq 2\|(\beta\theta^1_{n,k}+(1-\beta)\theta^2_{n,k}) P''_{n,k}-\theta^2_{n,k}P''_{n,k}\|_F\\
    &+\|(\theta^2_{n,k})^2P''_{n,k}-P''_{n,k}\|\\
    & \leq 2\beta(\theta^2_{n,k}- \theta^1_{n,k}) \|P''_{n,k}\|_F +\|(\theta^2_{n,k})^2P''_{n,k}-P''_{n,k}\|\\
    &\leq 2\beta\Delta \bar{\rho} \sqrt{d}+(\delta^L_n)^2\bar{\rho} \sqrt{d}
\end{align*}
which yields $\hat{\mathcal{D}}((x,P), (x_{n,k},P''_{n,k})) \leq 2\beta\Delta \bar{\rho} \sqrt{d}+(\delta^L_n)^2\bar{\rho} \sqrt{d}$. If we define $c_1:=\beta \bar{\rho} \sqrt{d}$ and $c_2=\bar{\rho} \sqrt{d}$ (which is bounded), we have
\begin{align}
\left\|\gamma^p_{n}-\gamma''_{n}\right\|_{TV}& \leq  \sum_{k=1}^{K^L_n}  \hat{\mathcal{D}}((x,P), (x_{n,k},P''_{n,k})\nonumber\\
&\leq K^L_n (\alpha c \ell^L_n + (1-\alpha) \beta c_1 \Delta+c_2(\delta^L_n)^2) \nonumber\\
&\leq c\alpha L+c_1\beta (\theta^2_n-\theta^1_n)+c_2\delta^L_nL/h^L,
\end{align}
where $L=\underset{n}{\sup}\;\gamma''_n$. The fact that the bounded variation $\|\gamma^p_n-\gamma''_n\|_{TV}$ is upper bounded by $c\alpha L+c_1\beta (\theta^2_n-\theta^1_n)+c_2\delta^L_nL/h^L$ implies that
\begin{align}
&\left\{M_{n} \leq \alpha K_{n}\right\} \subseteq\nonumber\\
&\left\{\left\|\gamma^p_{n}-\gamma''_{n}\right\|_{TV} \leq c\alpha L+c_1\beta (\theta^2_n-\theta^1_n)+c_2\delta^L_nL/h^L\right\}\label{eqn:MclessthatalphaKn}
\end{align}
Taking the complement of both sides of \eqref{eqn:MclessthatalphaKn} and using the monotonicity of probability measures,
\begin{align}
&\mathbb{P}\left(\left\{\left\|\gamma^p_{n}-\gamma''_{n}\right\|_{TV}>c\alpha L+c_1\beta (\theta^2_n-\theta^1_n)+c_2\delta^L_nL/h^L\right\}\right)\nonumber\\
&\leq \mathbb{P}\left(\left\{M_{n} \geq \alpha K_{n}\right\}\right).
\label{eq:complement}
\end{align}
Since \eqref{eqn:complement} is true for any $\alpha,\beta\in(0,1)$, it remains to show that $\mathbb{P}\left(\left\{M_{n} \geq \alpha K_{n}\right\}\right) $ is finite. {
Let's denote $\ell_{\beta n,k}:=\frac{\beta(1-(\theta_{n,k}^1)^2)}{2}(\ell^L_n)^4)$, $\theta^1_{\beta n,k}:= (\beta\theta^1_{n,k}+(1-\beta)\theta^2_{n,k}) $ and $\mathcal{D}^s_{n,k}=\mathcal{D}_{[(\theta^1_{\beta n,k})^2 P''_{n,k},(\theta^2_{ n,k})^2 P''_{n,k}]}$ for simplicity of notation. Then, expected value of $I_{n,m}$ can be computed as 
\begin{align}
    &\mathbb{E}[I_{n,m}]\nonumber\\
    &= \mathbb{P}(\{I_{n,m}=1\}) \nonumber\\
    &=\left( 1- \mathbb{P}(x\in\mathcal{B}(x,\ell_{\beta n,k})) \times  \mathbb{P}(P\in\mathcal{D}^s_{n,k}) \right)^n\nonumber\\
&\leq     \left(1-g^L_1 g^L_2 \beta^{d+1} (\delta^L_n)^{d(2d+8)}\right)^{n}\nonumber\\
    &\leq    \text{exp} \left(-g^L_1 g^L_2 \beta^{d+1} \gamma^{d(2d+8)} \frac{\log n}{n}  n\right)\nonumber\\
    &= n^{\frac{-\beta^{d+1}g_1^Lg_2^L(d(2d+8)+1)}{d(2d+8)}}. \nonumber
\end{align}

Thus, $\mathbb{E}[M_n]=\sum_{m=1}^{K^L_n} \mathbb{E}[I_{n,m}]= K^L_n n^{-\beta ^{d+1}}$. By Markov’s inequality, it follows that
\begin{align}
    \mathbb{P}\left(\left\{M_{n} \geq \alpha K_{n}\right\}\right) \leq \frac{\mathbb{E}[M_n]}{\alpha K^L_n} \leq \frac{K^L_n n^{-\beta ^{d+1}}}{\alpha K^L_n} = \frac{ n^{-\beta ^{d+1}}}{\alpha}.
    \label{eqn:P_Mn}
\end{align}
it is easy to verify that for fixed $\alpha$, \eqref{eqn:P_Mn} tends to $0$ as $n$ tends to $\infty$.
}
Since this argument holds for all $\alpha, \beta$, using the fact that \eqref{eqn:complement} holds and $\underset{n\rightarrow\infty}{\lim}\;\delta^L_n=0$, it can be concluded that for all $\epsilon>0$, $
\mathbb{P}\left(\left\{\left\|\gamma^p_{n}-\gamma''_{n}\right\|_{TV}>\epsilon\right\}\right)<\infty
$.
Finally, by the Borel-Cantelli lemma, $\mathbb{P}\left(\left\{\underset{n\rightarrow\infty}{\lim}\;\left\|\gamma^p_{n}-\gamma''_{n}\right\|_{TV}=0\right\}\right)=1$.
\section{Proof of Lemma \ref{lemma:cont_final_lossless}\label{appendix:cont_final_lossless}}

Since $\gamma''_n$ is a lossless refinement of $\gamma'_n$, we have $c(\gamma''_n)=c(\gamma'_n)$ for each $n\in\mathbb{N}$.
Therefore,
\begin{align}
    \lim_{n\rightarrow\infty} c(\gamma''_n)=c^\star
    \label{eqn:gamma_dd_c_star}
\end{align}
Using \eqref{eqn:gamma_dd_c_star}, the fact that $\mathbb{P}\left(\left\{\underset{n\rightarrow\infty}{\lim}\;\left\|\gamma^p_{n}-\gamma''_{n}\right\|_{TV}=0\right\}\right)=1$ and continuity of the chain, we can conclude that
\begin{align}
       \mathbb{P}(\{\lim_{n\rightarrow\infty} c(\gamma^p_n)=c^\star\})=1 
\end{align}

\end{document}